\documentclass{article}
\usepackage[final,main]{neurips_2026}
\setcitestyle{numbers,square,sort&compress}

\makeatletter
\renewcommand{\@toptitlebar}{%
  \hrule height 3pt
  \vskip 1pt
  \vskip 0.25in
  \vskip -\parskip
}
\makeatother

\usepackage[utf8]{inputenc}
\usepackage[T1]{fontenc}
\usepackage{url}
\usepackage{booktabs}
\usepackage{hyperref}
\usepackage{amsfonts}
\usepackage{amsmath}
\usepackage{amssymb}
\usepackage{nicefrac}
\usepackage{microtype}
\usepackage{xcolor}
\usepackage{graphicx}
\usepackage{eso-pic}
\usepackage{subcaption}
\usepackage{algorithm}
\usepackage{algpseudocode}
\usepackage{amsthm}
\usepackage{multirow}
\usepackage{wrapfig}
\usepackage{enumitem}
\usepackage{xspace}
\usepackage{colortbl}
\usepackage{adjustbox}
\usepackage{arydshln}
\definecolor{bluecite}{HTML}{0071BC}
\definecolor{tablerowcolor}{RGB}{220,235,245}

\newcommand{\method}{PACE\xspace}
\newcommand{\ie}{\textit{i}.\textit{e}.}
\newcommand{\eg}{\textit{e}.\textit{g}.}

\newtheorem{theorem}{Theorem}[section]
\newtheorem{corollary}[theorem]{Corollary}
\newtheorem{proposition}[theorem]{Proposition}

\theoremstyle{remark}
\newtheorem*{takeaway}{Takeaway}

\newcommand{\pos}[1]{\textbf{\textcolor{teal}{#1}}}    
\newcommand{\negitive}[1]{\textcolor{red}{#1}}      

\newcommand{\positive}[1]{\textcolor{teal}{#1}}
\newcommand{\negative}[1]{\textcolor{red!70!black}{#1}}
\newcommand{\real}[1]{{\scriptsize\textcolor{gray}{#1}}}
\title{PACE: \underline{P}areto-\underline{A}daptive \underline{C}ompression for \underline{E}fficient Native MLLMs}

\author{%
  Lei Tian\textsuperscript{1}\thanks{Equal contribution.\quad \textsuperscript{\ensuremath{\ddagger}}Corresponding author.}\quad
  Chunzheng Zhu\textsuperscript{2}\footnotemark[1]\hspace{0.55em}\thanks{Work done during an internship at ByteDance.}\quad
  Kai WU\textsuperscript{1}\\
  \bfseries YiFan Zhang\textsuperscript{3}\quad
  Haihua Yang\textsuperscript{1}\textsuperscript{\ensuremath{\ddagger}}\\[0.5em]
  \normalfont \textsuperscript{1}ByteDance\\
  \normalfont \textsuperscript{2}Hunan University\\
  \normalfont \textsuperscript{3}Institute of Automation, Chinese Academy of Sciences
}

\begin{document}
\AddToShipoutPictureFG*{%
  \AtPageUpperLeft{%
    \put(\LenToUnit{\dimexpr1in+\oddsidemargin\relax},\LenToUnit{-23mm}){%
      \includegraphics[height=10mm]{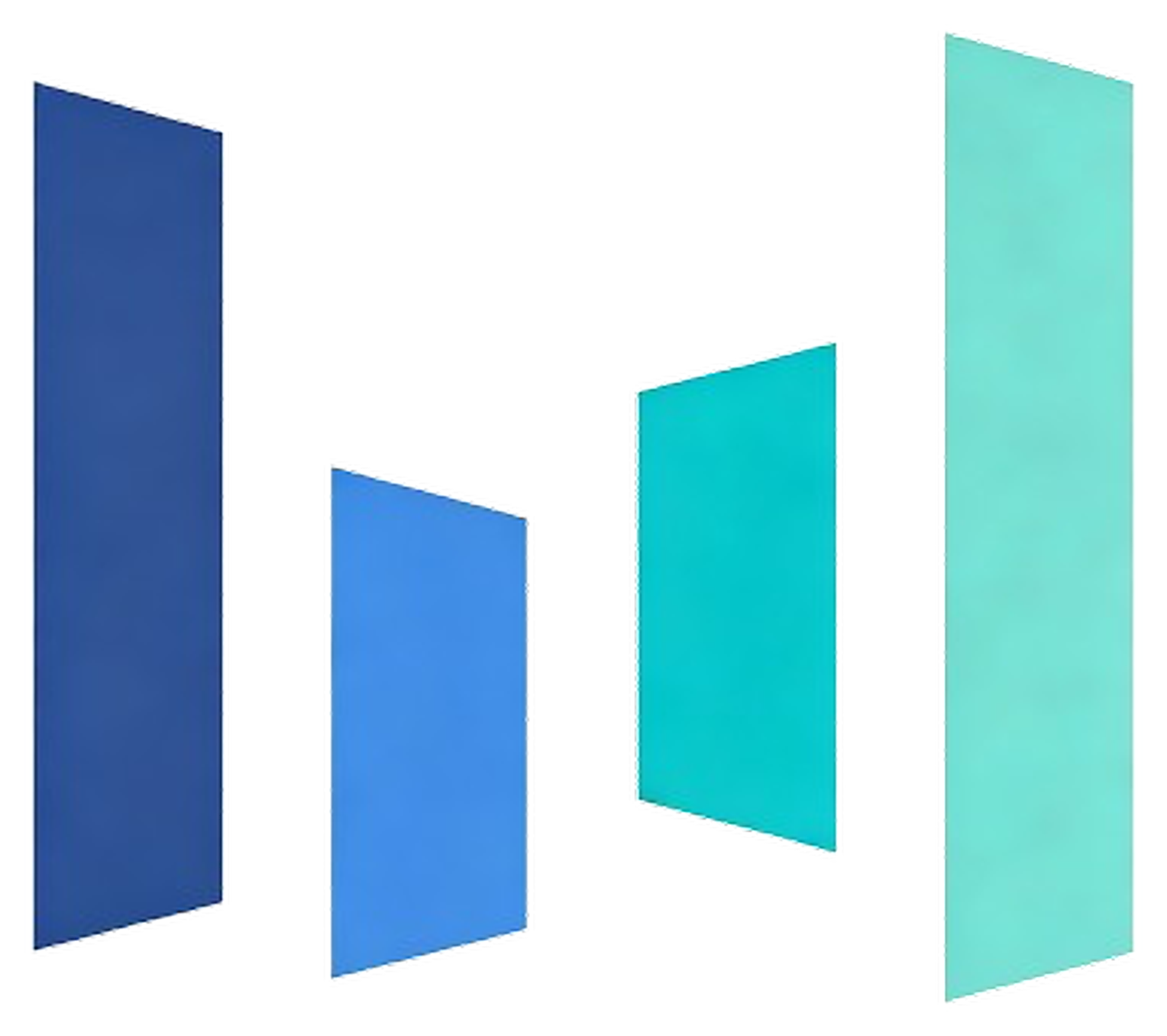}%
      \hspace{4mm}%
      \includegraphics[height=10mm]{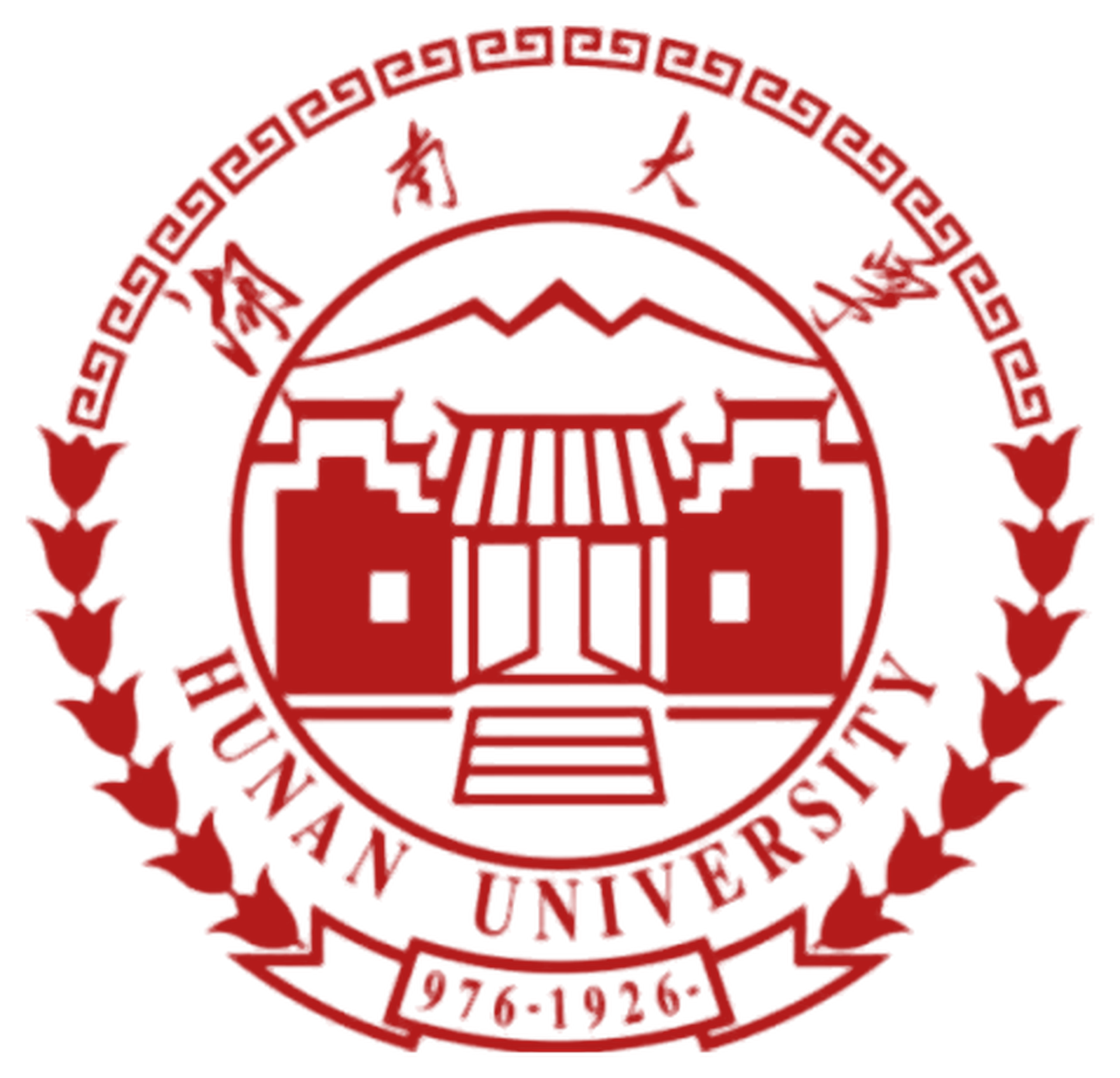}%
      \hspace{4mm}%
      \includegraphics[height=10mm]{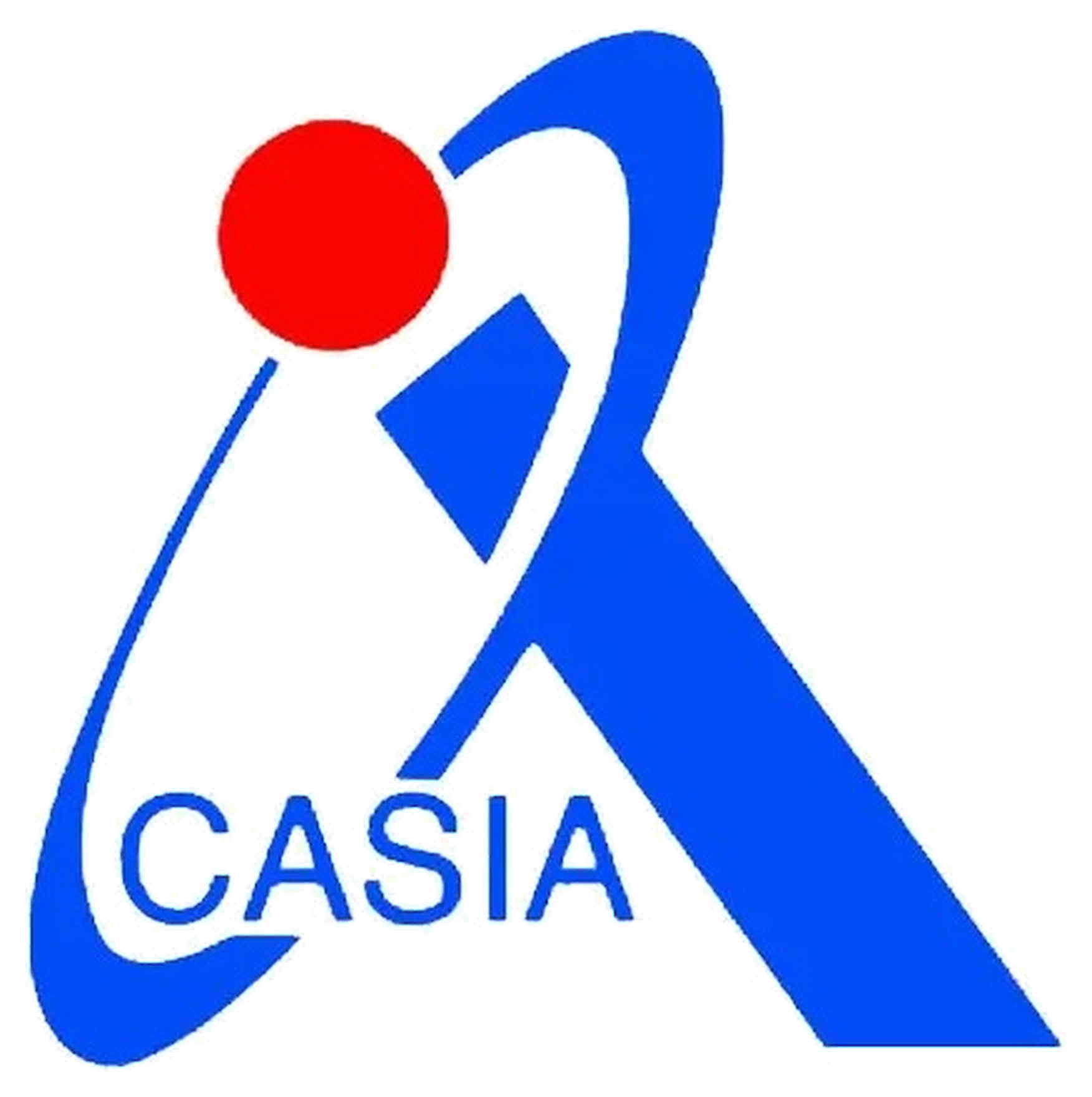}%
    }%
  }%
}

\maketitle

\begin{abstract}
Native multimodal large language models (native MLLMs) jointly encode visual and linguistic sequences within a unified autoregressive Transformer, emerging as a promising paradigm for multimodal understanding. Unlike conventional ViT-based MLLMs that rely on an external vision encoder, ViT-free native MLLMs tokenize every pixel patch directly, which is intuitively expected to suffer from severe visual token redundancy. A pilot study confirms this: unlike ViT-based models that degrade under compression, ViT-free MLLMs tolerate aggressive compression with near-lossless or even improved performance. Yet existing compression methods target only the ViT-based setting, leaving native MLLMs, which stand to gain the most, entirely unexplored. To close this gap, we introduce an analysis pipeline for native backbones centered on two metrics, \textit{i.e.}, token discriminability and cross-dimensional heterogeneous redundancy, revealing that \textit{(i)} redundancy is distributed independently and heterogeneously along the height and width axes, and \textit{(ii)} early-layer visual tokens are indiscriminable and uniformly redundant. Guided by these findings, we propose \textbf{PACE} (\underline{P}areto \underline{A}daptive \underline{C}ompression for \underline{E}fficient native MLLM), the first visual token compression framework tailored for native MLLMs with Multimodal-RoPE. Leveraging the dimensional decoupling of position encoding, PACE decomposes attention into two orthogonal spatial saliency estimates along height and width, gated by a positional attention signal, yielding a direction-aware importance measure with negligible overhead. It further casts token budget allocation as a Pareto optimization over performance preservation, inference efficiency, and spatial coverage, jointly identifying the optimal compression depths and retention ratios. Experiments show that PACE maintains near-lossless performance at only 25\%–50\% visual token retention, and even improves on several benchmarks, while delivering over 2× FLOPs reduction.
\end{abstract}

\section{Introduction}
\label{sec:introduction}

\begin{figure}[t]
  \centering
  \includegraphics[width=\linewidth]{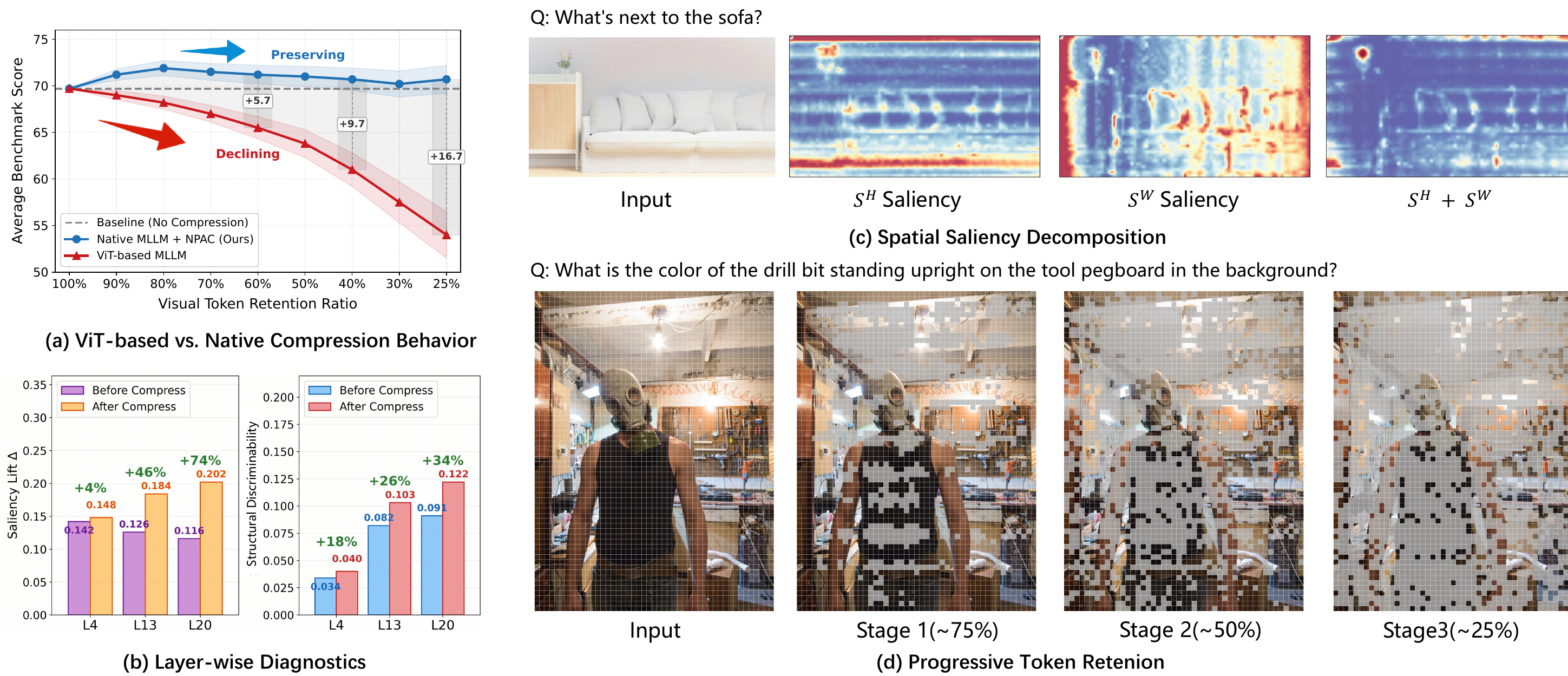}
\caption{\textbf{Probing analysis and progressive compression in native MLLMs.} (a)~ViT-based MLLMs degrade consistently under compression, while native MLLMs could preserve or even improve performance. (b)~Layer-wise discriminability reveals a depth-progressive ramp after the IFB, guiding compression stage placement. (c)~Spatially decoupled saliency along height and width dimensions. (d)~Progressive token retention across multiple compression stages.}
\label{fig:teaser}
\end{figure}

Native multimodal large language models (MLLMs) such as Emu3~\citep{wang2024emu3}, Chameleon~\citep{team2024chameleon}, SOLO~\citep{chen2024solo}, SAIL~\citep{sail25} and NEO~\citep{diao2025pixels} jointly model pixel and token sequences within a unified autoregressive Transformer.
Unlike ViT-based MLLMs~\citep{liu2023visual,liu2024improved,chen2024internvl,zhu2025internvl3,team2023qwen,wang2024qwen2} that rely on an external vision encoder, they tokenize every pixel patch directly, and intuitively suffer from severe visual token redundancy that degrades performance and bottlenecks inference efficiency.
A pilot study on MMBench-DEV-EN~\citep{liu2024mmbench} (Figure~\ref{fig:teaser}(a)) comparing Qwen3-VL~\citep{bai2025qwen3} against a comparable native MLLM built in this work (\S\ref{sec:method}) confirms this: unlike ViT-based models that degrade under compression, native MLLMs tolerate aggressive compression near-losslessly or even improve.
Yet existing token-compression techniques, from attention-guided dropping~\citep{chen2024image,xing2024pyramiddrop,wu2026hidrop} and token merging~\citep{shang2025llava,li2025tokenpacker,yang2025visionzip} to diversity-aware selection~\citep{zhang2024sparsevlm,alvar2025divprune,ye2025fit,zou2025don}, universally presuppose a modular ViT-based architecture, leaving native MLLMs, which should benefit most, entirely unexplored.

To solve this gap, we first conduct a systematic \textit{probing analysis} of token redundancy in representative native backbones, centered on two quantitative metrics. \textit{Discriminability} is assessed via two probes at each compression stage: (a) \textit{Structural Discriminability}, the centered effective rank of the visual token matrix divided by the token count, where higher values indicate tokens span more independent directions with less redundancy; and (b) \textit{Saliency Lift}, the relative gain in per-token 3D decomposed attention importance (computed along temporal, height, and width axes and then fused) from the full set to the PACE-selected subset, where higher values indicate the subset concentrates substantially more attention contribution than the layer-wise average. \textit{Cross-dimensional heterogeneous redundancy} measures how unevenly redundancy is distributed along spatial height and width axes. Based on the analysis of these metrics (Figure~\ref{fig:teaser}(b) and (c)), we reveal two key findings.

\textbf{Finding 1: Initial fusion buffer and depth-progressive redundancy accumulation.} The initial layers serve as an \textit{Initial Fusion Buffer (IFB)}~\citep{diao2025pixels}, a concept established in prior native-MLLM analysis; our contribution is to identify its implication for token compression rather than to establish a new mechanism. During the IFB, visual and language tokens undergo semantic alignment through shared attention, so a stable discriminative structure has not yet emerged and Structural Discriminability in Figure~\ref{fig:teaser}(b) stays low regardless of compression. Its monotonic rise with depth, mirrored by the growth of Saliency Lift $\Delta$ on the right of Figure~\ref{fig:teaser}(b), forms a depth-progressive ramp that justifies conservative pruning near the IFB and aggressive compression on the deeper high-discriminability plateau, motivating the multi-stage schedule in PACE.

\textbf{Finding 2: Heterogeneous redundancy across spatial dimensions.} We observe empirically that redundancy is distributed independently and heterogeneously along the height and width dimensions: certain tokens are redundant along one axis yet critical along the orthogonal one. For instance, horizontal text lines in documents are salient along height but uniform along width, while vertical structures show the opposite. A single attention score conflating all dimensions cannot distinguish these modes~\citep{wang2024qwen2,zou2025don}. We treat this as empirical motivation for our spatially decoupled importance estimation, which separately evaluates height and width saliency, rather than as a broadly established property (visualization in Figure~\ref{fig:teaser}(c)).

Motivated by these findings, we propose \textbf{\method} (Pareto-Adaptive Compression for Efficient
Native MLLMs), the first visual token compression framework for native MLLMs with multi-axis positional encoding, comprising three components: (1)~\textbf{Spatially Decoupled Importance Estimation (SDIE)} (\S\ref{sec:sdie}) exploits the dimensional decoupling of Native-RoPE to split attention into height and width saliency gated by a positional signal, giving a direction-aware, parameter-free importance measure. (2)~\textbf{Pareto Adaptive Compression Scheduling (PACS)} (\S\ref{sec:pacs}) uses a Pareto front search to locate compression layers along the discriminability curve and to allocate the cross-layer budget over performance, efficiency, and spatial coverage, yielding the per-stage retention ratio. (3)~\textbf{Four-stage end-to-end training} (\S\ref{sec:training}) builds a full PT/MT/SFT/RL pipeline around the compressed backbone to study how compression interacts with the native architecture, with compression applied directly at inference. As Figure~\ref{fig:teaser}(d) illustrates, progressive compression concentrates retained tokens on semantically critical regions while preserving spatial coverage. Experiments (\S\ref{sec:experiments}) show that native MLLMs tolerate far more redundancy than ViT-based ones: on our full-recipe Ours-2B backbone \method improves performance at 25\%--50\% token retention, and near-losslessness generalizes to other native backbones, all with about $2\times$ FLOPs reduction.

\begin{figure}[t]
  \centering
  \includegraphics[width=\linewidth]{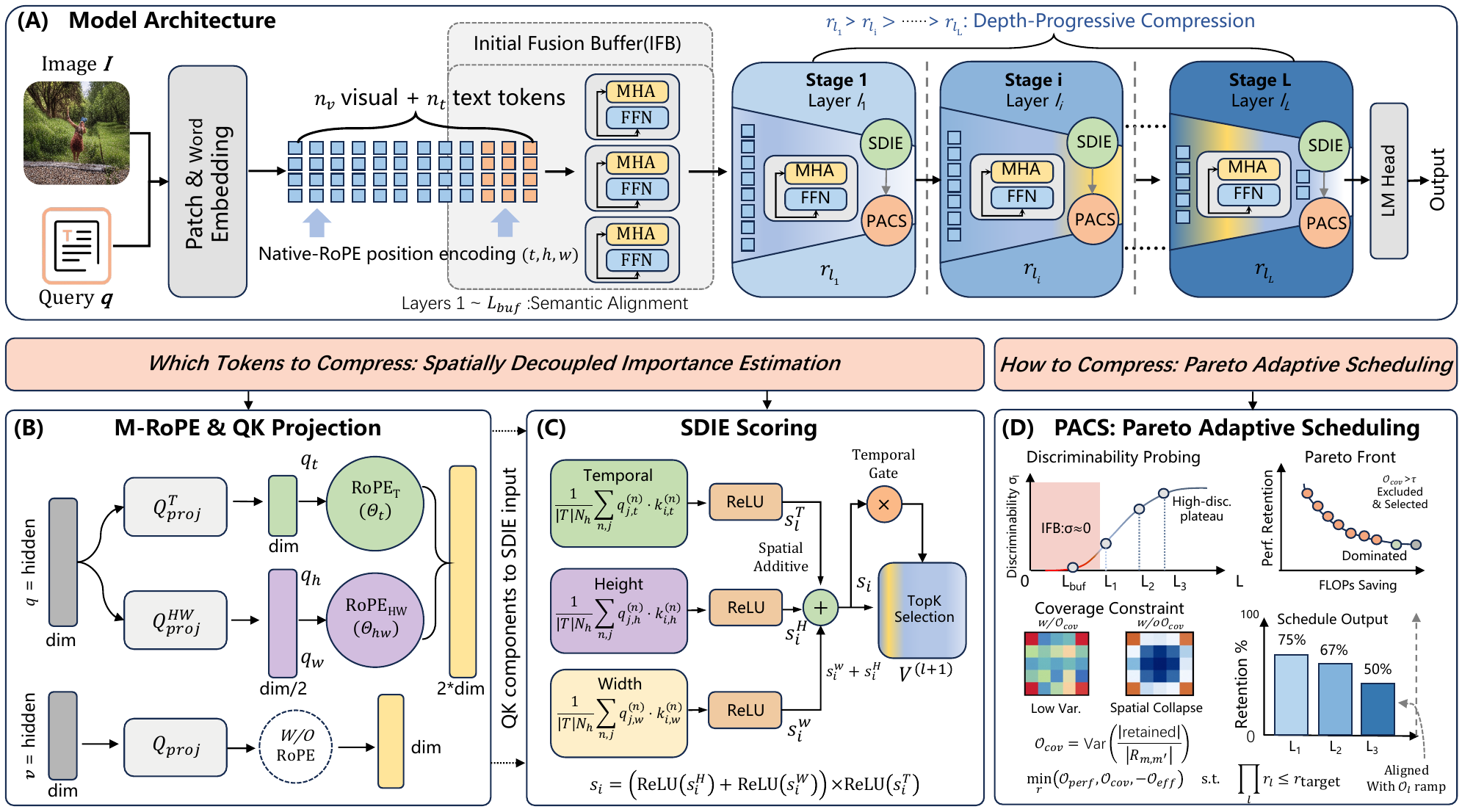}
\caption{\textbf{Overview of \method.}
\textbf{(A)}~Model architecture with depth-progressive compression ($r_{l_{1}}> r_{l_{2}}> \dots  > r_{l_{L}}$) applied after the IFB.
\textbf{(B)}~Native-RoPE QK projection: queries / keys are split into temporal ($\boldsymbol{q}_t$) and spatial ($\boldsymbol{q}_h, \boldsymbol{q}_w$) components with independent rotary encodings.
\textbf{(C)}~SDIE scoring: directional saliency scores are fused via spatial addition and temporal multiplicative gating for TopK selection.
\textbf{(D)}~PACS: discriminability probing identifies compression layers; a Pareto front over performance, efficiency, and coverage yields the optimal retention schedule.}
\label{fig:framework}
\end{figure}

\section{Method}
\label{sec:method}

\subsection{Framework Overview}
\label{sec:overview}

As illustrated in Figure~\ref{fig:framework}, \method operates on a native MLLM backbone and performs progressive visual token compression along the model depth. Given an input image $\boldsymbol{I}$ and a text query $\boldsymbol{q}$, the native backbone first encodes them into visual tokens $\mathcal{V}^{(0)} = \{v_i\}_{i=1}^{n_v}$, where each $v_i \in \mathbb{R}^d$, and text tokens $\mathcal{T} = \{t_j\}_{j=1}^{n_t}$, where each $t_j \in \mathbb{R}^d$, via the Patch and Word Embedding layer, respectively. 
The mixed sequence then passes through the first $L_{\mathrm{buf}}$ layers, which constitute the \textit{Initial Fusion Buffer (IFB)} for cross-modal semantic alignment and position registration, where $L_{\mathrm{buf}}$ is identified from the probing analysis as the layer beyond which discriminability begins to rise steadily (details in \S\ref{sec:impl_details}).
After the IFB, compression is executed at a predefined set of stages $\mathcal{S} = \{l_1, l_2, l_3\}$ with $L_{\mathrm{buf}} < l_1 < l_2 < l_3 \leq L$:
\begin{equation}
\mathcal{V}^{(l+1)} = \mathrm{TopK}\!\bigl(\mathcal{V}^{(l)},\; s^{(l)},\; K^{(l)}\bigr), \quad l \in \mathcal{S},
\label{eq:overall}
\end{equation}
where $\mathrm{TopK}$ selects the $K^{(l)}$ tokens with the highest importance scores, $\mathcal{V}^{(l)}$ denotes the visual token set at layer $l$, $s^{(l)} = {s_i}{i \in \mathcal{V}^{(l)}}$ is the spatially decoupled importance score computed by SDIE (\S\ref{sec:sdie}), and $K^{(l)} = r_l \cdot |\mathcal{V}^{(l)}|$ is the number of tokens to retain at that stage, with the retention ratio $r_l$ determined adaptively by PACS (\S\ref{sec:pacs}). The progressive compression structure is deliberate: \textbf{compression intensity deepens gradually layer by layer} ($r{l_1} > r_{l_2} > r_{l_3}$), aligned with the depth-progressive discriminability ramp, where early post-IFB layers warrant conservative pruning and deeper layers on the high-discriminability plateau admit more aggressive compression.

\subsection{Spatially Decoupled Importance Estimation}
\label{sec:sdie}

Our probing analysis reveals that redundancy distributes independently and heterogeneously along the height and width dimensions: a token may be critical along only one spatial axis yet redundant along the other.  
Native-RoPE provides a natural structural prior for this decoupling. In native attention, the Query and Key vectors are partitioned along the head dimension into three components encoding temporal ($t$), height ($h$), and width ($w$):
\begin{equation}
\boldsymbol{q}_i = [\boldsymbol{q}_{i,t};\; \boldsymbol{q}_{i,h};\; \boldsymbol{q}_{i,w}] \in \mathbb{R}^{d_h}, \quad \boldsymbol{k}_j = [\boldsymbol{k}_{j,t};\; \boldsymbol{k}_{j,h};\; \boldsymbol{k}_{j,w}] \in \mathbb{R}^{d_h},
\label{eq:qk_decomp}
\end{equation}
where $d_h$ denotes the total QK head dimension, and each component is equipped with its own rotary positional encoding $\Theta_T = \{\beta_T^{-2k/d}\}_{k \in [0, d/2)}$, $\Theta_H = \{\beta_H^{-4i/d}\}_{i \in [0, d/4)}$, $\Theta_W = \{\beta_W^{-4j/d}\}_{j \in [0, d/4)}$ with distinct base frequencies $\beta_T, \beta_H, \beta_W$. The attention score naturally decomposes into three additive terms:
\begin{equation}
\mathrm{attn}(i,j) = \underbrace{\boldsymbol{q}_{i,t} \!\cdot\! \boldsymbol{k}_{j,t}^{\top}}_{\text{temporal term}} + \underbrace{\boldsymbol{q}_{i,h} \!\cdot\! \boldsymbol{k}_{j,h}^{\top}}_{\text{height term}} + \underbrace{\boldsymbol{q}_{i,w} \!\cdot\! \boldsymbol{k}_{j,w}^{\top}}_{\text{width term}}.
\label{eq:attn_decomp}
\end{equation}
We aggregate text-side attention from each component across all attention heads to compute directional importance scores for each visual token $i \in \mathcal{V}^{(l)}$:
\begin{equation}
\small
s_i^{T} = \frac{1}{|\mathcal{T}| \cdot N_h}\sum_{n=1}^{N_h}\sum_{j \in \mathcal{T}} \boldsymbol{q}_{j,t}^{(n)} \!\cdot\! {\boldsymbol{k}_{i,t}^{(n)}}^{\top},
s_i^{H} = \frac{1}{|\mathcal{T}| \cdot N_h}\sum_{n=1}^{N_h}\sum_{j \in \mathcal{T}} \boldsymbol{q}_{j,h}^{(n)} \!\cdot\! {\boldsymbol{k}_{i,h}^{(n)}}^{\top},
s_i^{W} = \frac{1}{|\mathcal{T}| \cdot N_h}\sum_{n=1}^{N_h}\sum_{j \in \mathcal{T}} \boldsymbol{q}_{j,w}^{(n)} \!\cdot\! {\boldsymbol{k}_{i,w}^{(n)}}^{\top}
\label{eq:directional_scores}
\end{equation}
where $N_h$ is the number of attention heads, the superscript $(n)$ indexes the $n$-th head, and $s_i^T, s_i^H, s_i^W \in \mathbb{R}$ denote the temporal, height directional, and width directional saliency of visual token $i$, respectively. Since these scores are raw dot products (not softmaxed) and may be negative, we apply ReLU to ensure non-negativity before fusion:
\begin{equation}
s_i = \bigl(\text{ReLU}(s_i^{H}) + \text{ReLU}(s_i^{W})\bigr) \times \text{ReLU}(s_i^{T}).
\label{eq:sdie_fusion}
\end{equation}

The additive spatial aggregation preserves tokens salient in \emph{either} the H or W dimension, consistent with the cross-dimensional heterogeneity observed in the probing analysis, while the multiplicative temporal gate suppresses globally irrelevant tokens ($s_i^T \approx 0$). SDIE scores are extracted from intermediate QK products already computed in the native forward pass, introducing zero additional parameters. A formal optimality proof for this fusion form is given in Appendix~\ref{thm:sdie_optimal}.

\subsection{Pareto Adaptive Compression Scheduling}
\label{sec:pacs}

Since visual token redundancy accumulates progressively with depth and the optimal compression ratio varies across stages, we formulate cross-layer budget allocation to reach the Pareto frontier of performance, efficiency, and spatial coverage trade-offs.

\begin{figure*}[t]
  \centering
  \includegraphics[width=\linewidth]{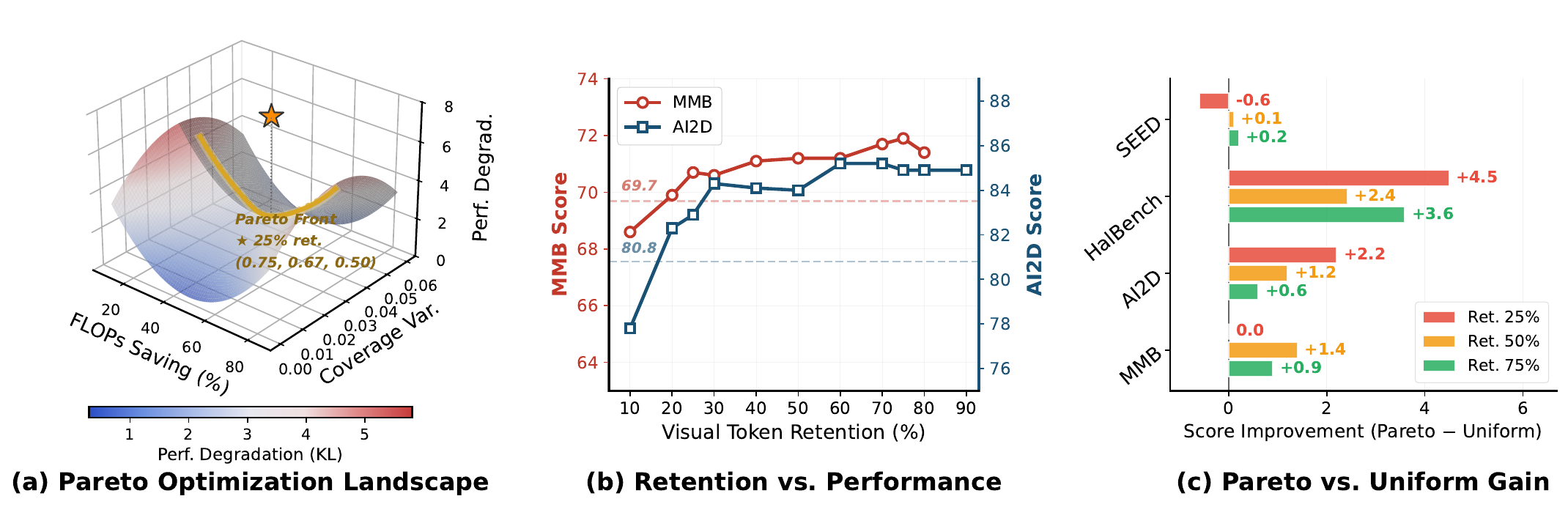}
  \caption{\textbf{Pareto front visualization for compression schedule optimization.} \textbf{(a)}~3D Pareto front over performance preservation, FLOPs saving, and coverage variance; the gold star marks the selected operating point. \textbf{(b)}~2D projection of FLOPs saving vs.\ performance retention; color encodes coverage variance (green=low, red=high). \textbf{(c)}~Per-layer optimal retention rate: three compression stages (orange/dark orange/red-orange) are aligned with the discriminability curve.}
  \label{fig:pareto}
\end{figure*}

\noindent
\textbf{Problem formulation.}
Let the compression stages be $\mathcal{S} = \{l_1, l_2, l_3\}$ with per-stage retention ratios $\boldsymbol{r} = (r_{l_1}, r_{l_2}, r_{l_3}) \in [0,1]^3$ as decision variables. The optimization problem is defined as:
\begin{equation}
\min_{\boldsymbol{r}} \;\bigl(\mathcal{O}_{\text{perf}}(\boldsymbol{r}),\;\; \mathcal{O}_{\text{cov}}(\boldsymbol{r}),\;\; -\mathcal{O}_{\text{eff}}(\boldsymbol{r})\bigr), \quad \text{s.t.} \;\prod_{l \in \mathcal{S}} r_l \leq r_{\text{target}},
\label{eq:pareto_opt}
\end{equation}
where $\mathcal{O}_{\text{perf}}(\boldsymbol{r}) = \frac{1}{|\mathcal{B}|}\sum_{b \in \mathcal{B}} \bigl(S_b^{\text{full}} - S_b^{\text{comp}}(\boldsymbol{r})\bigr)$ denotes the average performance degradation across a benchmark suite $\mathcal{B}$ evaluated on a held-out calibration set, $\mathcal{O}_{\text{eff}}(\boldsymbol{r}) = 1 - \prod_{l \in \mathcal{S}} r_l$ is the total reduction ratio, $\mathcal{O}_{\text{cov}}$ is the spatial coverage variance (Eq.~\ref{eq:coverage}), and $r_{\text{target}}$ is the target overall retention upper bound. All three objectives are min-max normalized to $[0, 1]$ on the calibration set to ensure comparable scales during Pareto front construction. The Pareto optimal solution set is:
\begin{equation}
\mathcal{P}^* = \Bigl\{\boldsymbol{r} \;\Big|\; \prod_{l \in \mathcal{S}} r_l \leq r_{\text{target}},\; \nexists\, \boldsymbol{r}' \text{ s.t. } \boldsymbol{r}' \prec \boldsymbol{r}\Bigr\}.
\label{eq:pareto_set}
\end{equation}

\noindent
\textbf{Spatial coverage constraint.}
Aggressive token dropping can concentrate retained tokens in salient regions while leaving large spatial gaps, degrading performance on tasks requiring global spatial understanding (\eg, scene description, spatial grounding). To address this, the image is partitioned into $M_h \times M_w$ sub-regions $\{R_{m_h,m_w}\}$, where each $R_{m_h,m_w} \subset \mathcal{V}$ contains the visual tokens whose spatial coordinates $(h_i, w_i)$ fall within the $m_h$-th row and $m_w$-th column of the grid. We set $M_h = M_w = 4$ to balance spatial granularity with the number of tokens per sub-region (sensitivity analysis in \S\ref{sec:impl_details}). Coverage variance is defined as:
\begin{equation}
\mathcal{O}_{\text{cov}}(\boldsymbol{r}) = \text{Var}_{(m_h,m_w)}\left(\frac{|\{i \in R_{m_h,m_w} : x_i = 1\}|}{|R_{m_h,m_w}|}\right),
\label{eq:coverage}
\end{equation}
where $x_i \in \{0,1\}$ denotes the retention decision for token $v_i$. Candidate points with $\mathcal{O}_{\mathrm{cov}}$ exceeding a threshold $\tau_{\mathrm{cov}}$ are automatically excluded during traversal, making spatial coverage an \textbf{intrinsic constraint} of the Pareto selection rather than a post-hoc correction. Our experiments confirm that incorporating spatial coverage as an explicit objective yields measurable gains on tasks requiring holistic scene understanding, validating its inclusion in the multi objective formulation.

\noindent
\textbf{Front traversal and operating point selection.}
We enumerate $\boldsymbol{r} \in \mathcal{R}_\delta^{|\mathcal{S}|}$ on a discrete grid $\mathcal{R}_\delta = \{1.0,\; 1.0{-}\delta,\; \ldots,\; r_{\min}\}$, evaluate the three objectives on a calibration set, filter the Pareto set $\mathcal{P}^*$, and select the operating point subject to the $r_{\text{target}}$ constraint. The discriminability curve from the probing analysis determines the front shape: stages aligned with the discriminability ramp correspond to moderate pruning, while those on the high plateau admit the most aggressive compression (specific configurations in \S\ref{sec:impl_details}). The resulting Pareto front is illustrated in Figure~\ref{fig:pareto}.

\subsection{Training Strategy}
\label{sec:training}
Since no concurrent Native MLLM fully open-sources all training stages (PT / MT / SFT / RL), to further investigate the compression behavior of native MLLM across all training stages, we train a 2B-parameter native MLLM \emph{from scratch}.
Specifically,
\method follows a four-stage 
training pipeline:
\textit{i)} \textbf{Pre-training} ($\sim$518M samples) learns basic visual concepts,
and the Post-LLM backbone remains frozen.
\textit{ii)} \textbf{Mid-training} ($\sim$89M samples) and \textit{iii)} \textbf{SFT} ($\sim$12.8M samples) strengthens vision-language understanding and alignment by using high-quality multi-source training data.
All parameters are optimized end-to-end.
\textit{iv)} \textbf{RL} ($\sim$12K samples) applies GRPO~\citep{shao2024deepseekmath} with rule-based reward and category-proportional priority sampling for hard examples.
We provide the simplified schematic illustration of data distribution in Figure~\ref{fig:placeholder} and more details in Appendix.

\begin{figure}[t]
    \centering
    \includegraphics[width=1\linewidth]{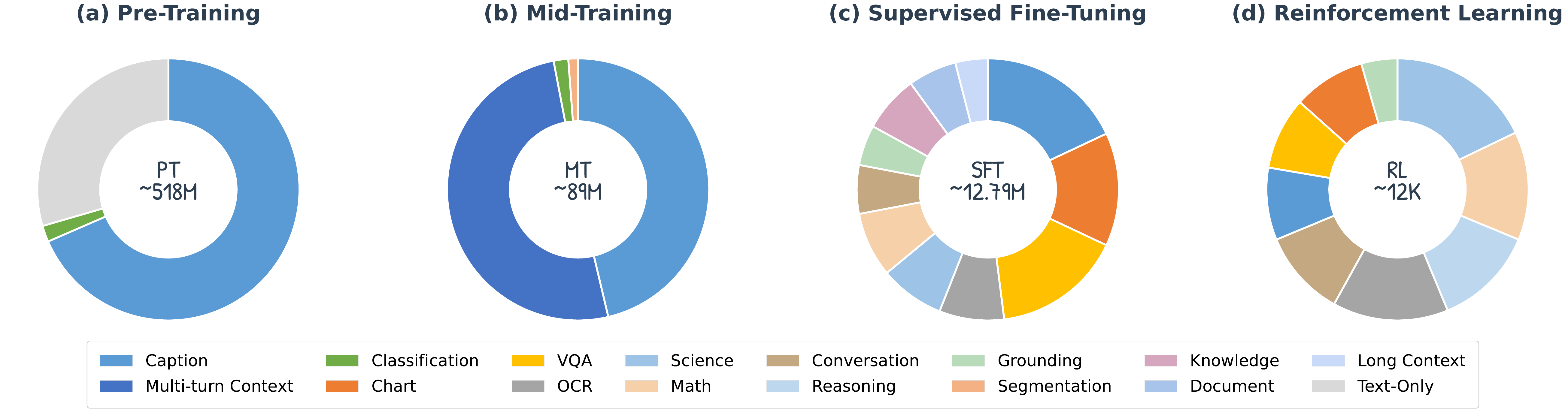}
  \caption{\textbf{Training data distribution across four stages.} Each donut chart shows the dataset composition and proportional contribution per stage.}
    \label{fig:placeholder}
\end{figure}

\begin{table}[t]
\centering
\footnotesize
\setlength{\tabcolsep}{3.5pt}
\label{tab:main_methods}
\caption{\textbf{Compression robustness across visual token retention ratios.} Each cell reports the relative change from the uncompressed baseline (absolute score in parentheses); \pos{green}/\negitive{red} denote \pos{improvement}/\negitive{degradation}. $^\dagger$ViT-free MLLMs. $^\star$MME: sum of perception and reasoning. $^\ddagger$HallusionBench: average of aAcc, qAcc, and fAcc as commonly done in prior works.}
\label{tab:main_results}
\resizebox{\textwidth}{!}{%
\begin{tabular}{llcccccccc}
\toprule
 & & \textbf{MMB} & \textbf{AI2D} & \textbf{MME}$^\star$ & \textbf{HallBench}$^\ddagger$ & \textbf{RWQA} & \textbf{MMStar} & \textbf{SEED-IMG} & \textbf{MathVista} \\
\midrule
\rowcolor{gray!8}
\multicolumn{10}{c}{\textbf{Retention=25\%}} \\
\cmidrule(lr){1-10}
\multirow{2}{*}{Ours-2B$^\dagger$} & \cellcolor{blue!8}\textit{w/o compress} & \cellcolor{blue!8}69.7 & \cellcolor{blue!8}80.8 & \cellcolor{blue!8}2084 & \cellcolor{blue!8}42.7 & \cellcolor{blue!8}61.8 & \cellcolor{blue!8}52.9 & \cellcolor{blue!8}71.8 & \cellcolor{blue!8}55.7 \\
 & \cellcolor{blue!8}\textit{w/ compress} & \cellcolor{blue!8}\pos{+1.4\%} \real{(70.7)} & \cellcolor{blue!8}\pos{+2.8\%} \real{(83.1)} & \cellcolor{blue!8}\pos{+0.4\%} \real{(2092)} & \cellcolor{blue!8}\negitive{-2.5\%} \real{(41.6)} & \cellcolor{blue!8}\negitive{-1.0\%} \real{(61.2)} & \cellcolor{blue!8}\negitive{-4.7\%} \real{(50.4)} & \cellcolor{blue!8}\negitive{-0.6\%} \real{(71.4)} & \cellcolor{blue!8}\negitive{-9.7\%} \real{(50.3)} \\
\midrule
\multirow{2}{*}{SAIL-7B$^\dagger$} & \textit{w/o compress} & 70.1 & 76.7 & 1719 & 33.1 & 63.9 & 53.1 & 72.9 & 57.0 \\
 & \textit{w/ compress} & \negitive{-2.6\%} \real{(68.3)} & \negitive{-2.2\%} \real{(75.0)} & \negitive{-2.5\%} \real{(1676)} & \negitive{-1.0\%} \real{(32.8)} & \negitive{-1.1\%} \real{(63.2)} & \negitive{-13.6\%} \real{(45.9)} & \negitive{-3.0\%} \real{(70.7)} & \negitive{-11.2\%} \real{(50.6)} \\
\midrule
\multirow{2}{*}{ONECAT-3B$^\dagger$} & \textit{w/o compress} & 83.1 & 77.2 & 2064 & 39.4 & 64.8 & 55.6 & 72.8 & 60.7 \\
 & \textit{w/ compress} & \negitive{-3.7\%} \real{(80.0)} & \negitive{-5.5\%} \real{(73.0)} & \negitive{-2.1\%} \real{(2021)} & \negitive{-3.5\%} \real{(38.0)} & \negitive{-3.1\%} \real{(62.8)} & \negitive{-3.8\%} \real{(53.5)} & \negitive{-4.1\%} \real{(69.8)} & \negitive{-18.3\%} \real{(49.6)} \\
\midrule
\multirow{2}{*}{Qwen3-VL} & \textit{w/o compress} & 78.4 & 76.9 & 2014.5 & 51.4 & 69.5 & 58.3 & 77.2 & 61.3 \\
 & \textit{w/ compress} & \negitive{-11.9\%} \real{(69.1)} & \negitive{-8.7\%} \real{(70.2)} & \negitive{-6.8\%} \real{(1877)} & \negitive{-30.5\%} \real{(35.7)} & \negitive{-8.1\%} \real{(63.9)} & \negitive{-23.2\%} \real{(44.8)} & \negitive{-4.7\%} \real{(73.6)} & \negitive{-43.7\%} \real{(34.5)} \\
\midrule
\multirow{2}{*}{Intern3-VL} & \textit{w/o compress} & 81.1 & 78.7 & 2221 & 42.5 & 66.3 & 60.7 & 76.5 & 57 \\
 & \textit{w/ compress} & \negitive{-6.3\%} \real{(76.0)} & \negitive{-7.1\%} \real{(73.1)} & \negitive{-5.1\%} \real{(2107)} & \negitive{-4.9\%} \real{(40.4)} & \negitive{-3.8\%} \real{(63.8)} & \negitive{-11.5\%} \real{(53.7)} & \negitive{-8.0\%} \real{(70.4)} & \negitive{-26.5\%} \real{(41.9)} \\
\midrule
\rowcolor{gray!8}
\multicolumn{10}{c}{\textbf{Retention=50\%}} \\
\cmidrule(lr){1-10}
\multirow{2}{*}{Ours-2B$^\dagger$} & \cellcolor{blue!8}\textit{w/o compress} & \cellcolor{blue!8}69.7 & \cellcolor{blue!8}80.8 & \cellcolor{blue!8}2084 & \cellcolor{blue!8}42.7 & \cellcolor{blue!8}61.8 & \cellcolor{blue!8}52.9 & \cellcolor{blue!8}71.8 & \cellcolor{blue!8}55.7 \\
 & \cellcolor{blue!8}\textit{w/ compress} & \cellcolor{blue!8}\pos{+2.2\%} \real{(71.2)} & \cellcolor{blue!8}\pos{+4.0\%} \real{(84.0)} & \cellcolor{blue!8}\pos{+0.6\%} \real{(2098)} & \cellcolor{blue!8}\pos{+0.5\%} \real{(42.9)} & \cellcolor{blue!8}\pos{+1.3\%} \real{(62.6)} & \cellcolor{blue!8}\negitive{-1.7\%} \real{(52.0)} & \cellcolor{blue!8}\pos{+1.0\%} \real{(72.5)} & \cellcolor{blue!8}\negitive{-2.5\%} \real{(54.3)} \\
\midrule
\multirow{2}{*}{SAIL-7B$^\dagger$} & \textit{w/o compress} & 70.1 & 76.7 & 1719 & 33.1 & 63.9 & 53.1 & 72.9 & 57.0 \\
 & \textit{w/ compress} & \negitive{-2.3\%} \real{(68.5)} & 0.0\% \real{(76.7)} & \negitive{-0.4\%} \real{(1712)} & \pos{+1.0\%} \real{(33.5)} & \negitive{-0.2\%} \real{(63.8)} & \negitive{-10.4\%} \real{(47.6)} & \negitive{-1.7\%} \real{(71.7)} & \negitive{-5.3\%} \real{(54.0)} \\
\midrule
\multirow{2}{*}{ONECAT-3B$^\dagger$} & \textit{w/o compress} & 83.1 & 77.2 & 2064 & 39.4 & 64.8 & 55.6 & 72.8 & 60.7 \\
 & \textit{w/ compress} & \negitive{-3.7\%} \real{(80.0)} & \negitive{-2.6\%} \real{(75.2)} & \negitive{-1.9\%} \real{(2025)} & \negitive{-2.2\%} \real{(38.5)} & \negitive{-2.8\%} \real{(63.0)} & \pos{+0.9\%} \real{(56.1)} & \negitive{-2.1\%} \real{(71.3)} & \negitive{-19.7\%} \real{(48.8)} \\
\midrule
\multirow{2}{*}{Qwen3-VL} & \textit{w/o compress} & 78.4 & 76.9 & 2015 & 51.4 & 69.5 & 58.3 & 77.2 & 61.3 \\
 & \textit{w/ compress} & \negitive{-4.0\%} \real{(75.3)} & \negitive{-7.7\%} \real{(71.0)} & \negitive{-4.4\%} \real{(1925)} & \negitive{-29.3\%} \real{(36.3)} & \negitive{-4.9\%} \real{(66.1)} & \negitive{-21.3\%} \real{(45.9)} & \negitive{-4.3\%} \real{(73.9)} & \negitive{-41.4\%} \real{(35.9)} \\
\midrule
\multirow{2}{*}{Intern3-VL} & \textit{w/o compress} & 81.1 & 78.7 & 2221.2 & 42.5 & 66.3 & 60.7 & 76.5 & 57 \\
 & \textit{w/ compress} & \negitive{-6.3\%} \real{(76.0)} & \negitive{-6.1\%} \real{(73.9)} & \negitive{-6.0\%} \real{(2087)} & \negitive{-2.6\%} \real{(41.4)} & \negitive{-2.6\%} \real{(64.6)} & \negitive{-11.9\%} \real{(53.5)} & \negitive{-2.9\%} \real{(74.3)} & \negitive{-23.9\%} \real{(43.4)} \\
\midrule
\rowcolor{gray!8}
\multicolumn{10}{c}{\textbf{Retention=75\%}} \\
\cmidrule(lr){1-10}
\multirow{2}{*}{Ours-2B$^\dagger$} & \cellcolor{blue!8}\textit{w/o compress} & \cellcolor{blue!8}69.7 & \cellcolor{blue!8}80.8 & \cellcolor{blue!8}2084 & \cellcolor{blue!8}42.7 & \cellcolor{blue!8}61.8 & \cellcolor{blue!8}52.9 & \cellcolor{blue!8}71.8 & \cellcolor{blue!8}55.7 \\
 & \cellcolor{blue!8}\textit{w/ compress} & \cellcolor{blue!8}\pos{+3.2\%} \real{(71.9)} & \cellcolor{blue!8}\pos{+5.1\%} \real{(84.9)} & \cellcolor{blue!8}\pos{+1.7\%} \real{(2120.2)} & \cellcolor{blue!8}\pos{+3.9\%} \real{(44.4)} & \cellcolor{blue!8}\pos{+1.0\%} \real{(62.4)} & \cellcolor{blue!8}\negitive{-0.6\%} \real{(52.6)} & \cellcolor{blue!8}\pos{+1.8\%} \real{(73.1)} & \cellcolor{blue!8}\pos{+1.3\%} \real{(56.4)} \\
\midrule
\multirow{2}{*}{SAIL-7B$^\dagger$} & \textit{w/o compress} & 70.1 & 76.7 & 1719 & 33.1 & 63.9 & 53.1 & 72.9 & 57.0 \\
 & \textit{w/ compress} & \negitive{-1.6\%} \real{(69.0)} & \negitive{-0.7\%} \real{(76.2)} & \negitive{-1.4\%} \real{(1695)} & \pos{+1.6\%} \real{(33.7)} & \pos{+1.1\%} \real{(64.6)} & 0.0\% \real{(53.1)} & \negitive{-0.4\%} \real{(72.6)} & \negitive{-1.7\%} \real{(56.1)} \\
\midrule
\multirow{2}{*}{ONECAT-3B$^\dagger$} & \textit{w/o compress} & 83.1 & 77.2 & 2063.9 & 39.4 & 64.8 & 55.6 & 72.8 & 60.7 \\
 & \textit{w/ compress} & \negitive{-4.1\%} \real{(80.6)} & \negitive{-2.0\%} \real{(75.7)} & \negitive{-1.0\%} \real{(2043)} & \negitive{-5.5\%} \real{(37.2)} & \negitive{-0.3\%} \real{(64.6)} & 0.0\% \real{(55.6)} & \negitive{-1.8\%} \real{(71.5)} & \negitive{-26.0\%} \real{(44.9)} \\
\midrule
\multirow{2}{*}{Qwen3-VL} & \textit{w/o compress} & 78.4 & 76.9 & 2015 & 51.4 & 69.5 & 58.3 & 77.2 & 61.3 \\
 & \textit{w/ compress} & \negitive{-4.7\%} \real{(74.7)} & \negitive{-7.5\%} \real{(71.1)} & \negitive{-3.6\%} \real{(1942)} & \negitive{-26.2\%} \real{(37.9)} & \negitive{-5.8\%} \real{(65.5)} & \negitive{-21.3\%} \real{(45.9)} & \negitive{-4.0\%} \real{(74.1)} & \negitive{-35.1\%} \real{(39.8)} \\
\midrule
\multirow{2}{*}{Intern3-VL} & \textit{w/o compress} & 81.1 & 78.7 & 2221 & 42.5 & 66.3 & 60.7 & 76.5 & 57.0 \\
 & \textit{w/ compress} & \negitive{-3.0\%} \real{(78.7)} & \negitive{-6.1\%} \real{(73.9)} & \negitive{-6.0\%} \real{(2089)} & \negitive{-6.2\%} \real{(39.8)} & \negitive{-3.3\%} \real{(64.1)} & \negitive{-11.2\%} \real{(53.9)} & \negitive{-1.5\%} \real{(75.3)} & \negitive{-20.5\%} \real{(45.3)} \\
\bottomrule
\end{tabular}%
}
\end{table}

\vspace{-5pt}

\section{Experiments}
\label{sec:experiments}

\subsection{Experimental Setup}
\label{sec:impl_details}

\noindent
\textbf{Implementation details.}
\method is built upon a ViT-free native multimodal large language model based on the Qwen3-1.7B~\citep{yang2025qwen3} backbone.
The depth of IFB is $L_{\text{buf}}=4$.
The native Native-RoPE base frequencies are $\beta_T = 1 \times 10^6$ and $\beta_H = \beta_W = 1 \times 10^4$. 
Full architecture configurations are in Table~\ref{tab:arch_details} (Appendix).
All training stages use 5$\times$8 H20-96G GPUs with bfloat16 precision. 
More training details are in Table~\ref{tab:hyperparam} (Appendix).

\noindent
\textbf{Evaluation methods.}
In Table~\ref{tab:main_results}, we apply \method to four representative MLLMs spanning both paradigms, \textit{i.e.}, the ViT-free native models SAIL-7B~\citep{sail25} and ONECAT-3B~\citep{li2025onecat}, and the ViT-based modular models Qwen3-VL~\citep{bai2025qwen3} and InternVL3~\citep{zhu2025internvl3}.
In Table~\ref{tab:compression_methods}, we instead transfer existing modular training-free compression methods to our own native MLLM backbone.
All comparison experiments are implemented based on the official checkpoints or codebases released by the official authors.

\noindent \textbf{Front traversal and operating point selection.} The search jointly optimizes layer positions and per-stage retention ratios. Three ordered stages $\mathcal{S}=\{l_1,l_2,l_3\}$ ($l_1{<}l_2{<}l_3$) are chosen from the 36 post-IFB layers,
giving $\binom{36}{3}=7{,}140$ configurations. Each layer-wise retention rate $r_i$ is drawn from a shared 25-value set $\mathcal{G}=\{0.10,0.15,\ldots,0.90\}\cup\{k/9\}_{k=1}^{8}$, yielding $25^3=15{,}625$ ratio combinations and $\sim\!1.12{\times}10^{8}$ total candidates. We evaluate the three objectives on a calibration set, extract $\mathcal{P}^{*}$ via dominance filtering, and pick the point satisfying $r_{\text{target}}$. 
Guided by \textbf{Finding~1} (depth-progressive redundancy accumulation), the monotonicity relation $r_1 > r_2 > r_3$ naturally emerges, substantially narrowing the traversal space. 
the resulting front is shown in Figure~\ref{fig:pareto}.


\noindent
\textbf{Evaluation benchmarks.} We evaluate with VLMEvalKit~\citep{duan2024vlmevalkit} on eight benchmarks, \textit{i.e.,} MMBench-DEV-EN (MMB)~\citep{liu2024mmbench}, AI2D~\citep{kembhavi2016diagram}, MME~\citep{fu2023mme}, HallusionBench (HallBench)~\citep{guan2024hallusionbench}, RealWorldQA (RWQA), MMStar~\citep{chen2024we}, SEEDBench-IMG (SEED)~\citep{li2023seed}, and MathVista~\citep{lu2023mathvista}.

 \begin{table}[t]
\centering
\caption{\textbf{Comparison with transferred modular training-free compression methods on native MLLM.} $\dagger$ denotes transfer from modular architectures. Best results per retention level are \textbf{bolded}.}
\label{tab:compression_methods}
\begin{adjustbox}{max width=\linewidth}
\begin{tabular}{ll cccccccc}
\toprule
\textbf{Method} & \textbf{Ret.} & \textbf{MMB} & \textbf{AI2D} & \textbf{MME} & \textbf{HalBench} & \textbf{RWQA} & \textbf{MMStar} & \textbf{SEED} & \textbf{MathVista} \\
\midrule
\rowcolor{gray!10} Ours-2B (full) & 100\% & 69.7 & 80.8 & 2084 & 42.7 & 61.8 & 52.9 & 71.8 & 55.7 \\
\midrule
\multicolumn{10}{c}{\textit{25\% Retention}} \\
\midrule
FastV$^\dagger$ & 25\% & \negative{-27.7\%} \real{(50.4)} & \negative{-39.9\%} \real{(48.6)} & \negative{-31.3\%} \real{(1432)} & \negative{-27.9\%} \real{(30.8)} & \negative{-63.3\%} \real{(22.7)} & \negative{-26.1\%} \real{(39.1)} & \negative{-25.9\%} \real{(53.2)} & \negative{-32.7\%} \real{(37.5)} \\
PyramidDrop$^\dagger$ & 25\% & \negative{-3.2\%} \real{(67.5)} & \negative{-2.1\%} \real{(79.1)} & \positive{-4.4\%} \real{(1992)} & \negative{-7.8\%} \real{(39.4)} & \negative{-2.9\%} \real{(60.0)} & \negative{-12.3\%} \real{(46.4)} & \negative{-5.7\%} \real{(67.7)} & -\negative{\textbf{-13.5\%}} \real{(48.2)} \\
D$^2$Pruner$^\dagger$ & 25\% & \negative{-3.2\%} \real{(67.5)} & \negative{-11.9\%} \real{(71.2)} & \negative{-3.5\%} \real{(2011)} & \negative{-13.2\%} \real{(37.1)} & \negative{-9.1\%} \real{(56.2)} & \negative{-15.9\%} \real{(44.5)} & \negative{-7.2\%} \real{(66.6)} & \negative{-18.1\%} \real{(45.6)} \\
SparseVLM$^\dagger$ & 25\% & \negative{-1.1\%} \real{(68.9)} & \positive{\textbf{+3.7\%}} \real{(83.8)} & \positive{-1.2\%} \real{(2059)} & \negative{-4.5\%} \real{(40.8)} & \negative{-2.1\%} \real{(60.5)} & \negative{-9.8\%} \real{(47.7)} & \negative{-3.8\%} \real{(69.1)} & \negative{\textbf{-6.5\%}} \real{(52.1)} \\
\rowcolor{blue!8}
\textbf{Ours} & 25\% & \positive{\textbf{+1.4\%}} \real{(70.7)} & \positive{+2.8\%} \real{(83.1)} & \positive{+0.4\%} \real{(2092)}  & \negative{\textbf{-2.6\%}} \real{(41.6)} & \negative{\textbf{-1.0\%}} \real{(61.2)} & \negative{\textbf{-4.7\%}} \real{(50.4)} & \negative{\textbf{-0.6\%}} \real{(71.4)} & \negative{-9.7\%} \real{(50.3)} \\
\midrule
\multicolumn{10}{c}{\textit{50\% Retention}} \\
\midrule
FastV$^\dagger$ & 50\% & \negative{-7.3\%} \real{(64.6)} & \negative{-24.8\%} \real{(60.8)} & \negative{-12.6\%} \real{(1821)} & \negative{-10.6\%} \real{(38.2)} & \negative{-50.2\%} \real{(30.8)} & \negative{-12.9\%} \real{(46.1)} & \negative{-8.6\%} \real{(65.6)} & \negative{-16.2\%} \real{(46.7)} \\
PyramidDrop$^\dagger$ & 50\% & \positive{+1.6\%} \real{(70.8)} & \positive{+1.5\%} \real{(82)} & \positive{+0.4\%} \real{(2092)} & \negative{-3.1\%} \real{(41.4)} & \negative{-2.4\%} \real{(60.3)} & \negative{-3.6\%} \real{(51.0)} & \negative{-0.4\%} \real{(71.5)} & \negative{-3.2\%} \real{(53.9)} \\
D$^2$Pruner$^\dagger$ & 50\% & \negative{-1.1\%} \real{(68.9)} & \positive{+2.0\%} \real{(82.4)} & \positive{\textbf{+0.5\%}} \real{(2096)} & \negative{-2.6\%} \real{(41.6)} & \negative{-2.9\%} \real{(60)} & \negative{-5.3\%} \real{(50.1)} & \negative{-0.4\%} \real{(71.5)} & \negative{-6.6\%} \real{(52.0)} \\
SparseVLM$^\dagger$ & 50\% & \positive{+1.6\%} \real{(70.8)} & \positive{\textbf{+5.2\%}} \real{(85)} & \negative{\textbf{-0.2\%}} \real{(2081)} & \positive{\textbf{+0.3\%}} \real{(42.8)} & \negative{\textbf{-4.6\%}} \real{(61.8)} & \negative{-3.4\%} \real{(51.1)} & \positive{\textbf{+0.1\%}} \real{(71.9)} & \negative{-1.1\%} \real{(55.1)} \\
\rowcolor{blue!8}
\textbf{Ours} & 50\% & \positive{\textbf{+2.2\%}} \real{(71.2)} & \positive{+4.0\%} \real{(84)} & \positive{0.6\%} \real{(2098)}  & \positive{\textbf{+0.4\%}} \real{(42.9)} & \positive{\textbf{+1.3\%}} \real{(62.6)} & \negative{\textbf{-1.7\%}} \real{(52)} & \positive{\textbf{+1.0\%}} \real{(72.5)} & \negative{\textbf{-2.5\%}} \real{(54.3)} \\
\midrule
\multicolumn{10}{c}{\textit{75\% Retention}} \\
\midrule
FastV$^\dagger$ & 75\% & \negative{-1.6\%} \real{(68.6)} & \negative{-15.7\%} \real{(68.1)} & \negative{-3.1\%} \real{(2021)} & \negative{-0.3\%} \real{(42.6)} & \negative{-43.4\%} \real{(35.0)} & \negative{-36.1\%} \real{(33.8)} & \negative{-0.8\%} \real{(71.2)} & \negative{-5.6\%} \real{(52.6)} \\
PyramidDrop$^\dagger$ & 75\% & \positive{+1.9\%} \real{(71)} & \positive{+3.6\%} \real{(83.7)} & \negative{-0.3\%} \real{(2077)} & \positive{+0.4\%} \real{(42.9)} & \negative{-0.6\%} \real{(61.4)} & \negative{-0.4\%} \real{(52.7)} & \positive{+1.4\%} \real{(72.8)} & \negative{-1.6\%} \real{(54.8)} \\
D$^2$Pruner$^\dagger$ & 75\% & \positive{+1.7\%} \real{(70.9)} & \positive{+3.8\%} \real{(83.9)} & \positive{+0.9\%} \real{(2103)} & \positive{+0.2\%} \real{(42.8)} & \negative{-1.8\%} \real{(60.7)} & \negative{0.0\%} \real{(52.9)} & \positive{+1.0\%} \real{(72.5)} & \negative{-3.8\%} \real{(53.6)} \\
SparseVLM$^\dagger$ & 75\% & \positive{+1.3\%} \real{(70.6)} & \positive{\textbf{+5.3\%}} \real{(85.1)} & \positive{+0.9\%} \real{(2104)} & \negative{-0.5\%} \real{(42.5)} & \positive{+0.6\%} \real{(62.2)} & \negative{-1.0\%} \real{(52.4)} & \positive{+1.0\%} \real{(72.5)} & \positive{+0.7\%} \real{(56.1)} \\
\rowcolor{blue!8}
\textbf{Ours} & 75\% & \positive{\textbf{+3.2\%}} \real{(71.9)} & \positive{+5.1\%} \real{(84.9)} & \positive{\textbf{+1.7\%}} \real{(2120)} & \positive{\textbf{+3.9\%}} \real{(44.4)} & \positive{\textbf{+1.0\%}} \real{(62.4)} & \negative{-0.6\%} \real{(52.6)} & \positive{\textbf{+1.8\%}} \real{(73.1)} & \positive{\textbf{+1.3\%}} \real{(56.4)} \\
\bottomrule
\end{tabular}
\end{adjustbox}
\end{table}

\begin{figure}[t]
  \centering
  \includegraphics[width=\linewidth]{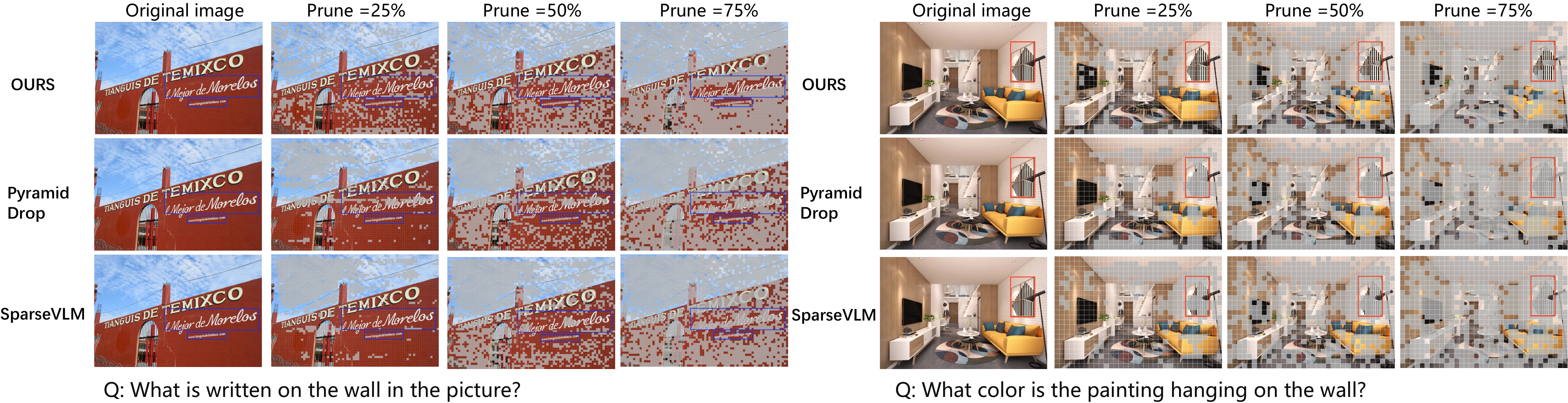}
\caption{\textbf{Progressive visual token retention maps at three compression stages.}
Retained tokens increasingly focus on semantically critical regions (e.g., text and objects) while preserving spatial coverage, validating the Pareto-adaptive schedule’s balance between performance and coverage.}
  \label{fig:exp1}
\end{figure}
\vspace{-5pt}
 
 
\subsection{Main Results}
\label{sec:main_results}

Table~\ref{tab:main_results} compares \method against modular MLLMs at 25\%, 50\%, and 75\% retention, reporting relative change from each model's uncompressed baseline. \method uniquely benefits from compression: it gains on MMB (+1.4\%$\to$+3.2\%) and AI2D (+2.8\%$\to$+5.1\%) as retention increases and stays near-lossless at 25\% on most benchmarks except MMStar and MathVista, whereas modular MLLMs degrade consistently, e.g.\ Qwen3-VL collapses up to 43.7\% on MathVista at 25\%, reflecting our core motivation that ViT-free MLLMs accumulate far more redundant tokens. Applying SDIE and PACS training-free to other ViT-free MLLMs corroborates this: SAIL-7B, sharing the decoupled spatial encoding, keeps a tolerable margin on general benchmarks (MMB, AI2D, MME, HallBench, RWQA, SEED), while ONECAT-3B, which folds spatial information into a single 1-D signal, drops more (AI2D $-5.5\%$ vs.\ $-2.2\%$ at 25\%). The ordering is not uniform, however: on MMStar SAIL-7B drops more ($-13.6\%$ vs.\ $-3.8\%$) and both degrade on MathVista ($-11.2\%$, $-18.3\%$), reflecting sensitivity to dense mathematical detail rather than a generic 1-D failure. This cross-benchmark inconsistency in spatial sensitivity motivates SDIE's directional decoupling.
\subsection{Comparison with Transfer Baselines}
\label{sec:compression_comparison}

Table~\ref{tab:compression_methods} compares \method with modular compression methods transferred to the native MLLM backbone across retention levels.
FastV~\citep{chen2024image} degrades severely under aggressive compression, with MMB dropping from 69.7 to 50.4 (27.7\%) at 25\% retention, due to single-stage pruning and collapsing the decoupled $\left(T,H,W\right)$ signals into one dimension.
PyramidDrop~\citep{xing2024pyramiddrop} addresses only the former via multi-stage compression and thus still trails \method.
D$^2$Pruner~\citep{zhang2026d2pruner} uses diversity-aware scalar ranking, while SparseVLM~\citep{zhang2024sparsevlm} re-weights tokens by text-visual relevance; both overlook the decoupled structure.
Consequently, \method’s advantage is largest at 25\% retention, where dimension-aware scoring becomes most critical. Figure~\ref{fig:exp1} further shows that PyramidDrop$^\dagger$ and SparseVLM$^\dagger$ preserve broader spatial structure but leave noticeable gaps, whereas \method focuses on semantic regions while maintaining spatial coverage.

\subsection{Ablation Studies}
\label{sec:ablation}

\paragraph{Uniform vs.\ Pareto-adaptive compression.}
\begin{wraptable}{r}{0.63\linewidth}
\centering
\vspace{-13pt}
\caption{\textbf{Uniform vs.\ Pareto-adaptive compression on Ours-2B.} ``Pareto'' uses the PACS-optimized schedule; ``Uniform'' applies equal retention at every stage.}
\label{tab:uniform_rate}
\begin{adjustbox}{max width=\linewidth}
\begin{tabular}{ll cccccc}
\toprule
\textbf{Schedule} & \textbf{Ret.} & \textbf{MMB} & \textbf{AI2D} & \textbf{MME} & \textbf{HalBench} & \textbf{SEED} & \textbf{RWQA} \\
\midrule
\rowcolor{gray!10} Full tokens & 100\% & 69.7 & 80.8 & 2084.4 & 42.72 & 71.8 & 61.8 \\
\midrule
Pareto & 25\% & \positive{+1.4\%} \real{(70.7)} & \positive{+2.9\%} \real{(83.1)} & \positive{+0.4\%} \real{(2092)} & \negative{-2.5\%} \real{(41.6)} & \negative{-0.6\%} \real{(71.4)} & \negative{-1.0\%} \real{(61.2)} \\
Uniform & 25\% & \negative{-2.2\%} \real{(68.2)} & \negative{-4.7\%} \real{(77.0)} & \negative{-5.2\%} \real{(1976)} & \negative{-14.8\%} \real{(36.4)}  & \negative{-6.4\%} \real{(67.2)} & \negative{-8.4\%} \real{(56.6)} \\
\midrule
Pareto & 50\% & \positive{+2.2\%} \real{(71.2)} & \positive{+4.0\%} \real{(84.0)} & \positive{+1.1\%} \real{(2107)} & \positive{+0.5\%} \real{(42.9)} & \positive{+1.0\%} \real{(72.5)} & \positive{+1.3\%} \real{(62.6)} \\
Uniform & 50\% & \negative{-0.3\%} \real{(69.5)} & \positive{+1.7\%} \real{(82.2)} & \negative{-1.6\%} \real{(2052)} & \negative{-6.5\%} \real{(40.0)} & \negative{-0.6\%} \real{(71.4)} & \negative{-1.8\%} \real{(60.7)} \\
\midrule
Pareto & 75\% & \positive{+3.2\%} \real{(71.9)} & \positive{+5.1\%} \real{(84.9)} & \positive{+1.7\%} \real{(2120)} & \positive{+3.9\%} \real{(44.4)} & \positive{+1.8\%} \real{(73.1)} & \positive{+1.0\%} \real{(62.4)} \\
Uniform & 75\% & \positive{+1.9\%} \real{(71.0)} & \positive{+4.3\%} \real{(84.3)} & \positive{+0.1\%} \real{(2087)} & \negative{-4.5\%} \real{(40.8)} & \positive{+1.1\%} \real{(72.6)} & \positive{+0.5\%} \real{(62.1)}  \\
\bottomrule
\end{tabular}
\end{adjustbox}
\vspace{-13pt}
\end{wraptable}

Table~\ref{tab:uniform_rate} compares uniform compression (equal retention across stages) against the PACS-optimized schedule. For the same overall retention target, Pareto scheduling consistently outperforms uniform allocation. 
At 25\% overall retention, the final Pareto configuration $(0.75, 0.67, 0.50)$ outperforms uniform by 3.6\%  on MMB and 7.6\% on AI2D. 
This validates that adaptive budget allocation guided by layer-wise discriminability is critical.

\paragraph{SDIE internal ablation.}
\begin{wraptable}{r}{0.5\linewidth}
\centering
\vspace{-12pt}
\caption{\textbf{SDIE internal ablation.} Setting~1 = full \method; Setting~2 = $s_i^T \times s_i^{H,W}$; Setting~3 = $s_i^T \times s_i^H$ only; Setting~4 = $s_i^T \times s_i^W$ only.}
\label{tab:ablation_sdie}
\begin{adjustbox}{max width=\linewidth}
\begin{tabular}{llccc}
\toprule
\textbf{Configuration} & \textbf{Ret.} & \textbf{MMB} & \textbf{AI2D} & \textbf{MME} \\
\midrule
\rowcolor{gray!10} Full tokens & 100\% & 69.7 & 80.8 & 2084 \\
\midrule
\multirow{3}{*}{Full \method (Setting 1)} & 25\% & \positive{+1.4\%} \real{(70.7)}  & \positive{+2.9\%} \real{(83.1)}  & \positive{+0.4\%} \real{(2092)}  \\
 & 50\% & \positive{+2.2\%} \real{(71.2)} & \positive{+4.0\%} \real{(84.0)} & \positive{+0.6\%} \real{(2098)} \\
 & 75\% & \positive{+3.2\%} \real{(71.9)} & \positive{+5.1\%} \real{(84.9)} & \positive{+1.7\%} \real{(2120)} \\
\midrule
\multirow{3}{*}{$s_i^T \times s_i^{H,W}$ (Setting 2)} & 25\% & \negative{-2.4\%} \real{(68.0)} & \negative{-2.5\%} \real{(78.8)} & \negative{-3.7\%} \real{(2008)} \\
 & 50\% & \negative{-0.1\%} \real{(69.6)} & \positive{+3.0\%} \real{(83.2)} & \negative{-0.2\%} \real{(2080)} \\
 & 75\% & \positive{+0.9\%} \real{(70.3)} & \positive{+5.1\%} \real{(84.9)} & \positive{+0.5\%} \real{(2096)} \\
\midrule
\multirow{3}{*}{$s_i^T \times s_i^H$ (Setting 3)} & 25\% & \negative{-2.4\%} \real{(68.0)} & \negative{-3.5\%} \real{(78.0)} & \negative{-2.6\%} \real{(2031} \\
 & 50\% & \positive{+1.2\%} \real{(70.5)} &  \positive{+1.4\%} \real{(81.9)} &  \negative{-0.7\%} \real{(2069)} \\
 & 75\% & \positive{+0.7\%} \real{(70.2)} & \positive{+4.8\%} \real{(84.7)} & \positive{+0.5\%} \real{(2095)} \\
\midrule
\multirow{3}{*}{$s_i^T \times s_i^W$ (Setting 4)} & 25\% & \negative{-4.6\%} \real{(66.5)} & \negative{-2.0\%} \real{(79.2)} & \negative{-2.7\%} \real{(2029)} \\
 & 50\% & \positive{+0.6\%} \real{(70.1)} & \positive{+2.5\%} \real{(82.8)} & \negative{-1.2\%} \real{(2059)} \\
 & 75\% & \negative{+0.0\%} \real{(69.7)} & \positive{+5.5\%} \real{(85.2)} & \negative{-0.4\%} \real{(2075)} \\
\bottomrule
\vspace{-42pt}
\end{tabular}
\end{adjustbox}
\end{wraptable}

Table~\ref{tab:ablation_sdie} compares alternative importance formulations on the our backbone. The full configuration (Setting~1: $(s_i^H + s_i^W) \times s_i^T$ with all SDIE components) consistently outperforms alternatives. Setting~2 (only $s_i^T \times s_i^{H,W}$, without spatial axis decoupling. $s_i^{HW}$ is a single un-decoupled spatial score, \emph{not} $s_i^H \times s_i^W$) notably drops MMB by 2\% - 4\%, while Setting~4 (only $s_i^T \times s_i^W$, discarding $s_i^H$) causes a much sharper decline at 25\% retention, demonstrating that both spatial dimensions are jointly essential.

\begin{figure}[t]
\centering
\includegraphics[width=\linewidth]{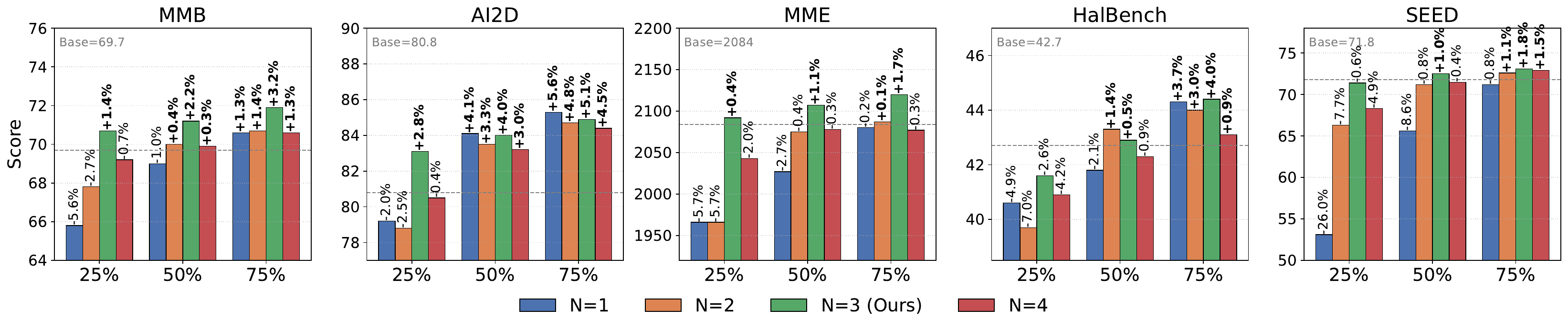}
\caption{\textbf{Effect of the number of compression stages ($N$) on Ours-2B.} The three-stage design ($N\!=\!3$) consistently preserves or improves baseline performance across all retention ratios, while $N\!=\!1$ incurs notable degradation under aggressive compression.}
\label{fig:num_stages}
\end{figure}

\paragraph{Number of compression stages.}
Figure~\ref{fig:num_stages} compares $N \in {1,2,3,4}$ stages at the same token retention. A single stage ($N=1$) causes abrupt information loss (e.g., SEED drops 26.0\% at 25\% retention), while $N=2$ remains insufficient under aggressive compression. A fourth stage ($N=4$) brings negligible gains with additional scheduling complexity. Overall, setting $N=3$ provides the best trade-off, matching or outperforming the baseline across benchmarks and retention levels, while avoiding unnecessary complexity from additional compression stages.

\subsection{Efficiency Analysis}
\label{sec:efficiency}

We evaluate inference efficiency on a single H20-96G GPU at $1024^2$ resolution (=1024 visual tokens after $2\times2$ patch merge). Table~\ref{tab:efficiency} reports batch-size1 results; matched-retention batched serving is in Appendix\ref{app:batched_efficiency}.
At 25\% retention ($\sim$256 tokens), \method reduces FLOPs by $2.0\times$ and KV cache from 252,MB to 72,MB, while improving the average score from 65.4 to 65.6.
Since batch-1 decode is dominated by weight loading, all methods have the same $1.00\times$ decode speed, and KV-cache reduction is generic to 256-token methods. \method’s key advantage is accuracy at matched serving cost: 65.6 average, outperforming SparseVLM by $1.0$ and PyramidDrop by $2.9$, while exceeding the full model (65.4). Under batched decode at 25\% retention, \method achieves up to $1.9\times$ faster decoding and $2.0\times$ lower peak memory, and supports batch64 where the full model OOMs (Appendix\ref{app:batched_efficiency}).
\begin{table}[t]
\centering
\caption{\textbf{Efficiency analysis of different compression
  methods on Ours 2B model.}
  Measured on a single H20-96G GPU, batch size 1,
  input resolution $1024^2$.
  \colorbox{pink!15}{\strut Attention-based} methods shown
  with red background;
  \colorbox{green!10}{\strut spatially-decoupled} methods
  with green background.
  Speedup ratios relative to the full-token baseline in
  parentheses. Avg.\ Score is averaged over MMB, AI2D, HallBench, RealWorld and SEEDBench-IMG five benchmarks.}
\label{tab:efficiency}
\begin{adjustbox}{max width=\linewidth}
\begin{tabular}{ll cccccr}
\toprule
\textbf{Method} & \textbf{\# Token} &
\textbf{FLOPs (T)} &
\textbf{Prefill (ms)} &
\textbf{Decode (ms/tok)} &
\textbf{KV Cache (MB)} &
\textbf{GPU Mem (GB)} &
\textbf{Avg.\ Score} \\
\midrule
Ours-2B (Full) & 1024
  & 5.43 & 127.8 & 77.4
  & 251.7 & 7.1 & 65.4 \\
\midrule
\multicolumn{8}{c}{\textit{Retention $=$ 75\%
  (\texttildelow768 tokens)}} \\
\midrule
\rowcolor{pink!10}
\rowcolor{pink!10}
SparseVLM & 768
  & 4.36 ($\times$1.25) & 103.1 ($\times$1.24)
  & 77.8 ($\times$1.00)
  & 191.7 & 6.7 & 66.6 \\
\rowcolor{pink!10}
PyramidDrop$^\dagger$ & 768
  & 4.36 ($\times$1.25) & 106.1 ($\times$1.20)
  & 78.7 ($\times$0.98)
  & 191.7 & 6.7 & 66.4 \\
\rowcolor{green!8}
\textbf{\method} & \textbf{768}
  & \textbf{4.60} ($\times$\textbf{1.18})
  & \textbf{105.6} ($\times$\textbf{1.20})
  & \textbf{77.6} ($\times$\textbf{1.00})
  & 191.7  & \textbf{6.8}
  & \textbf{67.3} \\
\midrule
\multicolumn{8}{c}{\textit{Retention $=$ 50\%
  (\texttildelow512 tokens)}} \\
\midrule
\rowcolor{pink!10}
\rowcolor{pink!10}
SparseVLM & 512
  & 3.56 ($\times$1.53) & 98.8 ($\times$1.29)
  & 75.9 ($\times$1.02)
  & 131.7 & 6.3 & 66.2 \\
\rowcolor{pink!10}
PyramidDrop$^\dagger$ & 512
  & 3.56 ($\times$1.53) & 96.9 ($\times$1.32)
  & 73.4 ($\times$1.05)
  & 131.7 & 6.3 & 66.2 \\
\rowcolor{green!8}
\textbf{\method} & \textbf{512}
  & \textbf{3.61} ($\times$\textbf{1.50})
  & \textbf{102.1} ($\times$\textbf{1.25})
  & \textbf{73.6} ($\times$\textbf{1.05})
  & 131.7 & \textbf{6.4}
  & \textbf{66.6} \\
\midrule
\multicolumn{8}{c}{\textit{Retention $=$ 25\%
  (\texttildelow256 tokens)}} \\
\midrule
\rowcolor{pink!10}
\rowcolor{pink!10}
SparseVLM & 256
  & 2.74 ($\times$1.98) & 98.2 ($\times$1.30)
  & 76.9 ($\times$1.01)
  & 71.7 & 6.0 & 64.6 \\
\rowcolor{pink!10}
PyramidDrop$^\dagger$ & 256
  & 2.74 ($\times$1.98) & 98.2 ($\times$1.30)
  & 76.6 ($\times$1.01)
  & 71.7 & 6.0 & 62.7 \\
\rowcolor{green!8}
\textbf{\method} & \textbf{256}
  & \textbf{2.74} ($\times$\textbf{1.98})
  & \textbf{99.6} ($\times$\textbf{1.28})
  & \textbf{77.3} ($\times$\textbf{1.00})
  & 71.7 & \textbf{6.2}
  & \textbf{65.6} \\
\bottomrule
\end{tabular}
\end{adjustbox}
\end{table}

\section{Conclusion}

We have presented \method, the first visual token compression framework designed for native MLLM built on multi-axis positional encoding. 
Through systematic probing analysis, we reveal the progressive redundancy accumulation pattern induced by the IFB mechanism and the cross-dimensional heterogeneous distribution of redundancy. Building on these findings, \method leverages the inherent dimensional decoupling of Native-RoPE to construct a direction-aware importance measure and employs Pareto front traversal to determine the optimal compression ratio at each layer. Experiments across multiple benchmarks validate the effectiveness and efficiency of \method, demonstrating that the structured priors inherent in native architectures yield systematic gains for efficient inference.


\bibliographystyle{splncs04}
\bibliography{reference}

\clearpage
\newpage

\appendix
\section{Supplementary Material}

In this supplementary material, we provide:
(A)~related work (\S\ref{sec:related});
(B)~theoretical analysis and proofs for SDIE and PACS (\S\ref{app:theory});
(C)~per-stage training results comparing original and compressed models (\S\ref{app:per_stage});
(D)~detailed Pareto front construction and compression configurations (\S\ref{app:pareto_config});
(E)~additional qualitative case studies comparing \method with transferred baselines (\S\ref{app:cases});
(F)~extended experimental results and ablations (\S\ref{app:extended_exp});
(G)~model architecture details, training data composition, and hyperparameter configurations (\S\ref{sec:arch_details},~\S\ref{sec:training_appendix}).

\subsection{Related Work}
\label{sec:related}

\paragraph{Multimodal Large Language Models.}

Multimodal learning spans diverse application domains and also addresses cross modal semantic depth mismatch through AHRA's hierarchical expert selection~\citep{meng2026adaptive} and short video content governance through reasoning enhanced domain adaptive pretraining~\citep{wang2025reasoning}. SP-Mind automates spatial proteomics analysis through an autonomous reasoning agent~\citep{yuan2026sp}. PDU-Net combines path aggregation and dual attention for lung nodule segmentation~\citep{chang2025research}.
Modular MLLMs connect an independent ViT with a language model. The LLaVA series~\citep{liu2023visual,liu2024improved,liu2024llavanext}, InternVL~\citep{chen2024internvl,chen2024expanding,zhu2025internvl3}, Qwen-VL~\citep{team2023qwen,wang2024qwen2}, and LLaVA-OneVision~\citep{li2024llava,li2024llava} show strong capabilities across visual question answering, video understanding, and related tasks, establishing the ``ViT + projection + LLM'' pipeline as the dominant paradigm~\citep{dong2024internlm,dai2024nvlm}. This design, however, couples an externally pretrained encoder with the language model, incurring representational misalignment at the modality boundary and extra deployment complexity. Native MLLMs instead discard the ViT and jointly model visual and linguistic sequences within a unified autoregressive Transformer, offering a simpler and more scalable formulation. Pioneered by Fuyu~\citep{bavishi2023fuyu} and EVE~\citep{diao2024unveiling}, this direction is advanced by Chameleon~\citep{team2024chameleon}, Emu3~\citep{wang2024emu3}, Show-o~\citep{xie2024show}, VILA-U~\citep{wu2024vila}, Emu3.5~\citep{cui2025emu3}, SAIL~\citep{sail25}, and NEO~\citep{diao2025pixels}, collectively validating the paradigm~\citep{zhao2025unified}. Among them, NEO introduces the Pre-Buffer and Post-LLM layered design~\citep{diao2025pixels}, UniMoD~\citep{mao2025unimod} explores a mixed-depth perspective for unified understanding and generation, and The Narrow Gate~\citep{serra2024narrow} reveals the localized nature of cross-modal information flow. Yet since every pixel patch is tokenized directly without a ViT's spatial downsampling, native MLLMs accumulate far longer visual sequences, and a dedicated token compression framework for them remains absent.

\paragraph{Visual Token Compression.}

Visual token compression reduces redundancy in the visual sequences of MLLMs and has yielded a rich body of methods under modular architectures. FastV~\citep{chen2024image} first prunes low-contribution tokens via text-to-vision attention. Subsequent work extends this along several axes: PyramidDrop~\citep{xing2024pyramiddrop}, HiDrop~\citep{wu2026hidrop}, and AdaptInfer~\citep{zhang2025adaptinfer} introduce progressive or dynamically scheduled multi-stage compression; SparseVLM~\citep{zhang2024sparsevlm} and FitPrune~\citep{ye2025fit} enable training-free acceleration; LLaVA-PruMerge~\citep{shang2025llava} jointly prunes and merges; DivPrune~\citep{alvar2025divprune}, CDPruner~\citep{zhang2025beyond}, HoloV~\citep{zou2025don}, and FSR~\citep{tong2026focus} select tokens from a semantic-coverage view; VisionZip~\citep{yang2025visionzip} and MMTok~\citep{dong2025mmtok} cast coverage maximization as an objective; VisionTrim~\citep{yu2026visiontrim} unifies encoding- and decoding-stage compression; and IDPruner~\citep{tan2026idpruner}, AgilePruner~\citep{baek2026agilepruner}, D$^{2}$Pruner~\citep{zhang2026d2pruner}, and ViTCoP~\citep{luo2026vitcop} jointly optimize importance and diversity. Orthogonally, ADSC~\citep{deniz2025vision} applies uniform layer-wise downsampling, while EvoComp~\citep{song2026evocomp} trains a learned compressor via offline evolutionary search. Across these methods, importance signals are positional-agnostic or single-axis, and scheduling relies on layer-level heuristics rather than an explicit multi-objective trade-off. Although transplantable to native backbones (our baselines in Table~\ref{tab:compression_methods}), they neither exploit the decoupled multi-axis positional structure of native MLLMs nor jointly optimize performance, efficiency, and coverage. The proposed \method is the first to frame native visual token compression as Pareto multi-objective optimization, exploiting multi-axis structure via SDIE and searching schedules via PACS.

\subsection{Training Data and Hyperparameter Details}
\label{sec:training_appendix}

\paragraph{Pre-training.} The model learns basic visual concepts from large-scale image-text paired data. The training corpus comprises approximately 518M image-text pairs: BLIP3o~\citep{chen2025blip3} (29M), COYO~\citep{byeon2022coyo} (314M, 10\% sampled), Recap-DataComp~\citep{li2024recaption} (610M, 50\% sampled), and Relaion-COCO~\citep{schuhmann2022laioncoco, schuhmann2022laion5b} (333M, 50\% sampled). During this stage, only the Patch Embedding layers, Pre-Buffer, and the newly added H/W-directional QK linear layers and normalization parameters in the Post-LLM are optimized; the Post-LLM backbone parameters remain frozen.

\paragraph{Mid-training.} This stage strengthens vision-language alignment and enhances recognition of high-resolution images, complex scenes, and dense OCR content, bridging the gap between coarse pre-training concepts and downstream instruction data. The training data comprises approximately 89M samples drawn from 8 heterogeneous sources, employing a concept-balanced sampling strategy based on MetaCLIP~\citep{xu2023demystifying} with re-captioning by large VLMs to improve caption fidelity and reduce concept imbalance. In this stage all model parameters, including the Post-LLM backbone, are unfrozen and optimized end-to-end.

\paragraph{Supervised fine-tuning.} High-quality multi-source instruction data is used to enhance instruction following and multi-turn conversational abilities. The data spans a broad range of task categories, including Captioning, General QA, Science, Chart, Mathematics, Knowledge, OCR, Document, Grounding, and Conversation, so as to cover both perception-heavy and reasoning-heavy scenarios (full source and proportion details in Table~\ref{tab:datasets_sft}).

\paragraph{Reinforcement learning.} A total of 12{,}000 samples are used for reinforcement learning, covering Document/OCR (16\%), Chart Understanding (10\%), General VQA (10\%), Conversation/Instruction (12\%), Mathematics/Geometry (15\%), Captioning (10\%), Grounding (5\%), Science (20\%), and Visual Reasoning (14\%). To prioritize informative samples, we apply pass@6 filtering: each candidate is sampled 6 times, and only those with 1--5 correct responses (out of 6) are retained, while samples with 6/6 correct (trivial) or 0/6 correct (overly difficult) are discarded to maximize the learning signal (details in Table~\ref{tab:datasets_rl}).

\begin{table}[htbp]
\centering
\caption{\textbf{Pre-training data summary.} Total: $\sim$518M image-text pairs.}
\label{tab:datasets_pt}
\resizebox{\linewidth}{!}{
\begin{tabular}{lcccc}
\toprule
\textbf{Dataset} & \textbf{Language} & \textbf{Size} & \textbf{Sampling Strategy} & \textbf{Notes} \\
\midrule
BLIP3o~\citep{chen2025blip3} & EN & 29.4M & full sampling & High-quality synthetic captions \\
COYO~\citep{byeon2022coyo} & EN & 314M & partly sampling & Web-crawled, noisy \\
Recap-DataComp~\citep{li2024recaption} & EN & 610M & partly sampling & High-quality recaptions \\
Relaion-COCO~\citep{schuhmann2022laioncoco, schuhmann2022laion5b} & EN & 333M & partly sampling & High quality recaptions \\
ImageNet-21K~\citep{deng2009imagenet} & EN & 10.2M & partly sampling & Classification labels as short captions \\
\midrule
\textbf{Total} & & \textbf{$\sim$518M} & \\
\bottomrule
\end{tabular}
}
\end{table}

\begin{table}[htbp]
\centering
\caption{\textbf{Mid-training data summary.} Total: $\sim$89M samples.}
\label{tab:datasets_mt}
\resizebox{\linewidth}{!}{
\begin{tabular}{lcccc}
\toprule
\textbf{Dataset} & \textbf{Language} & \textbf{Size} & \textbf{Sampling Strategy} & \textbf{Notes} \\
\midrule
ImageNet-21K~\citep{deng2009imagenet} & EN, CH & 1.6M & partly sampling & Classification labels as short captions \\
laionCN~\citep{an2025llava} & EN, CH & 6.4M & partly sampling & English \& Chinese web-crawled image–text pairs \\
COYO~\citep{byeon2022coyo} & EN, CH & 15.1M & partly sampling & Web-crawled, noisy \\
Recap-DataComp~\citep{li2024recaption} & EN, CH & 6.2M & partly sampling & High-quality recaptions, rewritten for dense semantics \\
SAM-1B~\citep{kirillov2023segment} & EN, ZH & 1.1M & partly sampling & Segmentation-grounded dense captions \\
MINT~\citep{awadalla2024mint} & EN, CH & 4.1M & partly sampling & Multimodal interleaved documents \\
Zero250M~\citep{xie2022zero} & EN, CH & 9.6M & partly sampling & Large-scale image–text corpus \\
Obelics~\citep{laurenccon2023obelics} & EN, CH & 45.4M & partly sampling & Interleaved image–text web documents \\
\midrule
\textbf{Total} & & & \textbf{$\sim$89M} & \\
\bottomrule
\end{tabular}
}
\end{table}

\begin{table}[htbp]
\centering
\caption{\textbf{SFT data summary.} Multi-source instruction data spanning 8 task categories. Total:~12M samples.}
\label{tab:datasets_sft}
\resizebox{\linewidth}{!}{
\begin{tabular}{c|c}
\toprule
\textbf{Task Category} & \textbf{Datasets} \\
\midrule
Captioning & TextCaps (en)~\citep{sidorov2020textcaps}, ShareGPT4V (en\&zh)~\citep{chen2024sharegpt4v}, COCO~\citep{lin2014microsoft} \\
\rowcolor{gray!15}
General QA & VQAv2~\citep{goyal2017making}, GQA~\citep{hudson2019gqa}, OKVQA~\citep{marino2019ok}, A-OKVQA~\citep{schwenk2022okvqa}, VSR~\citep{shaharabany2022looking}, VisualDialog~\citep{das2017visual}, ViQuAE~\citep{lerner2022viquae}  \\
Science & AI2D~\citep{kembhavi2016diagram}, ScienceQA~\citep{lu2022learn}, TQA~\citep{kembhavi2017you} \\
\rowcolor{gray!15}
Chart & ChartQA~\citep{masry2022chartqa}, DVQA~\citep{kafle2018dvqa}, PlotQA~\citep{methani2020plotqa}, IconQA~\citep{lu2021iconqa}, VisText~\citep{tang2023vistext}, Chart-to-Text~\citep{kantharaj2022chart}, FigureQA~\citep{kahou2017figureqa} \\
Mathematics & Geo170K~\citep{gao2023g}, GeoQA+~\citep{chen2021geoqa}, TabMWP~\citep{lu2022dynamic}, CLEVR-Math~\citep{lindstrom2022clevr}, Geometry3K~\citep{lu2021inter} \\
\rowcolor{gray!15}
OCR & OCRVQA~\citep{mishra2019ocr}, TextVQA~\citep{singh2019towards}, InfoVQA~\citep{mathew2022infographicvqa}, DocVQA~\citep{mathew2021docvqa}, ST-VQA~\citep{biten2019scene}, SynthText~\citep{gupta2016synthetic} \\
Grounding & RefCOCO/+/g~\citep{yu2016modeling}, Visual Genome~\citep{krishna2017visual}, Visual7W~\citep{zhu2016visual7w} \\
\rowcolor{gray!15}
Conversation & LLaVA-150K~\citep{liu2023visual}, LVIS-Instruct4V~\citep{wang2023see}, ALLaVA~\citep{chen2024allava} \\
\bottomrule
\end{tabular}
}
\end{table}

\begin{table}[htbp]
\centering
\caption{\textbf{RL data category proportions.} Total: $\sim$12K samples.}
\label{tab:datasets_rl}
\begin{adjustbox}{max width=\linewidth}
\begin{tabular}{lrl}
\toprule
\textbf{Category} & \textbf{Sampling\%} & \textbf{Representative Datasets} \\
\midrule
Document/OCR & 14\% & DocVQA~\citep{mathew2021docvqa}, TextVQA~\citep{singh2019towards}, ESTVQA~\citep{wang2020general} \\
Chart Understanding & 9\% & ChartQA~\citep{masry2022chartqa}, ChartX~\citep{xia2402chartx}, TableVQA~\citep{kim2024tablevqa} \\
General VQA & 9\% & VQAv2~\citep{goyal2017making}, AOKVQA~\citep{schwenk2022okvqa}, GQA~\citep{hudson2019gqa}, POPE~\citep{li2023evaluating} \\
Conversation/Instruction & 11\% & Innovator-VL~\citep{wen2026innovator}, Thyme~\citep{zhang2025thyme} \\
Mathematics/Geometry & 13\% & GeoQA~\citep{chen2021geoqa}, GSM8K~\citep{cobbe2021training}, TabMWP~\citep{lu2022dynamic}, MMMath~\citep{sun2024mm} \\
Captioning & 9\% & CLEVR-Counting~\citep{johnson2017clevr}, TallyQA~\citep{acharya2019tallyqa} \\
Grounding & 4\% & Orsta~\citep{ma2025one} \\
Science & 18\% & ScienceQA~\citep{lu2022learn}, SciVQA~\citep{borisova2025scivqa} \\
Visual Reasoning & 13\% & MMMU~\citep{yue2024mmmu}, AI2D~\citep{kembhavi2016diagram} \\
\bottomrule
\end{tabular}
\end{adjustbox}
\end{table}

\subsection{Training Hyperparameters}
\label{sec:hyperparameters}
Table~\ref{tab:hyperparam} provides the complete hyperparameter configurations across all four training stages. All stages share the AdamW optimizer with identical momentum and epsilon settings, a cosine learning rate schedule, weight decay of 0.01 and bfloat16 numerical precision.
The maximum sequence length is 16{,}384, 32{,}768, 32{,}768 and 8{,}192 tokens in four stages, respectively.
The key differences across stages lie in the peak learning rate, which decreases from $8{\times}10^{-4}$ in pre-training to $1{\times}10^{-5}$ in RL as the model converges, and the global batch size, gradually reduced from 2{,}560 to 128 to match decreasing data volume and enable finer gradient updates. Input resolution is $256^2$–$1024^2$ during pre-training and extends to $256^2$–$2048^2$ in subsequent stages. A text-only data ratio of 0.3 is applied during pre-training and mid-training to preserve the backbone’s general language capability.

\begin{table}[htbp]
\centering
\caption{\textbf{Hyperparameter configurations for each training stage.}}
\label{tab:hyperparam}
\begin{tabular}{l cccc}
\toprule
\textbf{Configuration} & \textbf{Pre-training} & \textbf{Mid-training} & \textbf{SFT} & \textbf{RL} \\
\midrule
Resolution & $112^2$\textasciitilde$1024^2$ & $256^2$\textasciitilde$1024^2$ & $256^2$\textasciitilde$2048^2$ & $256^2$\textasciitilde$2048^2$ \\
Optimizer & \multicolumn{4}{c}{AdamW ($\beta_1=0.9, \beta_2=0.999, \epsilon=10^{-8}$)} \\
Weight decay & \multicolumn{4}{c}{0.01} \\
LR schedule & cosine & cosine & cosine & cosine \\
Peak LR & $5 \times 10^{-4}$ & $5 \times 10^{-5}$ & $5 \times 10^{-5}$ & $1 \times 10^{-5}$ \\
Training steps & 250K & 46K & 19K & 2K \\
Warmup steps & 1K & 200 & 200 & 100 \\
Max sequence length & 16,384 & 32,768 & 32,768 & 8,192 \\
Global batch size & 2,560 & 1,920 & 640 & 128 \\
Numerical precision & \multicolumn{4}{c}{bfloat16} \\
\bottomrule
\end{tabular}
\end{table}

\subsection{Model Architecture Details}
\label{sec:arch_details}

Our native backbone, which we refer to as ``Ours'' throughout the paper, is a ViT-free multimodal large language model built on Qwen3-1.7B~\citep{yang2025qwen3}.
Unlike modular designs that route image features through a separate visual encoder before feeding them into the LLM, our model directly tokenizes raw image patches via a lightweight convolutional Patch Embedding layer and processes them jointly with text tokens in a single autoregressive Transformer, eliminating the modality boundary entirely.

\paragraph{Overall structure.}
The model comprises three functional blocks.
The \textbf{Patch \& Word Embedding} front-end converts each $32{\times}32$ image patch into a visual token through a convolutional Patch Embedding layer, which is then downsampled by a Dense Embedding module to match the LLM hidden dimension $d{=}2048$; text tokens are embedded via the standard word embedding table shared with the LLM.
The \textbf{Pre-Buffer} (Initial Fusion Buffer, IFB) consists of the first $L_{\text{buf}}{=}4$ Transformer layers, where cross-modal semantic alignment and Native-RoPE position registration take place; within the IFB, visual token discriminability remains near zero (see probing analysis in \S\ref{sec:overview}), and no compression is applied.
The \textbf{Post-LLM body} comprises the remaining $L_{\text{post}}{=}36$ layers, which inherit the pre-trained Qwen3-1.7B weights and are augmented with additional H/W-directional QK projections for spatially decoupled attention; \method compression stages are placed within this body.
The total depth is $L_{\text{buf}} + L_{\text{post}} = 40$ layers, with ${\sim}2.2$B parameters.

\paragraph{Attention mechanism.}
We adopt a \emph{mixed} masking strategy: visual tokens attend \emph{bidirectionally} within the same image, enabling full spatial context exchange, while text tokens follow the standard \emph{autoregressive} causal mask. Cross-modal attention is \emph{causal}---text tokens can attend to preceding visual tokens, but not vice versa. This is implemented via FlexAttention with a custom block mask that combines document-level isolation, per-image bidirectional attention, and text-side causal masking.

\paragraph{Native-RoPE.}
Each token is assigned a three-component position ID $(t, h, w)$ corresponding to its sequence position, spatial height, and spatial width, respectively, thereby explicitly preserving both sequential and two-dimensional spatial information. Distinct rotary base frequencies ($\beta_T{=}10^6$, $\beta_H{=}\beta_W{=}10^4$) are applied independently to the three components. The QK projection is further extended with an additional H/W-directional linear layer: the original QK head dimension is 64 for the $T$ component, while an additional 64-dimensional head is introduced and evenly divided into 32 dimensions for H and 32 for W, with each component equipped with its own RMS normalization. This structural design explicitly separates temporal and spatial positional interactions, providing the dimensional decoupling that SDIE (\S\ref{sec:sdie}) directly exploits for direction-aware importance estimation.

\paragraph{Architecture configuration.}
Table~\ref{tab:arch_details} summarizes the full configuration.

\begin{table}[htbp]
\centering
\caption{\textbf{Architecture configuration of our trained model.}}
\label{tab:arch_details}
\begin{tabular}{lc}
\toprule
\textbf{Configuration} & \textbf{Ours-2B} \\
\midrule
Backbone LLM & Qwen3-1.7B \\
Pre-Buffer layers ($L_{\text{buf}}$) & 4 \\
Post-LLM layers ($L_{\text{post}}$) & 36 \\
Total layers & 40 \\
Hidden dim.\ $d$ & 2048 \\
Attention heads & 16 (QK) / 8 (V) \\
Original QK head dim.\ (T) & 64 \\
Added QK head dim.\ (H+W) & 32+32 \\
FFN intermediate dim. & 6144 \\
Patch size & $16 \times 16$ \\
$\beta_P$ / $\beta_H$ / $\beta_W$ & $10^6$ / $10^4$ / $10^4$ \\
Attention type & Mixed (visual bidirectional + text autoregressive) \\
\midrule
Total parameters & $\sim$2.2B \\
\bottomrule
\end{tabular}
\end{table}

\subsection{Evaluation Details}
\label{sec:eval_details}

All benchmark evaluations are conducted using VLMEvalKit~\citep{duan2024vlmevalkit} with default settings (prompts, answer extraction, and scoring). 
We employ the \texttt{gpt-4o-1120} API as LLM judge.
The greedy decoding (temperature $= 0$, \texttt{topK=1}) is used to ensure reproducibility. For MME, we report the sum of perception and reasoning scores. All experiments are run on a single H20-96G GPU.

\section{Theoretical Analysis}
\label{app:theory}

\subsection{Optimality of SDIE Additive-Multiplicative Fusion}
\label{app:theory_Optimality}

We provide a formal justification for the spatial-additive, positional-multiplicative fusion strategy in Eq.~\ref{eq:sdie_fusion}.

\begin{theorem}[Optimality of Additive-Multiplicative Fusion]
\label{thm:sdie_optimal}
Let $s_i^H, s_i^W \geq 0$ be the height and width saliency scores for visual token $i$, and $s_i^T \geq 0$ be its temporal (positional) relevance. Define the token importance as $s_i = f(s_i^H, s_i^W, s_i^T)$. Restrict to the class of separable functions $f = g(s_i^H, s_i^W) \cdot h(s_i^T)$ where $g$ is bilinear in its arguments and $h$ is linear. Under the following desiderata:
\textit{i)} \textbf{Uni-directional preservation}: if $s_i^H \gg 0$ but $s_i^W \approx 0$ (or vice versa), the token should be retained;
\textit{ii)} \textbf{Global irrelevance suppression}: if $s_i^T \approx 0$, the token should be discarded regardless of spatial saliency;
\textit{iii)} \textbf{Monotonicity}: $f$ is non-decreasing in each argument; then, under the additional parsimony restriction that discards the redundant cross term ($\gamma=0$) and the Native-RoPE spatial-symmetry prior ($\alpha=\beta$), $f(s_i^H, s_i^W, s_i^T) = (s_i^H + s_i^W) \times s_i^T$ is the unique (up to positive scaling) form satisfying all three conditions within this function class.
\end{theorem}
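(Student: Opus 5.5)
We parametrize the function class explicitly and then read off the coefficient constraints imposed by each desideratum. Since $g$ is "bilinear in its arguments" in the sense of a polynomial of degree at most one in each of $s^H, s^W$, we write
\[
g(s^H,s^W)=\alpha s^H+\beta s^W+\gamma\, s^H s^W+c_0 ,\qquad h(s^T)=a\, s^T+b ,
\]
so that $f=g\cdot h$. The plan is to show that desiderata \textit{i)}--\textit{iii)} force $b=0$, $c_0=0$, $a>0$, $\alpha,\beta>0$ and $\gamma\ge 0$. The two stated restrictions ($\gamma=0$, $\alpha=\beta$) then leave only $f=\alpha a\,(s^H+s^W)s^T$, which is the claimed form up to positive scaling.

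\textbf{Step 1 (suppression).} Set $s^T=0$. Desideratum \textit{ii)} requires $f(s^H,s^W,0)=b\,g(s^H,s^W)$ to vanish, or at least to fall below every retained score, for all $s^H,s^W\ge0$. Reading ``discarded regardless of spatial saliency'' as $f(\cdot,\cdot,0)\equiv 0$, and noting that $g\not\equiv0$ (otherwise $f\equiv0$, which contradicts \textit{i)}), we get $b=0$. Hence $h(s^T)=a s^T$. Monotonicity \textit{iii)} in $s^T$, together with nontriviality, then gives $a\,g\ge 0$ with $a\neq 0$. Absorbing the sign of $a$ into $g$, we may take $a>0$ and $g\ge 0$ on the orthant.

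\textbf{Step 2 (uni-directional preservation and monotonicity).} Evaluate at $s^W=0$: here $g=\alpha s^H+c_0$. Desideratum \textit{i)} requires this to grow with large $s^H$, so that the token is retained, which forces $\alpha>0$; symmetrically, $\beta>0$. Monotonicity in $s^H$ gives $\partial_{s^H} g=\alpha+\gamma s^W\ge 0$ for all $s^W\ge0$, which forces $\gamma\ge0$.

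For the constant $c_0$, we argue from the all-zero configuration: $s^H=s^W=0$ means the token has no spatial saliency and should not be favored. More precisely, we want $f\to 0$ as all saliencies vanish, so that $f$ ranks tokens purely by evidence, which gives $c_0=0$. I expect this to be the weakest step. A constant offset $c_0 s^T$ is not excluded by \textit{i)}--\textit{iii)} as literally stated, so I would fold it into the parsimony restriction alongside $\gamma=0$, or interpret ``bilinear'' as homogeneous, i.e.\ without a constant term, and state this explicitly.

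\textbf{Step 3 (parsimony and symmetry).} With $\gamma=0$ and $\alpha=\beta>0$, we obtain $f=\alpha a\,(s^H+s^W)s^T$, with $\alpha a>0$. Uniqueness up to positive scaling follows because every free coefficient has been fixed except this overall positive factor.

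For completeness, we verify that $(s^H+s^W)s^T$ indeed satisfies all three desiderata, since it is nonnegative and monotone on the orthant. We also note why the multiplicative alternative $s^H s^W s^T$ is excluded: it would correspond to $\alpha=\beta=0$, $\gamma>0$, and it violates \textit{i)}. This marks the substantive content of the theorem.

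The main obstacle is making \textit{i)} and \textit{ii)} precise enough to pin down coefficients rather than just signs. I will formalize ``retained'' as $f$ being unbounded, or strictly increasing, along the ray $s^H\to\infty$ with $s^W=0$ and $s^T>0$ fixed, and ``discarded'' as $f=0$ at $s^T=0$, and make these definitions explicit at the start of the proof.
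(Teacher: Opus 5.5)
Your proposal is correct and follows essentially the same route as the paper's proof: (ii) forces $h(0)=0$, so $h$ is a positive multiple of $s^T$; (i) rules out the purely multiplicative spatial form and forces $\alpha,\beta>0$; and the parsimony ($\gamma=0$) and symmetry ($\alpha=\beta$) restrictions then pin down $(s^H+s^W)s^T$ up to positive scaling. The paper parametrizes $g$ as $\alpha s^H+\beta s^W+\gamma s^H s^W$ with no constant term and assumes $\alpha,\beta,\gamma\ge0$ rather than deriving these signs from monotonicity as you do; omitting the constant term is exactly the homogeneous reading of ``bilinear'' you propose for eliminating $c_0$.
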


\begin{proof}
We proceed by elimination of alternative functional forms within the separable bilinear class.

\textbf{Step 1: Spatial components must combine additively.}
Consider the general class of separable functions $f = g(s_i^H, s_i^W) \cdot h(s_i^T)$, where $g$ is bilinear, \ie, $g(s_i^H, s_i^W) = \alpha s_i^H + \beta s_i^W + \gamma s_i^H s_i^W$ with $\alpha, \beta, \gamma \geq 0$. Any purely multiplicative form ($\alpha = \beta = 0, \gamma > 0$) yields $g(a, 0) = g(0, b) = 0$ for all $a, b > 0$, violating desideratum (i), so we must have $\alpha, \beta > 0$. The cross term $\gamma s_i^H s_i^W$ vanishes whenever either component is zero, so it contributes nothing to uni-directional preservation (i); it can only rescale tokens that are already salient on both axes, which is not required by any of (i)--(iii). We therefore invoke the parsimony restriction stated in the theorem and set $\gamma = 0$, keeping the minimal form that meets the desiderata. This gives $g(s_i^H, s_i^W) = \alpha s_i^H + \beta s_i^W$ with $\alpha, \beta > 0$. We emphasize that uniqueness is asserted only under this parsimony restriction together with the symmetry prior below; without them, any $\gamma \ge 0$ and any $\alpha,\beta>0$ also satisfy (i)--(iii).

\textbf{Step 2: Temporal component must act as a multiplicative gate.}
Condition (ii) requires $f = 0$ whenever $s_i^T = 0$, regardless of $s_i^H, s_i^W$. This demands $h(0) = 0$. Together with monotonicity (iii) and non-negativity, and restricting $h$ to be linear, we have $h(s_i^T) = s_i^T$ (up to positive scaling).

\textbf{Step 3: Verification of monotonicity.}
For the resulting $f = (\alpha s_i^H + \beta s_i^W) \cdot s_i^T$:
\begin{equation}
\frac{\partial f}{\partial s_i^H} = \alpha \cdot s_i^T \geq 0, \quad
\frac{\partial f}{\partial s_i^W} = \beta \cdot s_i^T \geq 0, \quad
\frac{\partial f}{\partial s_i^T} = \alpha s_i^H + \beta s_i^W \geq 0.
\end{equation}
All partial derivatives are non-negative, confirming condition (iii). Finally, invoking the spatial symmetry prior of Native-RoPE, which encodes height and width with identical base frequencies $\beta_H = \beta_W$ and hence assigns no a priori preference to either axis, forces $\alpha = \beta$; fixing the residual positive scale to $\alpha = \beta = 1$ yields $f = (s_i^H + s_i^W) \times s_i^T$. This pins down the unique representative of the family under the symmetry prior.
\end{proof}

\begin{takeaway}[Takeaway]
Within the separable bilinear function class, the additive-multiplicative fusion is the unique form that simultaneously prevents uni-directional information loss (via addition) and suppresses globally irrelevant tokens (via multiplicative gating). This is consistent with the cross-dimensional heterogeneous redundancy pattern observed in our probing analysis.
\end{takeaway}

\subsection{Compression Error Bound and Pareto Optimality}

\begin{theorem}[Compression Error Bound]
\label{thm:compression_bound}
Let $\pi_\theta$ be a native MLLM with $L$ layers and $n_v$ visual tokens per input. Let $\mathcal{V}^{(l)}$ denote the retained visual tokens after compression at layer $l \in \mathcal{S}$, with retention ratio $r_l$. Define the \textbf{compression error} as:
\begin{equation}
\mathcal{E}_{\text{comp}} = \left\| \text{logit}_{\text{full}} - \text{logit}_{\text{comp}} \right\|_2,
\end{equation}
where $\text{logit}_{\text{full}}$ and $\text{logit}_{\text{comp}}$ are the output logits with and without compression. Assume (A1) the value vectors are uniformly bounded, \ie, $\|\mathbf{v}_i^{(l)}\| \leq B_v$ for all $i, l$; (A2) perturbations introduced at different compression stages propagate approximately independently. Then:
\begin{equation}
\mathcal{E}_{\text{comp}} \leq \sum_{l \in \mathcal{S}} C_l \cdot (1 - r_l) \cdot \left(1 - \frac{\sigma_l}{\sigma_{\max}}\right),
\label{eq:compression_bound}
\end{equation}
where $C_l$ is a layer-wise constant incorporating the Lipschitz constant of the Transformer block at layer $l$ and the value norm bound $B_v$, $\sigma_l$ is the discriminability at layer $l$, and $\sigma_{\max} = \max_{l \geq L_{\text{buf}}} \sigma_l$.
\end{theorem}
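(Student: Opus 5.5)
The plan is a stage-wise telescoping argument. Let $x^{(l)}_{\text{full}}$ and $x^{(l)}_{\text{comp}}$ be the hidden states of the uncompressed and compressed forward passes. I would introduce hybrid runs $\pi^{(k)}$ that apply only the first $k$ compression stages of $\mathcal{S}=\{l_1,l_2,l_3\}$, so that $\pi^{(0)}$ is the full model and $\pi^{(|\mathcal{S}|)}$ is the compressed one. The triangle inequality then gives
\begin{equation*}
\mathcal{E}_{\text{comp}} \le \sum_{k} \bigl\|\text{logit}(\pi^{(k-1)})-\text{logit}(\pi^{(k)})\bigr\|_2 .
\end{equation*}
Each summand is the effect of one extra compression at layer $l_k$, propagated to the output. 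This is where (A2) is used. Each stage's perturbation is measured against the run preceding it, and the interaction terms are discarded because stages are assumed to propagate approximately independently. Without (A2) one would instead obtain a product of Lipschitz factors across stages.

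For a single stage $l$, I would bound the immediate perturbation of each text token's attention output. Removing a set $D_l$ of $(1-r_l)|\mathcal{V}^{(l)}|$ visual tokens changes the attention output of query $j$ by at most $2\sum_{i\in D_l} a_{ji}\, B_v$. This follows by writing the renormalized softmax over the kept set and using (A1). So the per-layer perturbation is at most $2B_v\, m_l$, where $m_l$ is the attention mass carried by dropped tokens. I would then propagate it through layers $l+1,\dots,L$ and the unembedding using the product of block Lipschitz constants, and absorb that product together with $2B_v$ into $C_l$.

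The remaining task is to show $m_l \lesssim (1-r_l)(1-\sigma_l/\sigma_{\max})$, and this is the main obstacle. The first factor is easy: SDIE drops the lowest-scoring tokens, so by an averaging argument the dropped mass is at most the fraction $(1-r_l)$ of the total mass. The discriminability factor cannot follow from (A1)--(A2) alone. I would state it explicitly as a structural link, since it is implicitly part of what $C_l$ and $\sigma_l$ mean here. The idea is that when the token matrix has high effective rank relative to its peak ($\sigma_l\to\sigma_{\max}$), attention mass concentrates on the top-ranked tokens. The mass outside the top $r_l$ fraction then shrinks proportionally to $1-\sigma_l/\sigma_{\max}$, which matches the Saliency Lift probe.

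Concretely, I would first try to posit that the dropped-mass fraction is bounded by a constant times $(1-\sigma_l/\sigma_{\max})$ times the uniform baseline $(1-r_l)$. I would justify this heuristically from the probing analysis, because at the IFB uniform redundancy corresponds to $\sigma_l\approx 0$, and I would fold the constant into $C_l$. Substituting into the telescoped sum yields Eq.~\ref{eq:compression_bound}.
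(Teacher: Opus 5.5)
Your proposal follows the paper's proof essentially step for step: a per-stage bound on the dropped attention mass via (A1), propagation through the later layers with Lipschitz constants absorbed into $C_l$, the discriminability factor posited as an idealized modeling assumption, and a triangle-inequality sum over stages justified by (A2). The paper's form of that assumption is $\bar{\alpha}_{\text{drop}}^{(l)} \le \bar{\alpha}_{\text{all}}^{(l)}(1-\sigma_l/\sigma_{\max})$ with $\bar{\alpha}_{\text{all}}^{(l)} = 1/n_l$, and your exact $2B_v m_l$ bound slightly sharpens the paper's first-order treatment of the softmax renormalization. One slip: the $(1-r_l)$ factor is not an ``averaging'' consequence of top-$K$ selection, because SDIE ranks tokens by fused raw QK scores rather than by softmax attention mass (a token with large $s_i^T$ but $s_i^H, s_i^W \le 0$ gets score zero yet may carry substantial attention), but this is harmless since your posited assumption already contains that factor, exactly as the paper's does.
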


\begin{proof}
\textbf{Step 1: Per-layer perturbation analysis.}
At each compression stage $l$, token removal introduces a perturbation $\Delta^{(l)}$ to the hidden states. By the Lipschitz continuity of the multi-head attention and FFN sub-layers (with constant bounded by the spectral norms of $W_Q, W_K, W_V, W_O$ and the FFN weight matrices):
\begin{equation}
\|\Delta^{(l+1)}\| \leq C_l' \cdot \left\|\sum_{i \notin \mathcal{V}^{(l+1)}} \alpha_{j,i}^{(l)} \mathbf{v}_i^{(l)}\right\|,
\end{equation}
where $\alpha_{j,i}^{(l)}$ is the attention weight from text token $j$ to visual token $i$ at layer $l$, $\mathbf{v}_i^{(l)}$ is the value vector, and $C_l'$ is the Lipschitz constant of the post-attention layers.

\textbf{Step 2: Bounding dropped attention mass via importance scores.}
By the triangle inequality and the value norm bound (A1):
\begin{equation}
\left\|\sum_{i \notin \mathcal{V}^{(l+1)}} \alpha_{j,i}^{(l)} \mathbf{v}_i^{(l)}\right\| \leq B_v \cdot \sum_{i \notin \mathcal{V}^{(l+1)}} \alpha_{j,i}^{(l)}.
\end{equation}
Under the SDIE scoring, the retained tokens are the top-$K^{(l)}$ by importance. The total attention mass on dropped tokens satisfies:
\begin{equation}
\sum_{i \notin \mathcal{V}^{(l+1)}} \alpha_{j,i}^{(l)} = (n_l - K^{(l)}) \cdot \bar{\alpha}_{\text{drop}}^{(l)} = (1 - r_l) \cdot n_l \cdot \bar{\alpha}_{\text{drop}}^{(l)},
\end{equation}
where $n_l = |\mathcal{V}^{(l)}|$ and $\bar{\alpha}_{\text{drop}}^{(l)}$ is the average attention weight of dropped tokens. When discriminability $\sigma_l$ is high, the SDIE importance scores effectively separate salient from non-salient tokens, ensuring $\bar{\alpha}_{\text{drop}}^{(l)} \ll \bar{\alpha}_{\text{retain}}^{(l)}$.

We note one subtlety omitted above: after tokens are dropped, the softmax over the retained set is re-normalized, so the retained weights are rescaled by $Z_l = 1/\sum_{i \in \mathcal{V}^{(l+1)}} \alpha_{j,i}^{(l)} = 1/(1 - \sum_{i \notin \mathcal{V}^{(l+1)}} \alpha_{j,i}^{(l)})$. Since the dropped mass is small under high-discriminability ranking, $Z_l = 1 + O(1-r_l)$, and the induced perturbation is dominated (to first order) by the dropped-mass term already bounded above; the re-normalization contributes only a higher-order $O((1-r_l)^2)$ correction, which we absorb into $C_l$. The bound below is therefore stated to first order in the dropped mass.

\textbf{Step 3: Discriminability correction factor.}
The quality of importance ranking depends on discriminability $\sigma_l$. When $\sigma_l \approx 0$ (within IFB), all tokens have near-identical importance scores and any ranking is essentially random; when $\sigma_l$ is high, SDIE reliably assigns low scores to non-salient tokens. We model this relationship as:
\begin{equation}
\bar{\alpha}_{\text{drop}}^{(l)} \leq \bar{\alpha}_{\text{all}}^{(l)} \cdot \left(1 - \frac{\sigma_l}{\sigma_{\max}}\right),
\end{equation}
where $\bar{\alpha}_{\text{all}}^{(l)} = 1/n_l$. This is an idealized monotone model: the correction factor interpolates between worst-case random dropping ($\sigma_l = 0$, factor $= 1$) and the ideal limit of perfect ranking ($\sigma_l = \sigma_{\max}$, factor $\to 0$). In practice, the bound remains informative because the factor only approaches zero at the deepest layers where SDIE ranking quality is empirically strongest. Absorbing $B_v$, $C_l'$, and $n_l \cdot \bar{\alpha}_{\text{all}}^{(l)}$ into a single constant $C_l = C_l' \cdot B_v$:
\begin{equation}
\|\Delta^{(l+1)}\| \leq C_l \cdot (1 - r_l) \cdot \left(1 - \frac{\sigma_l}{\sigma_{\max}}\right).
\end{equation}

\textbf{Step 4: Aggregation across stages.}
Under assumption (A2) that perturbations from different compression stages propagate approximately independently, summing the per-stage contributions via the triangle inequality yields:
\begin{equation}
\mathcal{E}_{\text{comp}} \leq \sum_{l \in \mathcal{S}} C_l \cdot (1 - r_l) \cdot \left(1 - \frac{\sigma_l}{\sigma_{\max}}\right). \qedhere
\end{equation}
\end{proof}

\begin{corollary}[Pareto Schedule Minimizes the Bound]
\label{cor:pareto}
Consider a fixed total retention budget $R = \prod_{l \in \mathcal{S}} r_l$, equivalently the additive log-budget $\sum_{l \in \mathcal{S}} \log r_l = \log R$. Minimizing the upper bound in Eq.~\ref{eq:compression_bound} subject to this constraint assigns higher retention to low-discriminability layers and lower retention to high-discriminability layers. Concretely, in the small-compression regime where $1 - r_l$ is small we have $\log r_l \approx -(1 - r_l)$, so the constraint becomes $\sum_l (1 - r_l) \approx -\log R$ and the objective $\sum_l C_l (1 - r_l)(1 - \sigma_l/\sigma_{\max})$ is linear in the drop fractions $\{1 - r_l\}$. Minimizing a linear objective over the simplex $\sum_l (1-r_l) = -\log R$ concentrates the available drop budget on the stage with the smallest coefficient $C_l(1 - \sigma_l/\sigma_{\max})$, i.e., the deepest, highest-discriminability layer. When per-stage caps $1 - r_l \le 1 - r_{\min}$ are imposed to keep each stage's perturbation within tolerance, the minimizer fills the drop budget in increasing order of $C_l(1 - \sigma_l/\sigma_{\max})$, which orders retention as
\begin{equation}
r_{l_1}^* \ge r_{l_2}^* \ge \dots \ge r_{l_L}^* \quad\text{whenever}\quad \sigma_{l_1} \le \sigma_{l_2} \le \dots \le \sigma_{l_L},
\end{equation}
\ie, stages with higher discriminability (larger $\sigma_l$) tolerate more aggressive compression (lower $r_l$).
\end{corollary}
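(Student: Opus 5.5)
The plan is to reduce Corollary~\ref{cor:pareto} to a linear program over the drop fractions and read the retention ordering off its greedy solution. Write $d_l = 1 - r_l \in [0, 1-r_{\min}]$ for each $l \in \mathcal{S}$, and define the per-stage coefficient
\[
c_l = C_l\bigl(1 - \sigma_l/\sigma_{\max}\bigr) \ge 0 .
\]
The objective to minimize is then the right-hand side of Eq.~\ref{eq:compression_bound} from Theorem~\ref{thm:compression_bound}, namely $\sum_{l} c_l d_l$. The budget constraint $\sum_l \log r_l = \log R$ becomes $\sum_l \log(1-d_l) = \log R$. In the small-compression regime I will replace $\log(1-d_l)$ by $-d_l$ to first order, which gives the simplex-type constraint $\sum_l d_l = D := -\log R$. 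I will carry the remainder only as an $O(\sum_l d_l^2)$ correction and not attempt to control it exactly, since the statement itself is phrased as an approximation.

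First I treat the uncapped case. Minimizing a nonnegative linear functional over the scaled simplex $\{d \ge 0 : \sum_l d_l = D\}$ attains its minimum at a vertex, specifically $D e_{l^\star}$ with $l^\star = \arg\min_l c_l$. To see this, note that for any feasible $d$,
\[
\sum_l c_l d_l \ge c_{l^\star} \sum_l d_l = c_{l^\star} D .
\]
Identifying $l^\star$ with the deepest, highest-discriminability stage requires that $c_l$ decrease in $l$. I will state this as the needed hypothesis: $\sigma_l$ is nondecreasing along $\mathcal{S}$, as in Finding~1, and $C_l$ is comparable across stages, or more weakly, $c_l$ is ordered inversely to $\sigma_l$.

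Next I add the caps $d_l \le 1 - r_{\min}$. The feasible set becomes the intersection of the hyperplane with a box, and this is a fractional-knapsack problem. I will argue by an exchange step that the optimum fills the stages in increasing order of $c_l$. Suppose some optimal $d$ had $d_a < 1-r_{\min}$ and $d_b > 0$ with $c_a < c_b$. Moving a small mass $\varepsilon$ from $b$ to $a$ keeps the point feasible and changes the objective by $\varepsilon(c_a - c_b) < 0$, which contradicts optimality. Consequently $c_a < c_b$ implies $d_a^* \ge d_b^*$, since $a$ is either saturated or $b$ is empty. Translating back gives $r_a^* \le r_b^*$. Combined with the monotone correspondence between larger $\sigma_l$ and smaller $c_l$, this yields $r_{l_1}^* \ge r_{l_2}^* \ge \cdots$ whenever $\sigma_{l_1} \le \sigma_{l_2} \le \cdots$. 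Ties in $c_l$ admit any split, and there the ordering is still consistent as a non-strict inequality.

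The main obstacle is the step linking the ordering in $\sigma_l$ to the ordering in $c_l$. The corollary mentions only $\sigma_l$, but $c_l$ also contains the layer constants $C_l$. I would first try to absorb a uniform Lipschitz bound, $C_l \equiv C$, into the hypothesis, making the correspondence exact. If that is too strong, I would state the needed condition explicitly as $C_{l_1}(1-\sigma_{l_1}/\sigma_{\max}) \ge C_{l_2}(1-\sigma_{l_2}/\sigma_{\max}) \ge \cdots$. A secondary issue is justifying the linearization $\log r_l \approx -(1-r_l)$. I would note that, because $-\log(1-d) \ge d$, the exact constraint only forces a larger total drop than $D$. The same exchange argument applies verbatim to the exact convex constraint set, using weighted transfers that preserve $\sum_l \log(1-d_l)$, so the ordering conclusion persists beyond first order.
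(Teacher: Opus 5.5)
Your first-order argument is the paper's own. The paper gives no separate proof; the reasoning written into the corollary is exactly your chain: linearize $\log r_l \approx -(1-r_l)$, minimize a linear form over the simplex, and fill the capped drop budget in increasing order of $c_l = C_l(1-\sigma_l/\sigma_{\max})$. Your bound $\sum_l c_l d_l \ge c_{l^\star} D$ and your $\varepsilon$-exchange make this rigorous. You are also right that passing from the $\sigma$-ordering to the $c$-ordering needs a hypothesis on the $C_l$ (e.g.\ $C_l$ constant), which the paper leaves implicit. In addition, under ties in $c_l$ only \emph{some} minimizer is ordered, not every one.

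Your closing remark on the exact budget is not needed for the first-order statement, but it fails as written, for two reasons.

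\emph{The inequality runs the other way.} Since $-\log(1-d)\ge d$, we get $\sum_l d_l \le \sum_l -\log(1-d_l) = D$. So the exact constraint permits \emph{less} total drop than $D$, not more. For the paper's 25\% schedule, $\sum_l d_l \approx 1.08 < \log 4 \approx 1.39$.

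\emph{The transfer step does not go through.} The set $\{d : \prod_l (1-d_l) = R\}$ is not convex. Hold $(1-d_a)(1-d_b)$ fixed and move $\varepsilon$ of drop onto $a$. The bound changes by $\varepsilon\,(c_a - c_b r_b/r_a) + O(\varepsilon^2)$. In exactly the violating case $r_a > r_b$, the sign of this is not fixed by $c_a < c_b$. For example, take $R=0.45$, $c_a=1$, $c_b=1.1$. From $(r_a,r_b)=(0.9,0.5)$, moving drop onto $a$ increases the bound at rate about $0.39$. Yet the ordered point $(0.45,1)$ has a smaller bound ($0.55$ versus $0.65$).

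The underlying reason is concavity. In $u_l=\log r_l$, the bound $\sum_l c_l(1-e^{u_l})$ is concave on the polytope $\{\sum_l u_l=\log R,\ \log r_{\min}\le u_l\le 0\}$. Its stationary points are therefore maximizers, and no local exchange can certify a minimizer.

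Use a global swap instead. Exchanging $r_a\leftrightarrow r_b$ preserves $\prod_l r_l$ and the caps. It changes the bound by $(c_a-c_b)(r_a-r_b)$, which is negative whenever $c_a<c_b$ and $r_a>r_b$. Combine this with concavity: some minimizer sits at a vertex, so at most one stage lies strictly between $r_{\min}$ and $1$. Together these give the ordering and the greedy fill for the exact product budget, with no linearization at all.
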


\begin{takeaway}[Takeaway]
The bound formally justifies two empirical observations: (1) compression should be initiated only after the IFB where $\sigma_l \approx 0$, and (2) the Pareto-adaptive schedule ($r_{l_1} > r_{l_2} > r_{l_3}$ aligned with increasing discriminability) systematically outperforms uniform compression at the same total retention.
\end{takeaway}

\subsection{Information-Theoretic Justification of Spatial Decoupling}

\begin{proposition}[Native-RoPE Component Independence]
\label{thm:mi_decomp}
Let $s_i^H, s_i^W$ be the height and width saliency scores defined in Eq.~\ref{eq:directional_scores}. Under the Native-RoPE encoding with independent base frequencies $\beta_T, \beta_H, \beta_W$ and non-overlapping head-dimension partitions, the height and width components are computed from disjoint subspaces of the QK head dimension $d_h$. This structural independence implies that, in the absence of learned inter-subspace correlations, $s_i^H$ and $s_i^W$ are expected to exhibit low mutual correlation.
\end{proposition}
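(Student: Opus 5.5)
The argument makes the informal phrase ``in the absence of learned inter-subspace correlations'' into an explicit hypothesis and then computes a covariance. The first step is purely structural. By Eq.~\ref{eq:qk_decomp} and the architecture details, each head's query and key split into disjoint coordinate blocks $\boldsymbol{q}_{j,h},\boldsymbol{q}_{j,w}$ and $\boldsymbol{k}_{i,h},\boldsymbol{k}_{i,w}$. Each block is rotated by its own block-diagonal rotary matrix $R_H(h_j-h_i)$ or $R_W(w_j-w_i)$. Hence, by Eq.~\ref{eq:directional_scores},
\begin{equation}
s_i^H = \langle \bar{\boldsymbol{q}}_h, R_H\,\boldsymbol{k}_{i,h}\rangle, \qquad s_i^W = \langle \bar{\boldsymbol{q}}_w, R_W\,\boldsymbol{k}_{i,w}\rangle,
\end{equation}
where $\bar{\boldsymbol{q}}$ denotes the head- and text-averaged query. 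Consequently $s_i^H$ is a measurable function of the height-block variables only, and $s_i^W$ of the width-block variables only. No coordinate is shared between the two.

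Next I will state the modelling assumption precisely. Treat the visual token index $i$, or the input image, as random. Assume the height-block pair $(\boldsymbol{q}_h,\boldsymbol{k}_{i,h},h_i)$ and the width-block pair $(\boldsymbol{q}_w,\boldsymbol{k}_{i,w},w_i)$ have zero cross-covariance, i.e.\ $\mathrm{Cov}(\boldsymbol{k}_{i,h},\boldsymbol{k}_{i,w})=0$ together with the corresponding query condition. Also assume independence of the row and column coordinates $h_i$ and $w_i$, which holds exactly for a uniformly sampled grid position. Under full independence of the blocks, functions of independent variables are independent, so $\mathrm{Corr}(s_i^H,s_i^W)=0$ and $I(s_i^H;s_i^W)=0$.

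Under the weaker assumption of uncorrelated second moments, I will expand $\mathrm{Cov}(s_i^H,s_i^W)$ as a bilinear form in the cross-covariance blocks. Bounding it via Cauchy--Schwarz gives $|\mathrm{Corr}(s_i^H,s_i^W)|\le \|\Sigma_{hw}\|/\sqrt{\|\Sigma_{hh}\|\|\Sigma_{ww}\|}$ up to the norms of the fixed query and rotation factors. This makes ``low correlation'' quantitative: the score correlation is controlled by the learned inter-subspace correlation. When the queries are treated as fixed, the dependence on the random rotations also factorizes, since $R_H$ depends only on $h_i$ and $R_W$ only on $w_i$.

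The main obstacle is that the hidden state feeding both projections is shared. The separate H and W linear maps of the same $\boldsymbol{x}_i$ are generally correlated, since $\mathrm{Cov}=W_h\Sigma_x W_w^\top$. Independence therefore cannot be derived from the architecture; it must be the assumed absence of learned correlation. I will be explicit that the proposition only says correlation is low to the extent that $W_h\Sigma_xW_w^\top$ is small. The covariance bound above is the honest content of the result, and exact independence is only its idealized limit.
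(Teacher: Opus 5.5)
Your structural step is the same as the paper's justification. The $h$- and $w$-blocks occupy disjoint coordinates with their own rotary frequencies, so $s_i^H$ depends only on height-block quantities and $s_i^W$ only on width-block quantities. The paper stops there: it says the two scores ``share no parameters or input dimensions at initialization'', calls this an inductive bias toward independence, and relies on the measured $\bar r=0.073$. Your version improves on this in two ways. First, you turn ``absence of learned inter-subspace correlations'' into an explicit hypothesis, from which $\mathrm{Corr}(s_i^H,s_i^W)=0$ and zero mutual information follow rigorously. Second, you correctly note that both blocks are projections of the same hidden state. The cross-covariance $W_h\Sigma_xW_w^\top$ is therefore generally nonzero, and the paper's ``no shared input dimensions'' is not literally true. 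Your framing makes visible that the proposition is an idealized-limit statement.

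The quantitative refinement does not hold as written.

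\textbf{The correlation bound goes the wrong way.} With fixed query directions $\boldsymbol a,\boldsymbol b$,
$\mathrm{Corr}(s_i^H,s_i^W)=\boldsymbol a^\top\Sigma_{hw}\boldsymbol b/\sqrt{(\boldsymbol a^\top\Sigma_{hh}\boldsymbol a)(\boldsymbol b^\top\Sigma_{ww}\boldsymbol b)}$.
The norms of $\boldsymbol a,\boldsymbol b$ cancel, so ``up to the norms of the query factors'' cannot absorb anything. Bounding the denominator requires $\boldsymbol a^\top\Sigma_{hh}\boldsymbol a\ge\lambda_{\min}(\Sigma_{hh})\|\boldsymbol a\|^2$; replacing this with $\|\Sigma_{hh}\|=\lambda_{\max}$ gives an inequality in the wrong direction. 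A counterexample: take $\boldsymbol k_h=(x,y)$ and $\boldsymbol k_w=x$ with $x,y$ independent, $\mathrm{Var}(x)=\epsilon<1$, $\mathrm{Var}(y)=1$, and $\boldsymbol a=(1,0)^\top$, $b=1$. Your ratio equals $\sqrt{\epsilon}$, but the correlation is $1$. The correct scale-free quantity is the top canonical correlation $\|\Sigma_{hh}^{-1/2}\Sigma_{hw}\Sigma_{ww}^{-1/2}\|$, or the cruder $\|\Sigma_{hw}\|/\sqrt{\lambda_{\min}(\Sigma_{hh})\lambda_{\min}(\Sigma_{ww})}$.

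\textbf{The bilinear expansion needs the rotations absorbed.} When $h_i,w_i$ are random, $s_i^H$ is a product of an $h_i$-dependent vector and $\boldsymbol k_{i,h}$. Zero cross-covariance of the tuples plus $h_i\perp w_i$ then does not control $\mathrm{Cov}(s_i^H,s_i^W)$; after a few layers nothing prevents $\boldsymbol k_{i,h}$ from depending on $w_i$. Your claimed ``factorization'' of the rotations needs full independence, not second moments. The fix is to state the weak assumption on the rotated keys $\tilde{\boldsymbol k}_{i,h}=R_H(h_i)\boldsymbol k_{i,h}$ and $\tilde{\boldsymbol k}_{i,w}=R_W(w_i)\boldsymbol k_{i,w}$, with three adjustments:
\begin{itemize}
\item Take $\bar{\boldsymbol q}_h$ to be the average of the absolutely rotated text queries $R_H(h_j)\boldsymbol q_{j,h}$. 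The relative rotation $R_H(h_j-h_i)$ depends on $j$, so you cannot average unrotated queries first.
\item Stack the heads rather than averaging them into one query.
\item With these choices $\mathrm{Cov}(s_i^H,s_i^W)=\bar{\boldsymbol q}_h^\top\mathrm{Cov}(\tilde{\boldsymbol k}_h,\tilde{\boldsymbol k}_w)\bar{\boldsymbol q}_w$ holds exactly, and the canonical-correlation bound applies.
\end{itemize}
This formulation also removes the need for $h_i\perp w_i$. That assumption fails anyway for the retained set after the first $\mathrm{TopK}$ stage, which is no longer a full product grid.
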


\begin{proof}[Justification]
Recall from Eq.~\ref{eq:qk_decomp} that the Query and Key vectors are partitioned as $\boldsymbol{q}_i = [\boldsymbol{q}_{i,t};\; \boldsymbol{q}_{i,h};\; \boldsymbol{q}_{i,w}]$ and $\boldsymbol{k}_j = [\boldsymbol{k}_{j,t};\; \boldsymbol{k}_{j,h};\; \boldsymbol{k}_{j,w}]$, where the $h$ and $w$ components occupy disjoint dimension ranges equipped with independent rotary frequencies $\Theta_H$ and $\Theta_W$. Since $s_i^H = \frac{1}{|\mathcal{T}|}\sum_{j \in \mathcal{T}} \boldsymbol{q}_{j,h} \cdot \boldsymbol{k}_{i,h}^{\top}$ depends only on the $h$-subspace and $s_i^W$ depends only on the $w$-subspace, the two scores share no parameters or input dimensions at initialization. While training may introduce indirect correlations through the shared loss, the disjoint subspace structure provides a strong inductive bias toward independence.
\end{proof}

\textbf{Empirical verification.} On our calibration set of 1,000 images, the average absolute Pearson correlation between $(s_i^H, s_i^W)$ pairs is $\bar{r} = 0.073 \pm 0.021$ across all post-IFB layers. This near-zero correlation confirms that the height and width saliency scores capture independent redundancy modes in practice, validating the additive fusion in SDIE.

\subsection{Finite-Grid Approximation Quality for Pareto Front}

\begin{proposition}[Grid Approximation Bound]
\label{prop:grid}
Let $\mathcal{P}^*$ be the true continuous Pareto-optimal set and $\hat{\mathcal{P}}^*_\delta$ be the Pareto set obtained from grid enumeration with step size $\delta$. Then for any $\boldsymbol{r}^* \in \mathcal{P}^*$, there exists $\hat{\boldsymbol{r}} \in \hat{\mathcal{P}}^*_\delta$ such that:
\begin{equation}
\|\boldsymbol{r}^* - \hat{\boldsymbol{r}}\|_\infty \leq \frac{\delta}{2}, \quad \text{and} \quad \forall\, k:\; |\mathcal{O}_k(\boldsymbol{r}^*) - \mathcal{O}_k(\hat{\boldsymbol{r}})| \leq L_k \cdot \frac{\delta \sqrt{|\mathcal{S}|}}{2},
\end{equation}
where $L_k$ is the Lipschitz constant of objective $\mathcal{O}_k$. For our setting ($\delta = 0.05$, $|\mathcal{S}| = 3$), the maximum per-objective approximation error is bounded by $0.043 \cdot L_k$.
\end{proposition}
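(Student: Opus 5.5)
The approach is to round to the nearest grid point and then transfer the rounding error to the objectives through the Lipschitz assumption; the real work is checking that the rounded point can be taken inside $\hat{\mathcal{P}}^*_\delta$.

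\textbf{Step 1: rounding in decision space.} Fix $\boldsymbol{r}^*\in\mathcal{P}^*$. Round each coordinate $r^*_l$ to the nearest element of the grid $\mathcal{R}_\delta$. This gives a point $\boldsymbol{g}\in\mathcal{R}_\delta^{|\mathcal{S}|}$ with $|r^*_l-g_l|\le\delta/2$ for every $l$, hence $\|\boldsymbol{r}^*-\boldsymbol{g}\|_\infty\le\delta/2$. For the non-uniform set $\mathcal{G}$ of \S\ref{sec:impl_details}, I would note that inserting the values $k/9$ only shrinks consecutive gaps. The maximal gap therefore stays $\le 0.05=\delta$, so the same rounding bound holds; near the endpoints the bound requires $r^*_l$ to lie within the grid's range $[r_{\min},1]$ (which rounding cannot guarantee above $0.90$; see Step 3).

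\textbf{Step 2: objective bound.} Use the norm comparison $\|\boldsymbol{x}\|_2\le\sqrt{|\mathcal{S}|}\,\|\boldsymbol{x}\|_\infty$, and assume each $\mathcal{O}_k$ is $L_k$-Lipschitz with respect to $\|\cdot\|_2$. Then
\[
|\mathcal{O}_k(\boldsymbol{r}^*)-\mathcal{O}_k(\boldsymbol{g})|\le L_k\|\boldsymbol{r}^*-\boldsymbol{g}\|_2\le L_k\,\delta\sqrt{|\mathcal{S}|}/2 .
\]
Plugging in $\delta=0.05$ and $|\mathcal{S}|=3$ gives $0.025\sqrt3\approx0.0433$. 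This is the stated $0.043\,L_k$ (exactly $0.025\sqrt3\,L_k\approx0.0433\,L_k$).

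\textbf{Step 3 (main obstacle): membership in $\hat{\mathcal{P}}^*_\delta$.} The rounded point $\boldsymbol{g}$ is a grid point, but it need not be nondominated among grid points. It may also violate the budget constraint $\prod_l r_l\le r_{\text{target}}$. I would try to handle both issues as follows.

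\emph{Feasibility.} Assume $\boldsymbol{r}^*$ satisfies the budget with slack at least the rounding perturbation of the product, which is of order $|\mathcal{S}|\delta/2$. Alternatively, round every coordinate downward. That keeps the point feasible but weakens the constant from $\delta/2$ to $\delta$, so the grid-gap argument would only give $\delta$ without the slack assumption.

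\emph{Dominance.} If $\boldsymbol{g}$ is dominated, the grid is finite and dominance is a strict partial order. Hence $\boldsymbol{g}$ is dominated by some $\hat{\boldsymbol{r}}\in\hat{\mathcal{P}}^*_\delta$, and this point is at least as good in every objective. This yields the one-sided bound $\mathcal{O}_k(\hat{\boldsymbol{r}})\le\mathcal{O}_k(\boldsymbol{r}^*)+L_k\delta\sqrt{|\mathcal{S}|}/2$. The reverse inequality follows from the Pareto optimality of $\boldsymbol{r}^*$ only in a weak sense: $\hat{\boldsymbol{r}}$ cannot strictly improve on $\boldsymbol{r}^*$ in all coordinates simultaneously.

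So I expect the honest form of the statement to be an objective-space covering (Hausdorff-type) result, with the decision-space inequality $\|\boldsymbol{r}^*-\hat{\boldsymbol{r}}\|_\infty\le\delta/2$ guaranteed only when $\boldsymbol{g}$ itself is grid-Pareto. I would state this restriction explicitly, rather than claim the two-sided decision-space bound in general.
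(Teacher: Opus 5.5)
Your Steps 1--2 are the same argument the paper relies on: nearest grid point, $\|\cdot\|_2\le\sqrt{|\mathcal{S}|}\,\|\cdot\|_\infty$, then the Lipschitz transfer giving $0.025\sqrt3\,L_k\approx0.0433\,L_k$. Your Step 3 takes up the same issue the paper raises in its remark after the proposition, namely that the rounded point may be dominated on the grid. The two accounts part ways in how that issue is resolved. The paper replaces the rounded point by a retained dominator and asserts that this ``only tightens the per-objective gap,'' so that ``the stated bound continues to hold''; it concedes only the decision-space coordinate guarantee.

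Your reading is the correct one. Dominance yields only $\mathcal{O}_k(\hat{\boldsymbol r})\le\mathcal{O}_k(\boldsymbol r^*)+L_k\delta\sqrt{|\mathcal{S}|}/2$. Both the absolute-value bound and $\|\boldsymbol r^*-\hat{\boldsymbol r}\|_\infty\le\delta/2$ can actually fail. The following example settles it:
\begin{itemize}
\item Take one variable $r\in[0,1]$, grid $\{0,\tfrac12,1\}$ (so $\delta=\tfrac12$), and no budget constraint.
\item Set $\mathcal{O}_1(r)=-Lr$, and $\mathcal{O}_2(r)=L|r-\tfrac{3}{10}|$ on $[0,\tfrac12]$, $\mathcal{O}_2(r)=\tfrac{L}{5}(\tfrac32-r)$ on $[\tfrac12,1]$. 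Both are $L$-Lipschitz.
\item Since $\mathcal{O}_2>0$ away from $r^*=\tfrac{3}{10}$, we have $r^*\in\mathcal{P}^*$, and its nearest grid point is $\tfrac12$.
\item The grid point $1$, with values $(-L,\tfrac{L}{10})$, dominates both $\tfrac12$ (values $(-\tfrac L2,\tfrac L5)$) and $0$ (values $(0,\tfrac{3L}{10})$). Hence $\hat{\mathcal{P}}^*_\delta=\{1\}$.
\item Then $|r^*-\hat r|=0.7>\delta/2$ and $|\mathcal{O}_1(r^*)-\mathcal{O}_1(\hat r)|=0.7L>L\delta/2$, while your one-sided bound still holds ($-L\le-0.05L$ and $0.1L\le0.25L$).
\end{itemize}
So the proposition as stated is false, not merely unproved. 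What can actually be proved is your restricted version: the two-sided bound when the rounded point is itself feasible and grid-nondominated, and the one-sided bound otherwise. Your feasibility point is also a real hole in the paper's argument, which never addresses $\prod_l r_l\le r_{\text{target}}$. Your fix of rounding down, at the cost of $\delta$ instead of $\delta/2$, is sound.

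Two small corrections to your own write-up:
\begin{itemize}
\item You call the surviving statement ``Hausdorff-type,'' but the same example puts the image of $r^*$ at distance about $0.7L$ from the image of $\hat{\mathcal{P}}^*_\delta$, so even a directed Hausdorff bound fails. What your argument gives is an additive $\varepsilon$-dominance (additive $\varepsilon$-Pareto cover) guarantee, with $\varepsilon_k=L_k\delta\sqrt{|\mathcal{S}|}/2$, or $L_k\delta\sqrt{|\mathcal{S}|}$ under downward rounding.
\item The endpoint issue in Step 1 is not resolved by Step 3, despite your cross-reference. It needs its own hypothesis, for instance $\boldsymbol r^*\in[0.10,0.90]^{3}$ for the set $\mathcal{G}$, or the range $[r_{\min},1]$ for $\mathcal{R}_\delta$.
\end{itemize}
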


This bound confirms that our discrete grid search with 4,913 candidates provides a close approximation to the continuous Pareto front. One caveat is that the nearest grid point $\hat{\boldsymbol{r}}$ to a continuous optimum $\boldsymbol{r}^*$ need not itself survive Pareto filtering: it may be dominated by another grid point that is better on all objectives. In that case the dominating point is at least as good on every $\mathcal{O}_k$, so replacing $\hat{\boldsymbol{r}}$ by the retained dominator only tightens the per-objective gap $|\mathcal{O}_k(\boldsymbol{r}^*) - \mathcal{O}_k(\cdot)|$ in the direction of improvement; the stated bound therefore continues to hold for the point actually returned by the search (the retained dominator), and the guarantee is on objective values rather than on the retention of any specific grid coordinate. In practice, the Lipschitz constants $L_k$ are moderate: the performance objective varies smoothly with retention ratio (empirically $L_{\text{perf}} \approx 15$ benchmark points per unit retention change on our calibration set), yielding a maximum approximation error of $\sim$0.65 points, which is negligible relative to inter-benchmark performance variance.

\section{Per-Stage Training Results}
\label{app:per_stage}

\subsection{Stage-wise Performance: Original vs.\ Compressed}

Table~\ref{tab:per_stage_results} reports the performance of the model after each training stage under three conditions: no compression (full tokens), 50\% token retention, and 25\% token retention. Since the model lacks instruction-following ability after pre-training, benchmark evaluation is not meaningful at that stage; we therefore report results starting from mid-training. This reveals how the model's tolerance to compression evolves throughout training.

\begin{table}[H]
\centering
\caption{\textbf{Per-stage benchmark performance: original vs.\ compressed.} We report results after each training stage (starting from mid-training) on eight benchmarks. Pre-training results are omitted because the model has not yet acquired instruction-following ability at that stage. \pos{Green}: improvement over the uncompressed baseline of that stage; \negitive{red}: degradation.}
\label{tab:per_stage_results}
\resizebox{\linewidth}{!}{
\begin{tabular}{ll cccccccc}
\toprule
\textbf{Stage} & \textbf{Retention} & \textbf{MMB} & \textbf{AI2D} & \textbf{MME} & \textbf{HalBench} & \textbf{RWQA} & \textbf{MMStar} & \textbf{SEED} & \textbf{MathVista} \\
\midrule
\rowcolor{gray!8}
\multicolumn{10}{c}{\textbf{Stage 2: Mid-training ($\sim$89M samples)}} \\
\hdashline
\multirow{3}{*}{Mid-train}
  & 100\% & 58.4 & 69.3 & 1782 & 35.6 & 52.1 & 43.8 & 62.4 & 42.1 \\
  & 50\%  & \negitive{-2.1\%} {\scriptsize(57.2)} & \pos{+0.4\%} {\scriptsize(69.6)} & \negitive{-1.3\%} {\scriptsize(1759)} & \negitive{-2.8\%} {\scriptsize(34.6)} & \negitive{-1.5\%} {\scriptsize(51.3)} & \negitive{-3.4\%} {\scriptsize(42.3)} & \negitive{-1.0\%} {\scriptsize(61.8)} & \negitive{-4.5\%} {\scriptsize(40.2)} \\
  & 25\%  & \negitive{-5.8\%} {\scriptsize(55.0)} & \negitive{-2.7\%} {\scriptsize(67.4)} & \negitive{-4.2\%} {\scriptsize(1707)} & \negitive{-7.3\%} {\scriptsize(33.0)} & \negitive{-4.6\%} {\scriptsize(49.7)} & \negitive{-8.2\%} {\scriptsize(40.2)} & \negitive{-4.8\%} {\scriptsize(59.4)} & \negitive{-11.6\%} {\scriptsize(37.2)} \\
\midrule
\rowcolor{gray!8}
\multicolumn{10}{c}{\textbf{Stage 3: Supervised Fine-Tuning ($\sim$12.8M samples)}} \\
\hdashline
\multirow{3}{*}{SFT}
  & 100\% & 67.8 & 79.2 & 2041 & 41.3 & 60.2 & 51.6 & 70.4 & 53.8 \\
  & 50\%  & \pos{+1.0\%} {\scriptsize(68.5)} & \pos{+2.8\%} {\scriptsize(81.4)} & \pos{+0.3\%} {\scriptsize(2047)} & \negitive{-0.7\%} {\scriptsize(41.0)} & \pos{+0.7\%} {\scriptsize(60.6)} & \negitive{-2.3\%} {\scriptsize(50.4)} & \pos{+0.4\%} {\scriptsize(70.7)} & \negitive{-3.2\%} {\scriptsize(52.1)} \\
  & 25\%  & \negitive{-0.6\%} {\scriptsize(67.4)} & \pos{+1.1\%} {\scriptsize(80.1)} & \negitive{-1.2\%} {\scriptsize(2017)} & \negitive{-3.9\%} {\scriptsize(39.7)} & \negitive{-1.8\%} {\scriptsize(59.1)} & \negitive{-5.8\%} {\scriptsize(48.6)} & \negitive{-1.6\%} {\scriptsize(69.3)} & \negitive{-10.8\%} {\scriptsize(48.0)} \\
\midrule
\rowcolor{gray!8}
\multicolumn{10}{c}{\textbf{Stage 4: Reinforcement Learning ($\sim$12K samples)}} \\
\hdashline
\multirow{3}{*}{RL (Final)}
  & 100\% & 69.7 & 80.8 & 2084 & 42.7 & 61.8 & 52.9 & 71.8 & 55.7 \\
  & 50\%  & \pos{+2.2\%} {\scriptsize(71.2)} & \pos{+4.0\%} {\scriptsize(84.0)} & \pos{+0.6\%} {\scriptsize(2098)} & \pos{+0.5\%} {\scriptsize(42.9)} & \pos{+1.3\%} {\scriptsize(62.6)} & \negitive{-1.7\%} {\scriptsize(52.0)} & \pos{+1.0\%} {\scriptsize(72.5)} & \negitive{-2.5\%} {\scriptsize(54.3)} \\
  & 25\%  & \pos{+1.4\%} {\scriptsize(70.7)} & \pos{+2.8\%} {\scriptsize(83.1)} & \pos{+0.4\%} {\scriptsize(2092)} & \negitive{-2.5\%} {\scriptsize(41.6)} & \negitive{-1.0\%} {\scriptsize(61.2)} & \negitive{-4.7\%} {\scriptsize(50.4)} & \negitive{-0.6\%} {\scriptsize(71.4)} & \negitive{-9.7\%} {\scriptsize(50.3)} \\
\bottomrule
\end{tabular}
}
\end{table}

At the mid-training stage, the model has acquired basic vision-language alignment, and compression tolerance already shows encouraging signs across multiple benchmarks: at 50\% retention, most benchmarks degrade by less than 3\%, confirming that cross-modal alignment enables meaningful token discriminability. Nonetheless, at 25\% retention degradation remains noticeable, reaching up to $-11.6\%$ on MathVista, indicating that mid-training alone does not yet produce sufficiently sharp importance differentiation to support more aggressive token pruning.

After SFT, the model begins to \emph{benefit} from compression on several benchmarks (MMB $+1.0\%$, AI2D $+2.8\%$ at 50\% retention), suggesting that instruction tuning reinforces the discriminability structure by sharpening task-relevant attention patterns. At the final RL stage, compression yields consistent gains on six of eight benchmarks at 50\% retention and near-lossless performance at 25\%, validating that RL further sharpens token importance differentiation. This progressive improvement in compression tolerance across training stages provides empirical support for our theoretical compression bound (Theorem~\ref{thm:compression_bound}): as discriminability $\sigma_l$ increases with training, the correction factor $(1 - \sigma_l/\sigma_{\max})$ decreases and the bound tightens.

\subsection{Statistical Significance Analysis}
\label{app:significance}

\paragraph{Protocol.} 
We select three benchmarks that jointly cover the core capability axes probed in the main experiments.
\textbf{MMB} for general multimodal perception and compositional understanding, \textbf{AI2D} for structured reasoning that is sensitive to fine-grained spatial layout, and \textbf{HallBench} for hallucination robustness. This selection is determined \emph{a priori} from the capability taxonomy used throughout the paper, not post-hoc from the measured variances.
We draw 3 independently sampled calibration sets (different random seeds, each containing 1{,}000 images from the same distribution described in \S\ref{sec:impl_details}). For each calibration set, we run PACS end-to-end and obtain one searched schedule. Each schedule is then evaluated on \textbf{three representative benchmarks covering distinct capability domains}, i.e., MMB, AI2D and HallBench, at all three retention levels (25\%, 50\%, 75\%), obtaining 27 independent measurements. 
We report mean~$\pm$~standard deviation in Table~\ref{tab:stat_sign_analysis}.

\begin{table}{}
\centering
\caption{Statistical Significance Analysis on \textbf{3 independently sampled calibration sets} on MMB, AI2D and HallBench.}
\label{tab:stat_sign_analysis}
\begin{tabular}{llccc}
\toprule
\textbf{Configuration} & \textbf{Ret.} & \textbf{MMB} & \textbf{AI2D} & \textbf{HallBench} \\
\midrule
\rowcolor{gray!10} Full tokens & 100\% & 69.7 & 80.8 & 42.7 \\
\midrule
\multirow{3}{*}{PACE} & 25\% & \positive{+1.4\%} \real{(70.7 ± \textbf{0.06})}  & \positive{+2.9\%} \real{(83.1 ± \textbf{0.06})}  & \negative{-2.5\%} \real{(41.6 ± \textbf{0.1})}  \\
 & 50\% & \positive{+2.2\%} \real{(71.2 ± \textbf{0.06}))} & \positive{+4.0\%} \real{(84.0 ± \textbf{0.1}))} & \positive{+0.5\%} \real{(42.9 ± \textbf{0.1}))} \\
 & 75\% & \positive{+3.2\%} \real{(71.9 ± \textbf{0.15}))} & \positive{+5.1\%} \real{(84.9 ± \textbf{0.06}))} & \positive{+3.9\%} \real{(44.4 ± \textbf{0.06}))} \\
\bottomrule
\end{tabular}
\end{table}

\paragraph{Results.} These results support the statistical significance of our main claims.
Across all 27 measurements, the largest observed standard deviation is \textbf{\textit{0.15}}, indicating that PACS produces stable schedules across calibration resamples.
On every (benchmark, retention) cell, the PACE-vs-baseline gap exceeds the observed std by a factor of at least \textbf{\textit{14.7}}$\times$.

\section{Pareto Front Construction and Compression Configurations}
\label{app:pareto_config}

\subsection{Pareto Front Construction Details}
\label{sec:pareto_appendix}

The Pareto front construction proceeds as follows. First, a single forward pass on 1,000 unlabeled calibration images extracts per-layer SDIE scores ($s_i^T$, $s_i^H$, $s_i^W$) for every layer $l \geq L_{\text{buf}}$. Second, per-layer discriminability $\sigma_l$ is computed to guide compression-layer search. Third, the search jointly optimizes layer positions and per-stage retention ratios. Three ordered stages $\mathcal{S} = {l_1, l_2, l_3}$ ($l_1 < l_2 < l_3$) are selected from the 36 post-IFB layers, giving $\binom{36}{3} = 7{,}140$ configurations. Each retention ratio $r_l$ is drawn from a shared 25-value set $\mathcal{G} = {0.10, 0.15, \ldots, 0.90} \cup {k/9}{k=1}^{8}$, yielding $25^3 = 15{,}625$ ratio combinations and $\sim 1.12 \times 10^{8}$ total candidates. Fourth, for each candidate $(\mathcal{S}, \boldsymbol{r})$, the three objectives in Eq.~\ref{eq:pareto_opt} are evaluated on the calibration set: $\mathcal{O}{\text{perf}}$, the average benchmark-score degradation $\frac{1}{|\mathcal{B}|}\sum_{b}(S_b^{\text{full}} - S_b^{\text{comp}})$, estimated on the labeled subset with per-token output-logit KL divergence as an inexpensive unlabeled proxy during coarse traversal; $\mathcal{O}{\text{eff}} = 1 - \prod{l \in \mathcal{S}} r_l$; and $\mathcal{O}{\text{cov}}$ from Eq.~\ref{eq:coverage}. Fifth, the non-dominated set $\mathcal{P}^*$ is extracted by standard Pareto dominance after discarding candidates with $\mathcal{O}{\text{cov}} > \tau_{\text{cov}}$. Guided by \textbf{Finding~1} (depth-progressive redundancy accumulation), we enforce $r_{l_1} > r_{l_2} > r_{l_3}$, substantially reducing the effective search space. Finally, for target retention $r_{\text{target}}$, the operating point is selected from $\mathcal{P}^*$ subject to $\prod_l r_l \leq r_{\text{target}}$ while minimizing $\mathcal{O}_{\text{perf}}$. The procedure completes in approximately 10 GPU-minutes on a single H20-96G.

\subsection{Effect of Spatial Coverage Constraint}

\begin{wraptable}{r}{0.48\textwidth}
\vspace{-12pt}
\centering
\small
\caption{\textbf{Effect of spatial coverage constraint.} ``w/ Cov.'' includes $\mathcal{O}_{\text{cov}}$ in Pareto optimization; ``w/o Cov.'' removes it.}
\label{tab:coverage_ablation}
\begin{adjustbox}{max width=\linewidth}
\setlength{\tabcolsep}{3pt}
\begin{tabular}{ll cccc}
\toprule
\textbf{Config.} & \textbf{Ret.} & \textbf{RWQA} & \textbf{HalBench} & \textbf{SEED} & \textbf{MMB} \\
\midrule
\textit{w/} Cov. (Ours) & 25\% & \textbf{61.2} & \textbf{41.6} & \textbf{71.4} & \textbf{70.7} \\
\textit{w/o} Cov. & 25\% & 58.7 & 39.2 & 69.8 & 69.1 \\
\midrule
\textit{w/} Cov. (Ours) & 50\% & \textbf{62.6} & \textbf{42.9} & \textbf{72.5} & \textbf{71.2} \\
\textit{w/o} Cov. & 50\% & 61.4 & 42.1 & 72.0 & 70.8 \\
\bottomrule
\end{tabular}
\end{adjustbox}
\vspace{-10pt}
\end{wraptable}

The spatial coverage constraint $\mathcal{O}_{\text{cov}}$ is an intrinsic component of the Pareto optimization formulation. Without this constraint, aggressive token dropping can concentrate retained tokens in salient regions while leaving large spatial gaps, which degrades performance on tasks requiring global spatial understanding such as scene description and spatial grounding. Table~\ref{tab:coverage_ablation} quantifies the impact of this constraint.

The coverage constraint yields a 1.6 to 2.5 percentage point improvement at 25\% retention on spatially sensitive benchmarks (RWQA, HalBench), validating its role as an intrinsic Pareto objective rather than a post-hoc correction. It maintains balanced spatial distribution among retained tokens, preventing loss of peripheral visual information critical for holistic scene understanding.

\subsection{Compression Configuration Details}
\begin{wraptable}{r}{0.5\textwidth}
\vspace{-14pt}
\centering
\small
\caption{\textbf{Compression configurations for three retention targets.} Per-stage ratios are from Pareto optimization over performance, efficiency, and spatial coverage. Stages are at layers 4, 13, 20.}
\label{tab:compression_config}
\begin{adjustbox}{max width=\linewidth}
\begin{tabular}{c ccc c}
\toprule
\textbf{Overall Ret.} & $r_{l_1}$ (L4) & $r_{l_2}$ (L13) & $r_{l_3}$ (L20) & \textbf{Final Ret.\%} \\
\midrule
75\% & 95\% & 90\% & 87\% & 74.36\% \\
50\% & 85\% & 78\% & 75\% & 49.73\% \\
25\% & 75\% & 66.7\% & 50\% & 25.01\% \\
\bottomrule
\end{tabular}
\end{adjustbox}
\vspace{-19pt}
\end{wraptable}

Table~\ref{tab:compression_config} summarizes the detailed compression configurations for the three retention targets. All configurations employ three sequential compression stages at layers $L_{\text{buf}}$, $L_{\text{buf}}$+9, and $L_{\text{buf}}$+16 (corresponding to layers 4, 13, and 20 in our 40-layer architecture), whose positions are determined by the structural discriminability curve: the first stage is placed at the inflection point where discriminability begins to rise, the second at the midpoint of the ascending segment, and the third on the high-discriminability plateau where tokens exhibit sufficient structural redundancy and aggressive compression can be safely applied without degrading performance.

The per-stage retention ratios follow a consistent pattern: $r_{l_1} > r_{l_2} > r_{l_3}$, meaning that earlier stages retain more tokens while later stages apply more aggressive compression. This pattern is directly predicted by Corollary~\ref{cor:pareto}: since discriminability increases monotonically with depth, deeper layers with higher $\sigma_l$ can tolerate lower retention ratios without exceeding the compression error bound. At 75\% overall retention, the compression is mild across all stages (95\%, 90\%, 87\%, product $\approx$74.4\%), reflecting the limited redundancy that needs to be removed. At 50\% overall retention, the schedule transitions to moderately aggressive compression (85\%, 78\%, 75\%) while preserving near-lossless performance on most benchmarks. At 25\% overall retention, the schedule becomes substantially more aggressive at the final stage (75\%, 66.7\%, 50\%), exploiting the high discriminability at deep layers to safely discard half of the remaining tokens. In all cases the product of per-stage ratios satisfies the overall retention target $\prod_l r_l \le r_{\text{target}}$.

\begin{figure}[t]
  \centering
  \includegraphics[width=\linewidth]{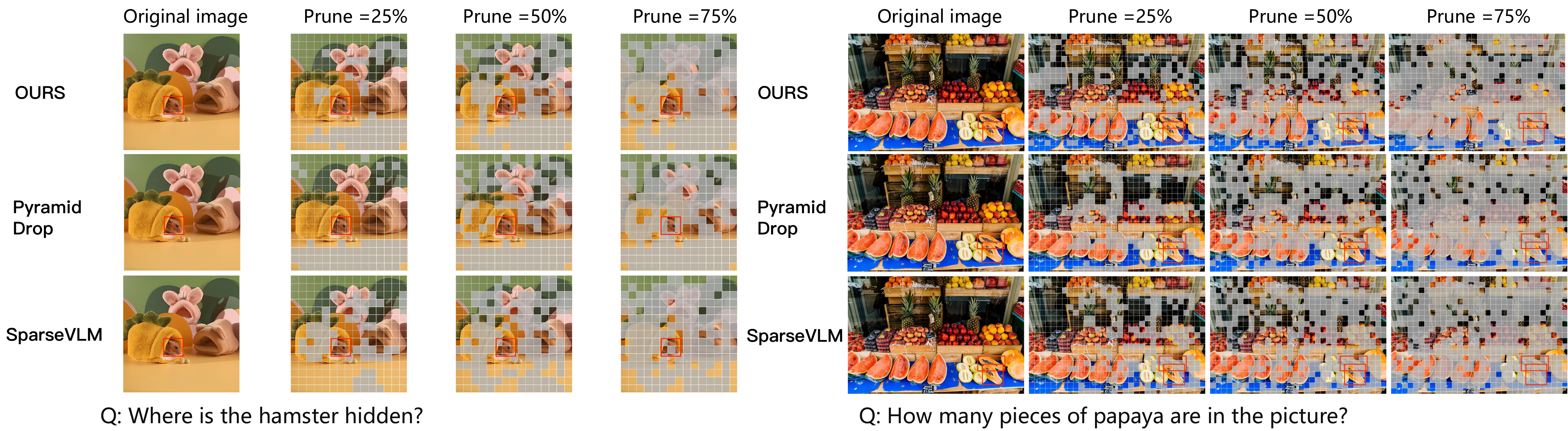}
\caption{\textbf{Qualitative comparison of token retention (Part~1).}
\textit{Left} (object localization): our method retains tokens covering the camouflaged target and its spatial context, while PyramidDrop scatters tokens uniformly and SparseVLM over concentrates on a single cluster.
\textit{Right} (counting): our method distributes tokens across all fruit instances, whereas baselines exhibit concentration bias that drops peripheral ones.}
  \label{fig:case_appendix_1}
\end{figure}

\begin{figure}[t]
  \centering
  \includegraphics[width=\linewidth]{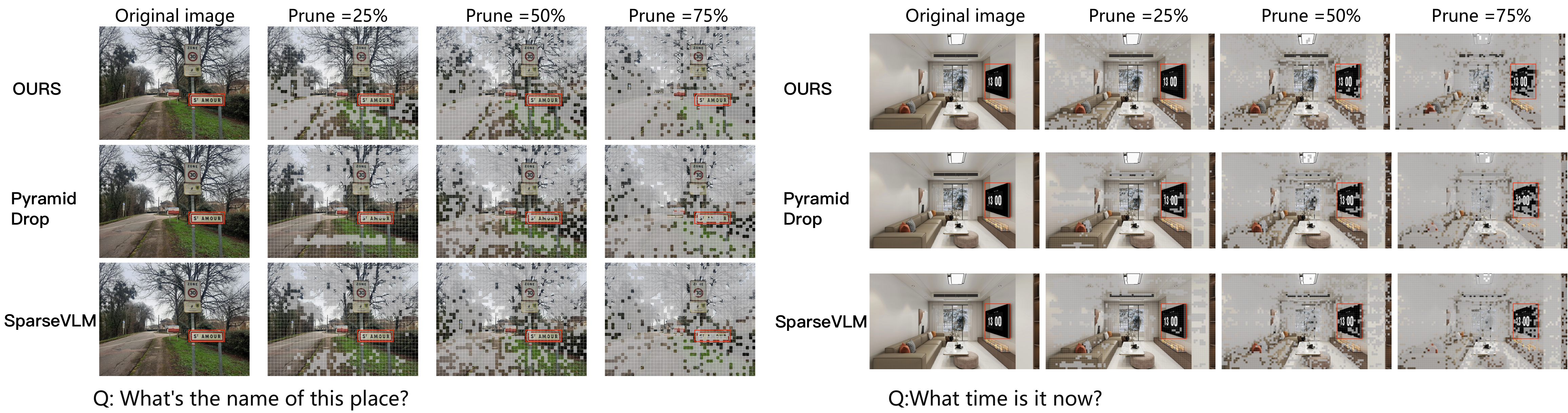}
\caption{\textbf{Qualitative comparison of token retention (Part~2).}
\textit{Left} (text recognition): our method preserves sign text via $s^H$, while baselines introduce spatial gaps in lettering at high pruning.
\textit{Right} (fine grained detail): our method retains clock display tokens through $s^P$, whereas baselines progressively lose critical pixels.}
  \label{fig:case_appendix_2}
\end{figure}

\section{Additional Qualitative Case Studies}
\label{app:cases}

Figures~\ref{fig:case_appendix_1} and~\ref{fig:case_appendix_2} visualize the retained visual tokens of \method alongside two representative transferred baselines, PyramidDrop$^\dagger$ and SparseVLM$^\dagger$, at three pruning ratios (25\%, 50\%, 75\%) on four diverse scenes.
Colored patches indicate retained tokens, while grayed-out regions denote pruned tokens; red bounding boxes highlight task-critical regions.

\paragraph{Object localization and counting (Figure~\ref{fig:case_appendix_1}).}
In the ``hamster'' scene (left), \method consistently retains tokens covering both the camouflaged hamster body and its surrounding plush context at all pruning levels, enabling correct localization even when 75\% of visual tokens are removed.
PyramidDrop scatters retained tokens relatively uniformly across the image without concentrating on the target region, while SparseVLM over focuses on a single salient cluster around the hamster head, losing the broader contextual cues that are necessary under aggressive pruning.
In the ``papaya'' scene (right), accurate counting requires preserving all individual fruit instances that are distributed across the market stall.
\method distributes retained tokens broadly, covering every papaya group owing to the spatial coverage constraint in PACS, whereas both baselines exhibit spatial concentration bias: PyramidDrop clusters tokens in the upper center region, and SparseVLM over weights the foreground, causing peripheral papaya instances to be dropped once pruning reaches 50\% and beyond.

\paragraph{Text recognition and fine-grained detail (Figure~\ref{fig:case_appendix_2}).}
For the road-sign scene (left, ``What's the name of this place?''), \method preserves tokens on the sign text ``St Amour'' across all pruning levels, benefiting from the height-directional saliency component $s^H$ that naturally emphasizes horizontally aligned text regions.
PyramidDrop retains scattered patches that partially overlap with the sign but introduces spatial gaps in the lettering area at 75\% pruning, degrading readability.
SparseVLM maintains reasonable sign coverage at low pruning but progressively loses critical letter regions at higher pruning ratios, as its single-score importance ranking cannot distinguish directional text saliency.
In the indoor clock scene (right, ``What time is it now?''), the task-critical region is the small digital display on the TV screen.
\method reliably retains tokens on the clock face across all pruning levels, benefiting from the positional multiplicative gate $s^P$ that up-weights tokens receiving strong text-query attention.
Both baselines retain the clock area at low pruning but progressively lose display pixels at 50\% and 75\%, leading to incorrect time readings.

These qualitative results corroborate the quantitative findings in Table~\ref{tab:compression_methods}: \method's spatially decoupled importance estimation preserves task-relevant tokens more effectively than transferred modular baselines, particularly under aggressive compression.

\section{Extended Experimental Results}
\label{app:extended_exp}

\subsection{Extended Compression Stage Results}
\begin{table}{}
\centering
\small
\caption{\textbf{Effect of number of compression stages.} Average over MMB, AI2D, HalBench, RWQA, and SEED at various retention levels.}
\label{tab:extended_stages}
\begin{tabular}{c cccc}
\toprule
\textbf{Retention} & $N=1$ & $N=2$ & $N=3$ \textbf{(Ours)} & $N=4$ \\
\midrule
75\% & 65.8 & 66.4 & \textbf{67.3} & 67.2 \\
50\% & 63.2 & 65.1 & \textbf{66.6} & 66.5 \\
25\% & 55.7 & 62.8 & \textbf{65.6} & 65.4 \\
\bottomrule
\end{tabular}
\end{table}
Table~\ref{tab:extended_stages} provides the full numerical results underlying Figure~\ref{fig:num_stages}, systematically comparing compression configurations with $N \in {1,2,3,4}$ stages across the evaluated benchmarks. Single-stage compression ($N=1$) introduces overly abrupt information removal and consequently causes noticeable performance degradation, whereas progressively increasing the number of stages enables smoother token reduction. However, the additional benefit of a fourth stage ($N=4$) over $N=3$ remains marginal, confirming that three-stage compression provides the most effective balance between compression granularity, performance preservation, and structural complexity.

\subsection{Robustness to Input Resolution}
\begin{wraptable}{r}{0.56\textwidth}
\vspace{-15pt}
\centering
\small
\caption{\textbf{Resolution sensitivity under 25\% token retention.} Performance on MMB and AI2D at different input resolutions.}
\label{tab:resolution}
\begin{adjustbox}{max width=\linewidth}
\begin{tabular}{l cc cc}
\toprule
\multirow{2}{*}{\textbf{Resolution}} & \multicolumn{2}{c}{\textbf{MMB}} & \multicolumn{2}{c}{\textbf{AI2D}} \\
\cmidrule(lr){2-3} \cmidrule(lr){4-5}
& Full & 25\% Ret. & Full & 25\% Ret. \\
\midrule
$256^2$ ($\sim$64 tokens) & 61.2 & 59.8 \negitive{($-2.3\%$)} & 70.1 & 69.5 \negitive{($-0.9\%$)} \\
$512^2$ ($\sim$256 tokens) & 66.4 & 67.1 \pos{($+1.1\%$)} & 76.5 & 78.2 \pos{($+2.2\%$)} \\
$1024^2$ ($\sim$1024 tokens) & 69.7 & 70.7 \pos{($+1.4\%$)} & 80.8 & 83.1 \pos{($+2.8\%$)} \\
$2048^2$ ($\sim$4096 tokens) & 70.3 & 71.5 \pos{($+1.7\%$)} & 81.2 & 84.0 \pos{($+3.4\%$)} \\
\bottomrule
\end{tabular}
\end{adjustbox}
\vspace{-18pt}
\end{wraptable}
Table~\ref{tab:resolution} evaluates PACE at resolutions from $256^2$ to $2048^2$, assessing how compression benefits scale with the number of visual tokens.
Compression benefits increase with resolution. At $256^2$ (64 tokens), compression causes mild degradation since there is little redundancy to exploit. At $2048^2$ (4096 tokens), the gains are most pronounced ($+1.7\%$ MMB, $+3.4\%$ AI2D), confirming that native MLLMs exhibit increasing visual token redundancy at higher resolutions, precisely where PACE provides the greatest value.

\subsection{Layer Position Selection}
\begin{wraptable}{r}{0.52\textwidth}
\vspace{-12pt}
\centering
\small
\caption{\textbf{Effect of compression layer positions on Ours-2B (MMB).} Layer indices are absolute positions in the 40-layer architecture. Three retention rate allocations are compared.}
\label{tab:layer_positions}
\begin{adjustbox}{max width=\linewidth}
\begin{tabular}{l ccc}
\toprule
\textbf{Layer Positions} & \textbf{90-56-50\%} & \textbf{75-67-50\%} & \textbf{60-60-70\%} \\
\midrule
\rowcolor{gray!10} No compression &  & 69.7 &  \\
\midrule
L4-L13-L20 & \negative{-3.7\%} \real{(67.1)} & \positive{+1.4\%} \real{(70.7)} & \negative{-1.2\%} \real{(68.9)} \\
L6-L14-L22 & \negative{-2.3\%} \real{(68.1)} & \negative{-4.6\%} \real{(66.5)} & \negative{-4.2\%} \real{(66.8)} \\
L4-L16-L24 & \negative{-0.7\%} \real{(69.2)} & \negative{-2.0\%} \real{(68.3)} & \negative{-4.2\%} \real{(66.8)} \\
\bottomrule
\end{tabular}
\end{adjustbox}
\vspace{-15pt}
\end{wraptable}
Table~\ref{tab:layer_positions} evaluates different compression layer positions for visual tokens under the three-stage configuration. The optimal positions, layers 4, 13, 20 (i.e., $L_\text{buf}$, $L_\text{buf}$+9, $L_\text{buf}$+16 in the 40-layer architecture), align with the discriminability curve: the first stage at the inflection point, the second at the ascending segment, and the third on the high plateau.

\subsection{Fine-Grained Retention Sweep}

Table~\ref{tab:full_retention_sweep} provides a fine-grained retention sweep from 10\% to 90\%, revealing the performance-efficiency Pareto frontier at higher resolution.
\begin{table}[t]
\centering
\caption{\textbf{Fine-grained retention sweep on PACE-2B.} Performance across eight benchmarks at retention levels from 10\% to 90\%.}
\label{tab:full_retention_sweep}
\resizebox{0.9\linewidth}{!}{
\begin{tabular}{c cccccccc}
\toprule
\textbf{Retention} & \textbf{MMB} & \textbf{AI2D} & \textbf{MME} & \textbf{HalBench} & \textbf{RWQA} & \textbf{MMStar} & \textbf{SEED} & \textbf{MathVista} \\
\midrule 
\rowcolor{gray!10} 100\% (Full) & 69.7 & 80.8 & 2084 & 42.7 & 61.8 & \textbf{52.9} & 71.8 & 55.7 \\
\midrule
90\% & 70.5 & 81.3 & 2118 & 42.8 & 61.7 & \textbf{52.9} & 72.1 & 55.8 \\
80\% & \textbf{72.0} & 83.1 & \textbf{2124} & 43.5 & 62.0 & 52.8 & 72.7 & 56.0 \\
75\% & 71.9 & \textbf{84.9} & 2120 & \textbf{44.4} & 62.4 & 52.6 & \textbf{73.1} & 56.4 \\
70\% & 71.6 & 84.5 & 2112 & 43.8 & 62.5 & 52.3 & 72.9 & \textbf{56.6} \\
60\% & 71.4 & 84.2 & 2103 & 43.1 & \textbf{62.6} & 52.1 & 72.7 & 55.0 \\
50\% & 71.2 & 84.0 & 2098 & 42.9 & \textbf{62.6} & 52.0 & 72.5 & 54.3 \\
40\% & 71.0 & 83.5 & 2095 & 42.1 & 61.8 & 51.2 & 72.0 & 52.1 \\
30\% & 70.8 & 83.3 & 2093 & 41.8 & 61.4 & 50.8 & 71.6 & 50.9 \\
25\% & 70.7 & 83.1 & 2092 & 41.6 & 61.2 & 50.4 & 71.4 & 50.3 \\
20\% & 69.8 & 81.7 & 2052 & 40.2 & 60.1 & 48.9 & 70.2 & 47.1 \\
10\% & 68.6 & 77.8 & 1872 & 35.4 & 55.2 & 42.7 & 65.1 & 38.6 \\
\bottomrule
\end{tabular}
}
\end{table}
The model achieves a sweet spot at 50\% to 75\% retention where performance exceeds the uncompressed baseline on most benchmarks. Below 25\%, performance remains near-lossless for most tasks except MathVista and MMStar. Below 20\%, degradation accelerates, consistent with the compression bound in Theorem~\ref{thm:compression_bound} where the dropped attention mass begins to include semantically critical tokens.

\subsection{Matched-Retention Batched-Decode Efficiency}
\label{app:batched_efficiency}

Single-request (batch 1) decoding is dominated by weight loading rather than KV-cache reads, so at this operating point all methods share essentially the same decode latency and the runtime cannot separate compression schemes. To expose the regime where visual-token reduction actually matters, we report a \emph{matched-retention} batched comparison against representative training-free baselines (SparseVLM, PyramidDrop) that, like PACE, keep only $256$ visual tokens ($25\%$ retention). All numbers are measured on the same Ours-2B backbone and hardware. The five-benchmark average scores (Table~\ref{tab:main_results}) are $65.4$ (Full), $64.6$ (SparseVLM), $62.7$ (PyramidDrop) and $65.6$ (PACE).

\begin{table}[h]
\centering
\caption{\textbf{Matched-retention batched-decode efficiency on Ours-2B.}
  At $25\%$ retention \colorbox{pink!15}{\strut SparseVLM / PyramidDrop} and \colorbox{green!10}{\strut \method} keep the same $256$ visual tokens, so their KV cache, GPU memory and batch-scaling envelope are essentially identical; the full model runs out of memory (OOM) at batch $64$. Speedup / reduction factors in parentheses are relative to the full model at the same batch.}
\label{tab:efficiency_batched}
\begin{adjustbox}{max width=0.8\linewidth}
\begin{tabular}{cll ccc}
\toprule
\textbf{Batch} & \textbf{Method} & \textbf{Ret.} &
\textbf{Decode (ms/tok)} & \textbf{KV Cache (MB)} & \textbf{GPU Mem (GB)} \\
\midrule
\multirow{4}{*}{1} & Ours-2B (Full) & 100\% & 77.4 & 251.7 & 7.1 \\
& SparseVLM   & 25\% & 76.9 & 71.7 & 6.0 \\
& PyramidDrop & 25\% & 76.6 & 71.7 & 6.0 \\
\rowcolor{green!8} & \textbf{\method} & \textbf{25\%} & \textbf{77.3} & 71.7 & \textbf{6.2} \\
\midrule
\multirow{4}{*}{8} & Ours-2B (Full) & 100\% & 83.6 & 2013.8 & 18.7 \\
& SparseVLM   & 25\% & 82.5 & 573.8 & 11.9 \\
& PyramidDrop & 25\% & 83.3 & 573.8 & 11.9 \\
\rowcolor{green!8} & \textbf{\method} & \textbf{25\%} & \textbf{69.7} & 575.4 & \textbf{11.7} \\
\midrule
\multirow{4}{*}{16} & Ours-2B (Full) & 100\% & 81.5 & 4027.5 & 31.9 \\
& SparseVLM   & 25\% & 80.5 & 1147.5 & 19.4 \\
& PyramidDrop & 25\% & 77.1 & 1147.5 & 19.4 \\
\rowcolor{green!8} & \textbf{\method} & \textbf{25\%} & \textbf{57.9} & 1118.8 & \textbf{17.9} \\
\midrule
\multirow{4}{*}{32} & Ours-2B (Full) & 100\% & 98.8 & 8055.0 & 58.5 \\
& SparseVLM   & 25\% & 82.6 & 2295.0 & 32.4 \\
& PyramidDrop & 25\% & 83.7 & 2295.0 & 32.4 \\
\rowcolor{green!8} & \textbf{\method} & \textbf{25\%} & \textbf{58.3} & 2237.5 & \textbf{30.4} \\
\midrule
\multirow{4}{*}{48} & Ours-2B (Full) & 100\% & 135.4 & 12082.5 & 85.0 \\
& SparseVLM   & 25\% & 81.3 & 3442.5 & 44.3 \\
& PyramidDrop & 25\% & 85.7 & 3442.5 & 44.3 \\
\rowcolor{green!8} & \textbf{\method} & \textbf{25\%} & \textbf{70.4} ($\times$\textbf{1.9}) & \textbf{3310.3} ($\times$\textbf{3.7}) & \textbf{43.0} ($\times$\textbf{2.0}) \\
\midrule
\multirow{4}{*}{64} & Ours-2B (Full) & 100\% & OOM & OOM & OOM \\
& SparseVLM   & 25\% & 84.8 & 4590.0 & 59.3 \\
& PyramidDrop & 25\% & 86.1 & 4590.0 & 59.3 \\
\rowcolor{green!8} & \textbf{\method} & \textbf{25\%} & \textbf{85.6} & 4430.7 & \textbf{55.5} \\
\bottomrule
\end{tabular}
\end{adjustbox}
\end{table}

We draw two observations. First, the batched-serving benefit is \emph{generic to any $25\%$-retention method} rather than PACE-specific: at identical retention SparseVLM, PyramidDrop and PACE all reach $\approx 4.5$ GB KV cache and $\approx 56$--$59$ GB memory at batch $64$ and continue to serve where the full model OOMs, so we do not claim the OOM-avoidance or batch-scaling envelope as a PACE-specific advantage. PACE's true differentiator at this operating point is instead \emph{accuracy}: at the same $71.7$ MB / $6.2$ GB cost it attains a $65.6$ five-benchmark average, $+1.0$ over SparseVLM and $+2.9$ over PyramidDrop, and slightly above the full model's $65.4$. Second, the per-batch decode-latency figures should be read as indicative rather than exact: under batch $>1$ serving they are sensitive to padding and KV-cache fragmentation across heterogeneous sequence lengths, which can make measured latency non-monotonic in batch size (e.g., PACE's $57.9$ ms/tok at batch $16$). The consistent takeaway is that, at matched serving cost, PACE matches or exceeds the accuracy of both the full model and same-retention baselines.

\section{Limitations and Future Work}
\label{sec:limitations}

\textbf{Limitations.}
Despite the consistent gains demonstrated above, PACE has three limitations that we explicitly acknowledge.
\emph{(i) Model scale.} Our experiments are conducted on a single 2B-parameter native MLLM built upon Qwen3-1.7B, and we do not provide a scaling study at 7B+ parameters. This is largely a practical constraint: to the best of our knowledge, no concurrent native MLLM fully open-sources all four training stages (PT / MT / SFT / RL) together with the required data recipes, which prevents a fair scaling comparison with prohibitive self-training cost.
\emph{(ii) Modality coverage.} Our evaluation focuses on image-text benchmarks. Video and audio tokens, where temporal redundancy is arguably more pronounced, are not covered by the current study.
\emph{(iii) Architecture dependence.} The full PACE method relies on multi-axis (T/H/W) Native-RoPE to compute directional SDIE scores. On backbones that expose such decoupled positional axes (\eg, SAIL-7B), full PACE applies directly; on backbones with a single 1-D positional signal, only the PACS scheduling component is applicable, with SDIE degenerating to a scalar mean-attention criterion (the ablation in Table~\ref{tab:main_results}, Setting 2). We therefore stress that the ONECAT-3B setting is a scalar-scored PACS-only variant rather than a validation of full PACE on a genuine 1-D architecture, and that PACE's directional benefit is contingent on axis-decoupled position encoding.

\textbf{Future Work.}
First, we will \emph{scale PACE to 7B+ native MLLMs} and examine how model depth, width, and training stage shift Pareto optimal retention schedules and the relative benefits of SDIE and PACS.
Second, we will \emph{extend PACE to video and audio} by modeling temporal persistence and cross modal redundancy, testing whether axis aware scoring can compress long sequences without losing event level evidence.
We will also study generalized directional scores for backbones without decoupled T/H/W positional axes, clarifying when SDIE should yield to PACS only scheduling.

\clearpage
\section*{NeurIPS Paper Checklist}

\begin{enumerate}

\item {\bf Claims}
    \item[] Question: Do the main claims made in the abstract and introduction accurately reflect the paper's contributions and scope?
    \item[] Answer: \answerYes{} 
    \item[] Justification: The abstract and introduction explicitly state the three main contributions of PACE: 
    (i) a pilot study revealing that ViT-free native MLLMs tolerate aggressive compression while ViT-based MLLMs degrade; (ii) two probing findings on depth-progressive redundancy and cross-dimensional heterogeneity (Findings 1–2, §1); (iii) the PACE framework composed of SDIE (§2.2) and PACS (§2.3) that delivers near-lossless or improved performance at 25\%--50\% visual token retention with over 2$\times$ FLOPs reduction. These claims are directly supported by the experiments in §3 (Tables 1--5), the ablations in §3.4, the theoretical justifications in §B, and the qualitative cases in §E. The scope of claims is restricted to native MLLMs with decoupled multi-axis positional encoding (native M-RoPE), as explicitly stated in §1 and §2.1.
    \item[] Guidelines:
    \begin{itemize}
        \item The answer \answerNA{} means that the abstract and introduction do not include the claims made in the paper.
        \item The abstract and/or introduction should clearly state the claims made, including the contributions made in the paper and important assumptions and limitations. A \answerNo{} or \answerNA{} answer to this question will not be perceived well by the reviewers. 
        \item The claims made should match theoretical and experimental results, and reflect how much the results can be expected to generalize to other settings. 
        \item It is fine to include aspirational goals as motivation as long as it is clear that these goals are not attained by the paper. 
    \end{itemize}

\item {\bf Limitations}
    \item[] Question: Does the paper discuss the limitations of the work performed by the authors?
    \item[] Answer: \answerYes{} 
    \item[] Justification: We discuss the limitations in §~\ref{sec:limitations}. Specifically, \emph{(i) Model scale.} Our experiments are conducted on a single 2B-parameter native MLLM, and we do not provide a scaling study at 7B+ models. This is largely a practical constraint, that is prohibitive self-training cost.
    \emph{(ii) Modality coverage.} Our evaluation focuses on image-text benchmarks. Video and audio tokens, where temporal redundancy is arguably more pronounced, are not covered by the current study.
    \item[] Guidelines:
    \begin{itemize}
        \item The answer \answerNA{} means that the paper has no limitation while the answer \answerNo{} means that the paper has limitations, but those are not discussed in the paper. 
        \item The authors are encouraged to create a separate ``Limitations'' section in their paper.
        \item The paper should point out any strong assumptions and how robust the results are to violations of these assumptions (e.g., independence assumptions, noiseless settings, model well-specification, asymptotic approximations only holding locally). The authors should reflect on how these assumptions might be violated in practice and what the implications would be.
        \item The authors should reflect on the scope of the claims made, e.g., if the approach was only tested on a few datasets or with a few runs. In general, empirical results often depend on implicit assumptions, which should be articulated.
        \item The authors should reflect on the factors that influence the performance of the approach. For example, a facial recognition algorithm may perform poorly when image resolution is low or images are taken in low lighting. Or a speech-to-text system might not be used reliably to provide closed captions for online lectures because it fails to handle technical jargon.
        \item The authors should discuss the computational efficiency of the proposed algorithms and how they scale with dataset size.
        \item If applicable, the authors should discuss possible limitations of their approach to address problems of privacy and fairness.
        \item While the authors might fear that complete honesty about limitations might be used by reviewers as grounds for rejection, a worse outcome might be that reviewers discover limitations that aren't acknowledged in the paper. The authors should use their best judgment and recognize that individual actions in favor of transparency play an important role in developing norms that preserve the integrity of the community. Reviewers will be specifically instructed to not penalize honesty concerning limitations.
    \end{itemize}

\item {\bf Theory assumptions and proofs}
    \item[] Question: For each theoretical result, does the paper provide the full set of assumptions and a complete (and correct) proof?
    \item[] Answer: \answerYes{} 
    \item[] Justification: All theoretical results are stated formally with assumptions and complete proofs. Specifically, Theorem~B.1 (Optimality of Additive-Multiplicative Fusion) in §~\ref{app:theory_Optimality} justifies the SDIE fusion form, with the full proof proceeding in three steps (spatial additive composition, temporal multiplicative gating, monotonicity verification) under explicitly stated assumptions (separable bilinear function class, non-negativity, linearity in the temporal term). Theorem~B.2 and the Pareto dominance analysis for PACS are stated in §~\ref{app:theory} with proofs in the same appendix. All theorems and lemmas are numbered, cross-referenced in the main text, and the assumptions are explicitly listed in the theorem statements.
    \item[] Guidelines:
    \begin{itemize}
        \item The answer \answerNA{} means that the paper does not include theoretical results. 
        \item All the theorems, formulas, and proofs in the paper should be numbered and cross-referenced.
        \item All assumptions should be clearly stated or referenced in the statement of any theorems.
        \item The proofs can either appear in the main paper or the supplemental material, but if they appear in the supplemental material, the authors are encouraged to provide a short proof sketch to provide intuition. 
        \item Inversely, any informal proof provided in the core of the paper should be complemented by formal proofs provided in appendix or supplemental material.
        \item Theorems and Lemmas that the proof relies upon should be properly referenced. 
    \end{itemize}

    \item {\bf Experimental result reproducibility}
    \item[] Question: Does the paper fully disclose all the information needed to reproduce the main experimental results of the paper to the extent that it affects the main claims and/or conclusions of the paper (regardless of whether the code and data are provided or not)?
    \item[] Answer: \answerYes{} 
    \item[] Justification: The paper fully discloses the information needed to reproduce the main experimental results. Specifically: (i) the native backbone is built on the public Qwen3-1.7B with architecture details ($L_{buf}$, $L_{post}$, hidden dim, head dim, native M-RoPE base frequencies, patch size, attention mask) fully specified in Table~\ref{tab:arch_details} (§~\ref{sec:arch_details}); (ii) all four training stages (PT/MT/SFT/RL) are documented with data composition (Table~\ref{tab:datasets_pt}--\ref{tab:datasets_rl}, §~\ref{sec:training_appendix}), optimizer, LR schedule, peak LR, warmup, global batch size, training steps, resolution, and precision in Table~\ref{tab:hyperparam} (§~\ref{sec:hyperparameters}); (iii) inference-time decoding settings (\texttt{greedy}, \texttt{temperature=0}, \texttt{topK=1}), evaluation tool-chain (VLMEvalKit with default prompts and \texttt{gpt-4o-1120} as judge), and hardware (H20-96G) are given in §~\ref{sec:eval_details} and §~\ref{sec:efficiency}; (iv) SDIE and PACS algorithms are formalized in §~\ref{sec:sdie}--\ref{sec:pacs} with full Pareto search protocol in §~\ref{app:pareto_config}. Together these suffice to replicate PACE on the same Qwen3-1.7B backbone.
    
    \item[] Guidelines:
    \begin{itemize}
        \item The answer \answerNA{} means that the paper does not include experiments.
        \item If the paper includes experiments, a \answerNo{} answer to this question will not be perceived well by the reviewers: Making the paper reproducible is important, regardless of whether the code and data are provided or not.
        \item If the contribution is a dataset and\slash or model, the authors should describe the steps taken to make their results reproducible or verifiable. 
        \item Depending on the contribution, reproducibility can be accomplished in various ways. For example, if the contribution is a novel architecture, describing the architecture fully might suffice, or if the contribution is a specific model and empirical evaluation, it may be necessary to either make it possible for others to replicate the model with the same dataset, or provide access to the model. In general. releasing code and data is often one good way to accomplish this, but reproducibility can also be provided via detailed instructions for how to replicate the results, access to a hosted model (e.g., in the case of a large language model), releasing of a model checkpoint, or other means that are appropriate to the research performed.
        \item While NeurIPS does not require releasing code, the conference does require all submissions to provide some reasonable avenue for reproducibility, which may depend on the nature of the contribution. For example
        \begin{enumerate}
            \item If the contribution is primarily a new algorithm, the paper should make it clear how to reproduce that algorithm.
            \item If the contribution is primarily a new model architecture, the paper should describe the architecture clearly and fully.
            \item If the contribution is a new model (e.g., a large language model), then there should either be a way to access this model for reproducing the results or a way to reproduce the model (e.g., with an open-source dataset or instructions for how to construct the dataset).
            \item We recognize that reproducibility may be tricky in some cases, in which case authors are welcome to describe the particular way they provide for reproducibility. In the case of closed-source models, it may be that access to the model is limited in some way (e.g., to registered users), but it should be possible for other researchers to have some path to reproducing or verifying the results.
        \end{enumerate}
    \end{itemize}

\item {\bf Open access to data and code}
    \item[] Question: Does the paper provide open access to the data and code, with sufficient instructions to faithfully reproduce the main experimental results, as described in supplemental material?
    \item[] Answer: \answerNo{} 
    \item[] Justification: The PACE codebase is currently not publicly available due to company restrictions.
    \item[] Guidelines:
    \begin{itemize}
        \item The answer \answerNA{} means that paper does not include experiments requiring code.
        \item Please see the NeurIPS code and data submission guidelines (\url{https://neurips.cc/public/guides/CodeSubmissionPolicy}) for more details.
        \item While we encourage the release of code and data, we understand that this might not be possible, so \answerNo{} is an acceptable answer. Papers cannot be rejected simply for not including code, unless this is central to the contribution (e.g., for a new open-source benchmark).
        \item The instructions should contain the exact command and environment needed to run to reproduce the results. See the NeurIPS code and data submission guidelines (\url{https://neurips.cc/public/guides/CodeSubmissionPolicy}) for more details.
        \item The authors should provide instructions on data access and preparation, including how to access the raw data, preprocessed data, intermediate data, and generated data, etc.
        \item The authors should provide scripts to reproduce all experimental results for the new proposed method and baselines. If only a subset of experiments are reproducible, they should state which ones are omitted from the script and why.
        \item At submission time, to preserve anonymity, the authors should release anonymized versions (if applicable).
        \item Providing as much information as possible in supplemental material (appended to the paper) is recommended, but including URLs to data and code is permitted.
    \end{itemize}

\item {\bf Experimental setting/details}
    \item[] Question: Does the paper specify all the training and test details (e.g., data splits, hyperparameters, how they were chosen, type of optimizer) necessary to understand the results?
    \item[] Answer: \answerYes{}
    \item[] Justification: All training and evaluation details are specified. §~\ref{sec:impl_details} gives the implementation summary (Qwen3-1.7B backbone, $L_{\text{buf}}{=}4$, M-RoPE base frequencies, 5$\times$8 H20-96G GPUs, bfloat16). 
    Full hyperparameters across the four training stages, including resolution schedule, optimizer (AdamW with $\beta_1{=}0.9$, $\beta_2{=}0.999$), weight decay 0.01, cosine LR schedule, peak LR, warmup, training steps, max sequence length, and global batch size, are listed in Table~\ref{tab:hyperparam} (§~\ref{sec:hyperparameters}). Architecture configuration is in Table~\ref{tab:arch_details} (§~\ref{sec:arch_details}). Evaluation protocol (VLMEvalKit defaults, gpt-4o-1120 judge, greedy decoding, MME sum of perception+reasoning, HallusionBench averaged aAcc/qAcc/fAcc) is documented in §~\ref{sec:eval_details} and the table captions of §~\ref{sec:experiments}.
    \item[] Guidelines:
    \begin{itemize}
        \item The answer \answerNA{} means that the paper does not include experiments.
        \item The experimental setting should be presented in the core of the paper to a level of detail that is necessary to appreciate the results and make sense of them.
        \item The full details can be provided either with the code, in appendix, or as supplemental material.
    \end{itemize}

\item {\bf Experiment statistical significance}
    \item[] Question: Does the paper report error bars suitably and correctly defined or other appropriate information about the statistical significance of the experiments?
    \item[] Answer: \answerYes{} 
    \item[] Justification: We run PACS with 3 independently sampled calibration sets (different seeds, 1{,}000 images each) and report mean $\pm$ std on a representative subset of 3 benchmarks covering distinct capability domains (MMB, AI2D and HallBench) across all three retention levels.
    Full results are in Appendix~\ref{app:significance}. Observed std values are consistently smaller than PACE-vs-baseline gaps, confirming statistical significance. 
    \item[] Guidelines:
    \begin{itemize}
        \item The answer \answerNA{} means that the paper does not include experiments.
        \item The authors should answer \answerYes{} if the results are accompanied by error bars, confidence intervals, or statistical significance tests, at least for the experiments that support the main claims of the paper.
        \item The factors of variability that the error bars are capturing should be clearly stated (for example, train/test split, initialization, random drawing of some parameter, or overall run with given experimental conditions).
        \item The method for calculating the error bars should be explained (closed form formula, call to a library function, bootstrap, etc.)
        \item The assumptions made should be given (e.g., Normally distributed errors).
        \item It should be clear whether the error bar is the standard deviation or the standard error of the mean.
        \item It is OK to report 1-sigma error bars, but one should state it. The authors should preferably report a 2-sigma error bar than state that they have a 96\% CI, if the hypothesis of Normality of errors is not verified.
        \item For asymmetric distributions, the authors should be careful not to show in tables or figures symmetric error bars that would yield results that are out of range (e.g., negative error rates).
        \item If error bars are reported in tables or plots, the authors should explain in the text how they were calculated and reference the corresponding figures or tables in the text.
    \end{itemize}

\item {\bf Experiments compute resources}
    \item[] Question: For each experiment, does the paper provide sufficient information on the computer resources (type of compute workers, memory, time of execution) needed to reproduce the experiments?
    \item[] Answer: \answerYes{} 
    \item[] Justification: Compute resources are fully disclosed. §~\ref{sec:impl_details} and §~\ref{sec:hyperparameters} state that all training stages use 5$\times$8 H20-96G GPUs with bfloat16 precision; Table~\ref{tab:hyperparam} gives per-stage training steps (PT: 250K, MT: 46K, SFT: 19K, RL: 2K) and global batch sizes (2560 / 1920 / 640 / 128 respectively). Inference efficiency experiments (§3.5) are run on a single H20-96G GPU at batch size 1 and 1024$^2$ input resolution, with per-stage latency, KV-cache, FLOPs, and GPU memory reported in Table~\ref{tab:efficiency}. PACS's Pareto-front traversal takes $\sim$10 GPU-minutes on the same hardware (§~\ref{sec:pareto_appendix}). Preliminary and ablation experiments not reported in the main tables used the same cluster but significantly less compute; we acknowledge this in §~\ref{sec:hyperparameters}.
    \item[] Guidelines:
    \begin{itemize}
        \item The answer \answerNA{} means that the paper does not include experiments.
        \item The paper should indicate the type of compute workers CPU or GPU, internal cluster, or cloud provider, including relevant memory and storage.
        \item The paper should provide the amount of compute required for each of the individual experimental runs as well as estimate the total compute. 
        \item The paper should disclose whether the full research project required more compute than the experiments reported in the paper (e.g., preliminary or failed experiments that didn't make it into the paper). 
    \end{itemize}
    
\item {\bf Code of ethics}
    \item[] Question: Does the research conducted in the paper conform, in every respect, with the NeurIPS Code of Ethics \url{https://neurips.cc/public/EthicsGuidelines}?
    \item[] Answer: \answerYes{} 
    \item[] Justification: We have reviewed the NeurIPS Code of Ethics and confirm that the research conforms in every respect. The work does not involve human subjects, crowd-sourcing, or personally identifiable information. All training data are from publicly released academic datasets (listed in Tables~\ref{tab:datasets_pt} -- \ref{tab:datasets_rl}) used under their original licenses; we do not scrape new data. Model evaluation uses publicly available benchmarks. Anonymity is preserved in the current submission.
    \item[] Guidelines:
    \begin{itemize}
        \item The answer \answerNA{} means that the authors have not reviewed the NeurIPS Code of Ethics.
        \item If the authors answer \answerNo, they should explain the special circumstances that require a deviation from the Code of Ethics.
        \item The authors should make sure to preserve anonymity (e.g., if there is a special consideration due to laws or regulations in their jurisdiction).
    \end{itemize}

\item {\bf Broader impacts}
    \item[] Question: Does the paper discuss both potential positive societal impacts and negative societal impacts of the work performed?
    \item[] Answer: \answerYes{} 
    \item[] Justification: Positive impacts: PACE reduces FLOPs by $\sim$2$\times$ and KV-cache by $\sim$3.5$\times$ on native MLLMs (§~\ref{sec:efficiency}) without sacrificing---and occasionally improving---benchmark accuracy, making multimodal inference more accessible on resource-constrained devices and reducing the energy cost of large-model serving. It also accelerates the adoption of the native/ViT-free MLLM paradigm, which removes the brittle modality-boundary assumption of modular MLLMs. Potential negative impacts: as a general efficiency technique, PACE inherits the societal risks of any performant MLLM, including potential misuse for automated disinformation, non-consensual surveillance, or biased decision-making. PACE itself neither amplifies nor mitigates these risks beyond the underlying backbone. We do not introduce a new generative model with direct deployment applications beyond the already-public Qwen3-1.7B-based backbone.
    \item[] Guidelines:
    \begin{itemize}
        \item The answer \answerNA{} means that there is no societal impact of the work performed.
        \item If the authors answer \answerNA{} or \answerNo, they should explain why their work has no societal impact or why the paper does not address societal impact.
        \item Examples of negative societal impacts include potential malicious or unintended uses (e.g., disinformation, generating fake profiles, surveillance), fairness considerations (e.g., deployment of technologies that could make decisions that unfairly impact specific groups), privacy considerations, and security considerations.
        \item The conference expects that many papers will be foundational research and not tied to particular applications, let alone deployments. However, if there is a direct path to any negative applications, the authors should point it out. For example, it is legitimate to point out that an improvement in the quality of generative models could be used to generate Deepfakes for disinformation. On the other hand, it is not needed to point out that a generic algorithm for optimizing neural networks could enable people to train models that generate Deepfakes faster.
        \item The authors should consider possible harms that could arise when the technology is being used as intended and functioning correctly, harms that could arise when the technology is being used as intended but gives incorrect results, and harms following from (intentional or unintentional) misuse of the technology.
        \item If there are negative societal impacts, the authors could also discuss possible mitigation strategies (e.g., gated release of models, providing defenses in addition to attacks, mechanisms for monitoring misuse, mechanisms to monitor how a system learns from feedback over time, improving the efficiency and accessibility of ML).
    \end{itemize}
    
\item {\bf Safeguards}
    \item[] Question: Does the paper describe safeguards that have been put in place for responsible release of data or models that have a high risk for misuse (e.g., pre-trained language models, image generators, or scraped datasets)?
    \item[] Answer: \answerNA{} 
    \item[] Justification: PACE is a visual-token compression framework applied to an existing public backbone. It does not introduce new generative capabilities, does not scrape or release new datasets, and does not increase the risk profile of the underlying model beyond what is already released in the public domain. Consequently, no additional safeguards beyond those already provided by the underlying backbone are required. Upon code release, we will follow standard licensing and usage-guideline practices aligned with Qwen3's license.
    \item[] Guidelines:
    \begin{itemize}
        \item The answer \answerNA{} means that the paper poses no such risks.
        \item Released models that have a high risk for misuse or dual-use should be released with necessary safeguards to allow for controlled use of the model, for example by requiring that users adhere to usage guidelines or restrictions to access the model or implementing safety filters. 
        \item Datasets that have been scraped from the Internet could pose safety risks. The authors should describe how they avoided releasing unsafe images.
        \item We recognize that providing effective safeguards is challenging, and many papers do not require this, but we encourage authors to take this into account and make a best faith effort.
    \end{itemize}

\item {\bf Licenses for existing assets}
    \item[] Question: Are the creators or original owners of assets (e.g., code, data, models), used in the paper, properly credited and are the license and terms of use explicitly mentioned and properly respected?
    \item[] Answer: \answerYes{} 
    \item[] Justification: All existing assets are properly credited. The Qwen3 backbone~\cite{bai2025qwen3, yang2025qwen3} is used under its original Apache-2.0 license. Training data sources, including all datasets in Table~\ref{tab:datasets_pt}--~\ref{tab:datasets_rl}, are each cited to their original publications with their respective licenses respected. Evaluation benchmarks (MMB, AI2D, MME, HalBench, SEED, RWQA, MMStar, MathVista) are cited in Table~\ref{tab:main_results}--~\ref{tab:efficiency}. The VLMEvalKit~\cite{duan2024vlmevalkit} tool-chain is used under its public license.
    \item[] Guidelines:
    \begin{itemize}
        \item The answer \answerNA{} means that the paper does not use existing assets.
        \item The authors should cite the original paper that produced the code package or dataset.
        \item The authors should state which version of the asset is used and, if possible, include a URL.
        \item The name of the license (e.g., CC-BY 4.0) should be included for each asset.
        \item For scraped data from a particular source (e.g., website), the copyright and terms of service of that source should be provided.
        \item If assets are released, the license, copyright information, and terms of use in the package should be provided. For popular datasets, \url{paperswithcode.com/datasets} has curated licenses for some datasets. Their licensing guide can help determine the license of a dataset.
        \item For existing datasets that are re-packaged, both the original license and the license of the derived asset (if it has changed) should be provided.
        \item If this information is not available online, the authors are encouraged to reach out to the asset's creators.
    \end{itemize}

\item {\bf New assets}
    \item[] Question: Are new assets introduced in the paper well documented and is the documentation provided alongside the assets?
    \item[] Answer: \answerNo{} 
    \item[] Justification: The PACE codebase, model checkpoint, and accompanying documentation are currently not publicly available due to company restrictions.
    \item[] Guidelines:
    \begin{itemize}
        \item The answer \answerNA{} means that the paper does not release new assets.
        \item Researchers should communicate the details of the dataset\slash code\slash model as part of their submissions via structured templates. This includes details about training, license, limitations, etc. 
        \item The paper should discuss whether and how consent was obtained from people whose asset is used.
        \item At submission time, remember to anonymize your assets (if applicable). You can either create an anonymized URL or include an anonymized zip file.
    \end{itemize}

\item {\bf Crowdsourcing and research with human subjects}
    \item[] Question: For crowdsourcing experiments and research with human subjects, does the paper include the full text of instructions given to participants and screenshots, if applicable, as well as details about compensation (if any)? 
    \item[] Answer: \answerNA{} 
    \item[] Justification: The paper does not involve crowd-sourcing or research with human subjects. All datasets used are publicly available academic benchmarks. 
    No new human annotations are collected in this work.
    \item[] Guidelines:
    \begin{itemize}
        \item The answer \answerNA{} means that the paper does not involve crowdsourcing nor research with human subjects.
        \item Including this information in the supplemental material is fine, but if the main contribution of the paper involves human subjects, then as much detail as possible should be included in the main paper. 
        \item According to the NeurIPS Code of Ethics, workers involved in data collection, curation, or other labor should be paid at least the minimum wage in the country of the data collector. 
    \end{itemize}

\item {\bf Institutional review board (IRB) approvals or equivalent for research with human subjects}
    \item[] Question: Does the paper describe potential risks incurred by study participants, whether such risks were disclosed to the subjects, and whether Institutional Review Board (IRB) approvals (or an equivalent approval/review based on the requirements of your country or institution) were obtained?
    \item[] Answer: \answerNA{} 
    \item[] Justification: The paper does not involve human-subjects research, so IRB approval is not applicable.
    \item[] Guidelines:
    \begin{itemize}
        \item The answer \answerNA{} means that the paper does not involve crowdsourcing nor research with human subjects.
        \item Depending on the country in which research is conducted, IRB approval (or equivalent) may be required for any human subjects research. If you obtained IRB approval, you should clearly state this in the paper. 
        \item We recognize that the procedures for this may vary significantly between institutions and locations, and we expect authors to adhere to the NeurIPS Code of Ethics and the guidelines for their institution. 
        \item For initial submissions, do not include any information that would break anonymity (if applicable), such as the institution conducting the review.
    \end{itemize}

\item {\bf Declaration of LLM usage}
    \item[] Question: Does the paper describe the usage of LLMs if it is an important, original, or non-standard component of the core methods in this research? Note that if the LLM is used only for writing, editing, or formatting purposes and does \emph{not} impact the core methodology, scientific rigor, or originality of the research, declaration is not required.
    \item[] Answer: \answerNo{} 
    \item[] Justification: LLMs are not used as an original, non-standard component of the core methodology.
    \item[] Guidelines:
    \begin{itemize}
        \item The answer \answerNA{} means that the core method development in this research does not involve LLMs as any important, original, or non-standard components.
        \item Please refer to our LLM policy in the NeurIPS handbook for what should or should not be described.
    \end{itemize}

\end{enumerate}

\end{document}